\documentclass{article}

\usepackage[preprint]{neurips_2026}

\usepackage[utf8]{inputenc} 
\usepackage[T1]{fontenc}    

\usepackage{pifont}

\usepackage{subcaption}

\usepackage{tikz}        
\usepackage{xcolor}     

\newcommand{\circled}[2]{%
  \tikz[baseline=(char.base)]{
    \node[shape=circle,draw=#1,fill=#1!20,inner sep=1.5pt,minimum size=1.2em] (char) {\textbf{\small #2}};
  }%
}
\newcommand{\cOne}{\circled{blue}{1}}     
\newcommand{\cTwo}{\circled{orange}{2}}   
\newcommand{\cI}{\circled{teal}{i}}       
\newcommand{\cII}{\circled{purple}{ii}}   
\newcommand{\cIII}{\circled{red}{iii}}    
\newcommand{\cIV}{\circled{olive}{iv}}

\newcommand{\abI}{\textcolor{red!75!black}{\ding{192}}}   
\newcommand{\abII}{\textcolor{orange!85!black}{\ding{193}}}
\newcommand{\abIII}{\textcolor{olive!75!black}{\ding{194}}}

\newcommand{\uotI}{\circled{cyan}{1}}
\newcommand{\uotII}{\circled{magenta}{2}}
\newcommand{\uotIII}{\circled{brown}{3}}

\newcommand{\cycI}{\circled{olive}{1}}
\newcommand{\cycII}{\circled{violet}{2}}
\newcommand{\cycIII}{\circled{pink!70!black}{3}}

\newcommand{\tauI}{\circled{cyan}{1}}
\newcommand{\tauII}{\circled{magenta}{2}}
\newcommand{\tauIII}{\circled{magenta!50!black}{3}}

\usepackage{hyperref}

\usepackage{booktabs,multirow}

\usepackage{wrapfig}
\usepackage{url}            
\usepackage{booktabs}       
\usepackage{amsfonts}       
\usepackage{nicefrac}       
\usepackage{microtype}      
\usepackage{xcolor}         

\usepackage{microtype}
\usepackage{graphicx}
\usepackage{subcaption}
\usepackage{booktabs}
\usepackage{hyperref}
\usepackage{booktabs,multirow}
\usepackage{wrapfig}

\usepackage{natbib}
\usepackage{amsmath}
\usepackage{amssymb}
\usepackage{mathtools}
\usepackage{amsthm}

\usepackage{algorithm}
\usepackage{algpseudocode}
\algrenewcommand\algorithmicrequire{\textbf{Input:}}
\algrenewcommand\algorithmicensure{\textbf{Output:}}
\algrenewcommand\algorithmiccomment[1]{\hfill$\triangleright$~#1}
\algnewcommand{\LineComment}[1]{\State \algorithmiccomment{#1}}

\usepackage{thm-restate}
\usepackage{xcolor,colortbl}
\usepackage[capitalize,noabbrev]{cleveref}
\usepackage[textsize=tiny]{todonotes}

\usepackage{xspace}

\newcommand{\best}[1]{\textcolor{red}{#1}}
\newcommand{\second}[1]{\textcolor{blue}{#1}}

\theoremstyle{plain}

\theoremstyle{definition}

\theoremstyle{remark}

\definecolor{scotpurple}{RGB}{89, 67, 156}

\newcommand{\scot}{\textbf{\textcolor{scotpurple}{SCOT}}\xspace}

\title{SCOT: Multi-Source Cross-City Transfer with Optimal-Transport Soft-Correspondence Objectives}

\author{%
  Yuyao Wang$^{1}$ \quad
  Min Yang$^{2}$ \quad
  Meng Chen$^{2}$ \quad
  Weiming Huang$^{3}$ \quad
  Yilong Yin$^{2}$ \quad
  Yongshun Gong\thanks{Corresponding author.}\, $^{2}$ \\[0.4em]
  $^{1}$Department of Mathematics and Statistics, Boston University, Boston, MA, USA \\
  $^{2}$School of Software, Shandong University, Jinan, China \\
  $^{3}$School of Geography, University of Leeds, Leeds, UK \\
}

\begin{document}

\maketitle

\begin{abstract}
Cross-city transfer leverages labeled data from well-instrumented 
cities to improve prediction in label-scarce ones, but remains 
challenging when cities adopt incompatible partitions with no 
ground-truth region correspondences. Even with expressive GNN 
encoders, transfer quality varies dramatically across methods 
sharing nearly identical backbones---indicating that alignment 
design, not encoder capacity, is the binding constraint. Existing paradigms exhibit complementary failure modes: heuristic anchor matching collapses to hubness under unequal partitions, while distribution-level matching over-mixes embeddings under heterogeneity. Both stem from a single missing primitive---explicit, mass-controlled soft correspondence between unequal region sets. The challenge intensifies in multi-source transfer, where independent source-to-target alignments yield conflicting gradients and source domination. We propose \textbf{\textcolor{scotpurple}{SCOT}} (\textbf{\textcolor{scotpurple}{S}}emantic \textbf{\textcolor{scotpurple}{C}}orrespondence via 
\textbf{\textcolor{scotpurple}{O}}ptimal 
\textbf{\textcolor{scotpurple}{T}}ransport), which adapts entropic OT 
to this regime through three application-specific designs: an 
OT-weighted contrastive objective that resolves the geometric--semantic tension, a one-sided cycle regularizer respecting the rectangular $n_s\!\neq\!n_t$ geometry, and---as our central contribution---a shared prototype hub coordinated through balanced entropic OT under a target-induced prior, bypassing the 
source-selection problem in label-scarce regimes. Across real-world cities and tasks, \scot consistently improves transfer accuracy, achieving 5--50\% relative MAE/MAPE reductions over the strongest baseline, with learned couplings and hub assignments quantitatively confirming that the diagnosed failure modes are resolved.
\end{abstract}

\section{Introduction}

Many urban computing tasks build city-scale predictors from heterogeneous 
data (human mobility, POIs, remote sensing) and rely on high-quality 
region representations for downstream outcomes such as regional GDP, 
population, and carbon estimation 
\citep{an2025spatio,yang2025can,li2024dual}. Since reliable labels exist 
only for a few well-instrumented cities, cross-city transfer aims to 
learn region embeddings that generalize from labeled source cities to a 
label-scarce target 
\citep{li2025adaptive,jin2022selective,yang2023carpg,lu2022stgfsl,
wang2018regiontranas,jin2023transGTR,fang2022stan,yao2019metast,
li2022localegn}. This is harder than standard domain adaptation: city 
pairs rarely share a natural region correspondence and often have 
unequal region counts (Fig.~\ref{fig:motivation}a), regions are graph 
nodes rather than i.i.d.\ samples, and only part of the semantics 
transfers (commuting corridors generalize while tourist districts are 
often city-specific), so alignment must be local and selective 
\citep{saito2018maximum,chen2024profiling,chen2025mgrl4re,
zhang2023spatial,chencross}.

\textbf{Where does the difficulty actually live?} Modern GNN encoders 
already produce expressive region embeddings, and within-city predictive 
quality is largely saturated by standard GAT-family architectures. What 
varies dramatically across methods is target-city performance under 
transfer---an asymmetry that points to the cross-city correspondence 
mechanism, not the encoder, as the locus of difficulty. We empirically 
confirm this in Section~\ref{sec:robustness}: backbone and readout 
substitutions shift performance only marginally, while different 
alignment designs cause far larger variation. These results suggest that, in this regime, methodological gains can be more efficiently obtained by investing effort in the alignment mechanism than in the encoder.

\textbf{Positioning relative to existing alignment paradigms.} Prior work typically aligns cities along one of two axes \citep{wei2021areatransfer,liu2024frequency,yang2025cross,zhang2025drawing,yang2025stda,yuan2025federated,zhang2025transfer}, each with distinct trade-offs. \emph{Distribution-matching} methods (e.g., MMD~\citep{gretton2012kernel}, adversarial alignment~\citep{ganin2016domain}) match embedding distributions without establishing explicit region-level correspondences, and even when augmented with region-level refinements, the distribution-level signal can encourage embeddings of heterogeneous cities to mix more than the data supports. \emph{Anchor and nearest-neighbor matching} sits at the opposite end, constructing correspondences directly via hard one-to-one assignments, which can be sensitive to unequal region counts and to hubness~\citep{lei2022dtignn,tang2022dastnet,zhao2023dual,bao2022storm,wang2021spatio,chencross}. Recent correspondence-learning approaches narrow this gap, though aspects of both behaviors can persist in cross-city transfer without targeted structural adaptation (Fig.~\ref{fig:tsne_xa2bj_scot_vs_core} illustrates this for CoRE~\citep{chencross}; quantitative diagnostics in Section~\ref{sec:diagnostics}). These two paradigms share a common underexplored ingredient: \emph{explicit, mass-controlled soft correspondences between unequal region sets}---a primitive well-studied in optimal transport but not yet carefully adapted to the structural demands of cross-city transfer. What is needed is an alignment mechanism that (i) learns region-level correspondences without ground-truth matching, and (ii) scales to multiple sources without source domination or gradient conflict.

\begin{figure}[t]
    \centering
    \begin{minipage}[t]{0.48\columnwidth}
        \centering
        \includegraphics[width=\linewidth]{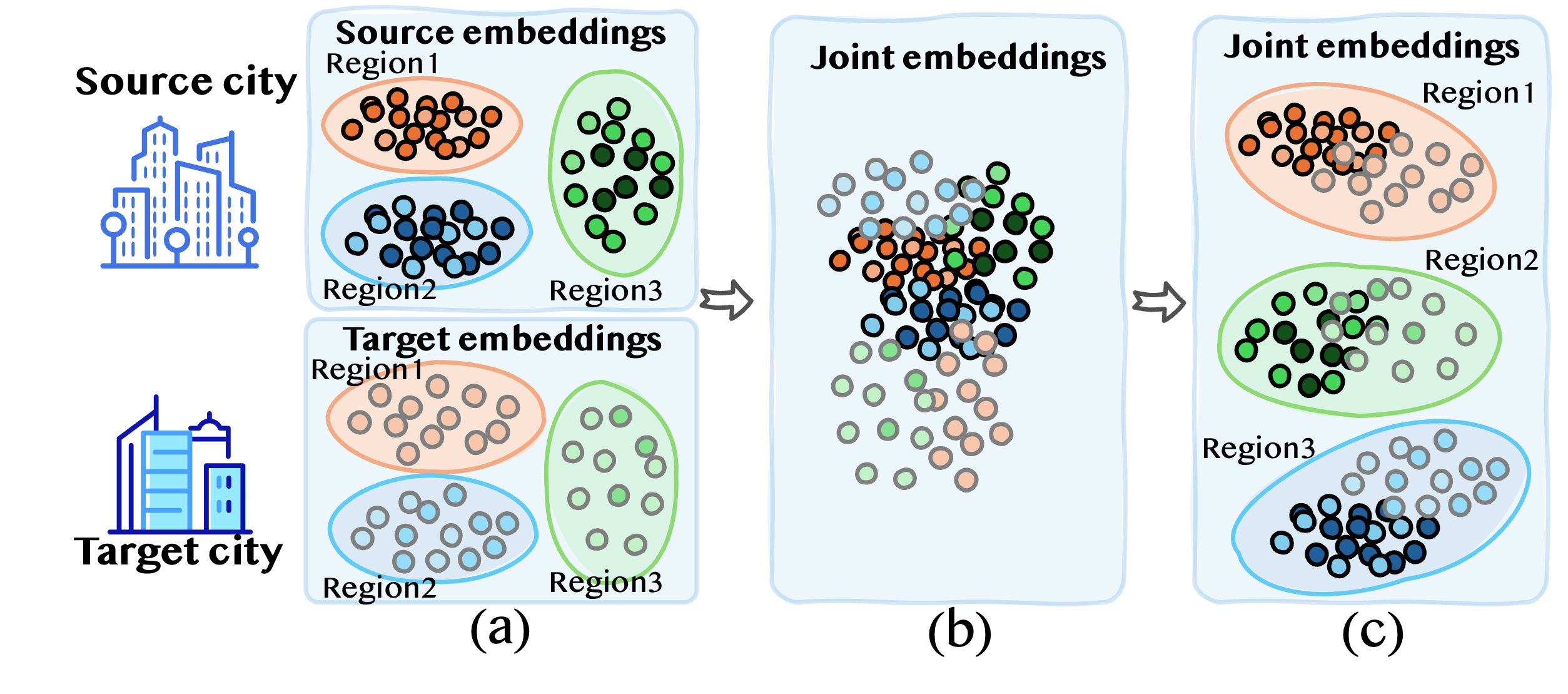}
        \caption{\textbf{Illustration of motivation.}}
        \label{fig:motivation}
    \end{minipage}
    \hfill
    \begin{minipage}[t]{0.48\columnwidth}
        \centering
        \includegraphics[width=\linewidth]{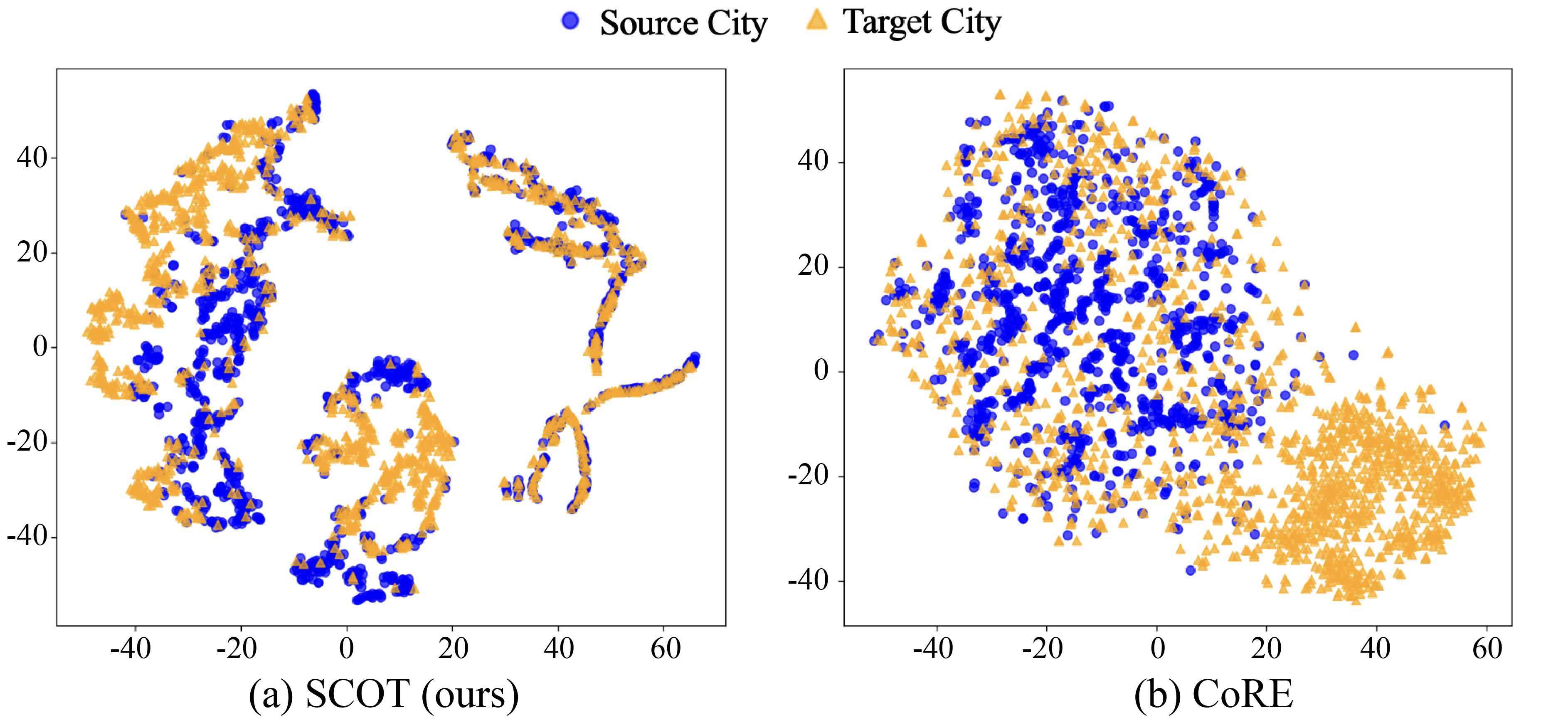}
        \caption{\textbf{t-SNE visualization for XA$\to$BJ.}}
        \label{fig:tsne_xa2bj_scot_vs_core}
    \end{minipage}
    \vspace{-1em}
\end{figure}

To address this unmet requirement, we propose \textbf{SCOT} 
(\textbf{S}emantic \textbf{C}orrespondence via \textbf{O}ptimal 
\textbf{T}ransport), which learns explicit soft correspondences for 
cross-city alignment (Fig.~\ref{fig:main}). \scot rests on two design 
choices, each a principled response to a specific failure mode 
diagnosed above.

\cOne~\textbf{Optimal transport for capacity-controlled soft 
matching.} OT compares distributions via a minimum-cost coupling between point sets~\citep{villani2021topics,villani2008optimal}, and its entropic form~\citep{cuturi2013sinkhorn} imposes marginal capacity constraints that address both failure modes at once: bounding per-region mass prevents the many-to-one collapse of anchor matching, while the resulting many-to-many soft coupling avoids the indiscriminate mixing of distribution-level objectives. Sinkhorn iterations~\citep{peyre2019computational} scale this to urban settings, leveraging OT's established role in domain adaptation and representation learning~\citep{courty2016optimal,chen2020graph,li2024optimal}.

\cTwo~\textbf{OT-guided contrastive sharpening for semantic 
discriminability.} Geometric closeness alone does not yield 
semantically discriminative embeddings. We therefore couple OT with 
a contrastive objective whose soft positives are defined by the 
transport coupling itself: target candidates are weighted by 
transported mass, focusing similarity on transport-supported pairs 
while inheriting OT's capacity 
control~\citep{genevay2018learning}. The result is a locally aligned 
yet non-collapsed embedding geometry that transfers more reliably 
(Fig.~\ref{fig:tsne_xa2bj_scot_vs_core}, left). A one-sided 
cycle reconstruction further stabilizes training under strong 
heterogeneity by enforcing transport-conditioned identity.
\paragraph{Contributions.}
\begin{itemize}

\item \cI~\textbf{Single-source alignment via OT soft correspondence.} 
We propose the \emph{first} cross-city alignment framework that jointly 
enforces capacity-controlled correspondence, semantic discriminability, 
and rectangular-geometry stability, instantiated via Sinkhorn-based 
entropic OT, an OT-weighted contrastive objective, and a one-sided 
cycle regularizer.

\item \cII~\textbf{Multi-source extension through a shared prototype 
hub.} We extend \scot to multi-source 
transfer via a shared prototype hub guided by a target-induced prior 
and balanced OT, coordinating heterogeneous sources without explicit source-selection heuristics and reducing the impact of conflicting per-source signals through hub-mediated aggregation.

\item \cIII~\textbf{Empirical validation.} On GDP, population, and CO$_2$ 
prediction across cities and directions, \scot outperforms strong 
baselines in both single- and multi-source settings, with multi-source 
\scot consistently surpassing the strongest single-source transfer. 
Controlled substitution of encoders and regressors further confirms 
that gains stem from alignment design rather than encoder capacity 
or readout flexibility.
\end{itemize}


\section{Problem Setup}
\label{sec:problem}

We study cross-city transfer between a labeled source $\mathcal C_s$ and 
a label-scarce target $\mathcal C_t$, with region sets 
$V_s=\{1,\ldots,n_s\}$ and $V_t=\{1,\ldots,n_t\}$. Each city $c$ is 
equipped with an undirected spatial adjacency $\mathbf A_c$ and a 
row-stochastic mobility matrix $\mathbf M_c$ derived from OD trips:
\begin{equation}
M_{ij}=\frac{\mathrm{count}(i\!\to\! j)}{\sum_{k}\mathrm{count}(i\!\to\! k)},
\end{equation}
so $M_{i\cdot}$ encodes the destination distribution from region $i$. 
Our goal is to learn embeddings 
$\mathbf z_s\in\mathbb R^{n_s\times d}$, 
$\mathbf z_t\in\mathbb R^{n_t\times d}$ that preserve intra-city 
mobility and remain comparable across cities despite $n_s\neq n_t$, 
without any node correspondence (Section~\ref{sec:method}).

\section{Method}
\label{sec:method}

\begin{figure*}[t] 
    \centering 
    \includegraphics[width=1.0\textwidth]{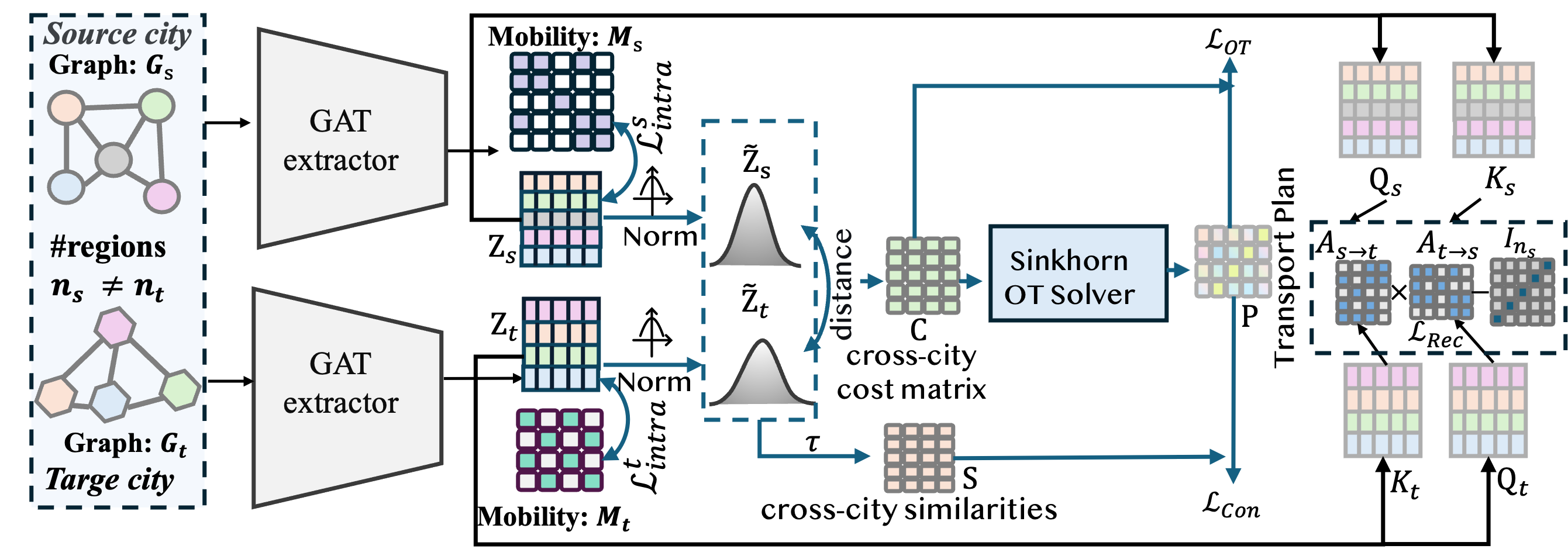} \caption{The pipeline of SCOT.} 
    \label{fig:main} 
    \vspace{-1.6em} 
\end{figure*}

We propose \scot (Alg.~\ref{alg:scot_single_sinkhorn}) that jointly learns mobility-preserving embeddings and 
cross-city semantic alignment between unequal region sets, without 
requiring node correspondence.

\paragraph{Backbone and intra-city objective.}
Since \emph{alignment design} accounts for substantially larger performance variation than encoder capacity in this regime (Section~\ref{sec:robustness}), we 
treat the encoder as an \textbf{interchangeable component} and adopt a 
standard GAT-based design: any GAT-family backbone yields comparable 
results, leaving the alignment module as the sole locus of methodological 
design. For each $c\in\{s,t\}$, we initialize learnable embeddings 
$\mathbf H_c^{(0)}\in\mathbb R^{n_c\times d}$ and apply $L$ Graph Attention 
Network (GAT)~\citep{velivckovic2017graph} layers over the spatial 
adjacency graph $\mathbf{A}_c$ to obtain $\mathbf z_c=\mathbf H_c^{(L)}$. 
The destination distribution from origin $i$ is modeled by a softmax over 
inner products:
\begin{equation}
\hat{P}^{(c)}_{ij}=
\frac{\exp(\mathbf{z}_{c,i}^{\top}\mathbf{z}_{c,j})}
{\sum_{k=1}^{n_c}\exp(\mathbf{z}_{c,i}^{\top}\mathbf{z}_{c,k})},
\qquad c\in\{s,t\},
\end{equation}
and trained by minimizing the mobility-weighted negative log-likelihood 
against the empirical transition matrix $\mathbf{M}_c$:
\begin{equation}
\mathcal{L}_{\mathrm{intra}}
=
-\sum_{c\in\{s,t\}}\sum_{i,j}
(\mathbf{M}_c)_{ij}\log \hat{P}^{(c)}_{ij}.
\label{eq:lintra}
\end{equation}

\subsection{Alignment via Soft Transport-Guided Matching}
\label{sec:alignment}

Cross-city alignment has a distinctive structure: \emph{no ground-truth 
correspondence}, \emph{unequal region counts}, and \emph{only partially 
transferable semantics}. This rules out one-to-one anchor matching 
(fragile under unequal partitions) and distribution-level alignment 
(homogenizes embeddings that should remain distinct), and points to 
\textbf{optimal transport (OT)} as the appropriate primitive: its 
marginal constraints simultaneously prevent the many-to-one concentration 
of anchor methods and the indiscriminate mixing of MMD or adversarial 
objectives. We model the correspondence as a nonnegative coupling 
$\mathbf{P}\in\mathbb{R}_+^{n_s\times n_t}$, where $P_{ij}$ is the 
association between source region $i$ and target region $j$, obtained 
via Sinkhorn-based entropic OT on a cost matrix $\mathbf{C}$.

In prior cross-domain work, OT typically appears as a distribution 
discrepancy~\citep{courty2016optimal}, a post-hoc matching step, or an 
auxiliary regularizer~\citep{damodaran2018deepjdot,genevay2018learning}; 
in all three roles the coupling remains \emph{peripheral} to 
representation learning. Our adaptation differs in two ways. 
\emph{First}, $\mathbf{P}$ is treated as a first-class object that directly shapes the contrastive signal driving representation learning, rather than as an auxiliary discrepancy measure.
\emph{Second}, because a cost-minimizing coupling is geometrically 
sensible but not necessarily semantically correct, we use $\mathbf{P}$ 
to weight an OT-guided contrastive loss 
(Section~\ref{sec:contrastive}), adapting OT-contrastive coupling ideas to the rectangular, partition-mismatched setting of cross-city transfer.

\subsubsection{Sinkhorn-Based Soft Correspondence}

We first $\ell_2$-normalize region embeddings, 
$\tilde{\mathbf{z}}^{s}_{i} = \mathbf{z}^{s}_{i}/\lVert \mathbf{z}^{s}_{i}\rVert_2$ 
and $\tilde{\mathbf{z}}^{t}_{j} = \mathbf{z}^{t}_{j}/\lVert \mathbf{z}^{t}_{j}\rVert_2$, 
and form a cross-city cost matrix using Euclidean distance on the unit sphere:
\begin{equation}
C_{ij} = \big\lVert \tilde{\mathbf{z}}^{s}_{i} - \tilde{\mathbf{z}}^{t}_{j} \big\rVert_2,
\qquad
\mathbf{C}\in\mathbb{R}^{n_s\times n_t}.
\end{equation}
Normalization prevents the cost from being dominated by cross-city magnitude 
differences (e.g., POI density, graph scale) rather than functional 
dissimilarity, so that OT measures directional structural similarity instead 
of scale proximity. We obtain a differentiable soft correspondence by applying $T$ steps of 
Sinkhorn--Knopp scaling~\citep{cuturi2013sinkhorn} to the Gibbs kernel 
$\mathbf{K}=\exp(-\mathbf{C}/\varepsilon)$. Starting from 
$\mathbf{u}^{(0)}=\mathbf{1}$, $\mathbf{v}^{(0)}=\mathbf{1}$, we iterate
\begin{equation}
\label{eq:sinkhorn_iter}
\mathbf{u}^{(k+1)}
=
\mathbf{a}\oslash\big(\mathbf{K}\mathbf{v}^{(k)}\big),
\qquad
\mathbf{v}^{(k+1)}
=
\mathbf{b}\oslash\big(\mathbf{K}^{\top}\mathbf{u}^{(k+1)}\big),
\end{equation}
where \(\oslash\) denotes elementwise division, 
\(\mathbf{a}\) and 
\(\mathbf{b}\) are the source and target marginals. Obtain the soft matching 
matrix
\begin{equation}
\label{eq:sinkhorn_P}
\mathbf{P}
= \mathrm{diag}\big(\mathbf{u}^{(T)}\big)\,\mathbf{K}\,
\mathrm{diag}\big(\mathbf{v}^{(T)}\big)
\in\mathbb{R}^{n_s\times n_t}_+.
\end{equation}
The alternating normalizations yield a well-spread, fully differentiable 
coupling. The entropic temperature $\varepsilon$ controls matching sharpness, 
with smaller values producing peaked but less stable couplings; since costs 
are computed on $\ell_2$-normalized embeddings, we use $\varepsilon=0.15$ 
throughout. The OT alignment loss is the soft expected transport cost:
\begin{equation}
\label{eq:lot}
\mathcal{L}_{\mathrm{OT}}
= \frac{1}{\min(n_s,n_t)}
\sum_{i=1}^{n_s}\sum_{j=1}^{n_t} P_{ij} C_{ij}.
\end{equation}

\subsubsection{Sinkhorn-guided Contrastive Semantic Alignment}
\label{sec:contrastive}
Minimizing $\mathcal{L}_{\mathrm{OT}}$ enforces geometric closeness but does not guarantee semantically discriminative embeddings. We therefore couple the Sinkhorn correspondence $\mathbf{P}$ with a contrastive objective, using $P_{ij}$ as a soft positive weight between source region $i$ and target region $j$. Cross-city similarities are computed with temperature $\tau$:
\begin{equation}
S_{ij}=\frac{\tilde{\mathbf{z}}^{s\top}_{i}\tilde{\mathbf{z}}^{t}_{j}}{\tau}.
\end{equation}
For each source region $i$, we treat the Sinkhorn weights $\{P_{ij}\}_{j=1}^{n_t}$ as a soft positive distribution over target regions, and define the Sinkhorn-weighted contrastive loss
\begin{equation}
\label{eq:lcon}
\mathcal{L}_{\mathrm{Con}}
=
-\frac{1}{n_s}\sum_{i=1}^{n_s}
\log
\frac{\sum_{j=1}^{n_t} P_{ij}\exp(S_{ij})}
{\sum_{j=1}^{n_t}\exp(S_{ij})}.
\end{equation}
This pulls each source region toward its highly-weighted target matches 
under $\mathbf{P}$ while pushing away unmatched targets. To formalize 
why this objective is a structurally appropriate alignment loss, we 
establish a correspondence-based transfer bound that relates 
$\mathcal{L}_{\mathrm{Con}}$ to a target-side risk gap. Empirical 
verification of this relation against actual target error is reported 
in Appendix~\ref{app:theory_empirical_check}.

\paragraph{Why $\mathcal{L}_{\mathrm{Con}}$ is a structurally 
appropriate alignment loss.}
A standard correspondence-based transfer bound provides intuition 
for why $\mathcal{L}_{\mathrm{Con}}$ is a sensible target.

\begin{restatable}[Contrastive alignment as a transfer surrogate]{proposition}{contrastivemaeprop}
\label{prop:contrastive_mae}
Let $\{u_i\}_{i=1}^{n_s}, \{v_j\}_{j=1}^{n_t} \subset \mathbb{S}^{d-1}$ 
with marginals $a \in \Delta^{n_s}, b \in \Delta^{n_t}$ and coupling 
$P \in \mathbb{R}_+^{n_s \times n_t}$ satisfying 
$P\mathbf{1} = a, P^\top\mathbf{1} = b$. Let 
$g, h: \mathbb{S}^{d-1} \to \mathbb{R}$ be $L_g$-, $L_h$-Lipschitz, 
and define $\mathcal{R}_s^a(h) := \sum_i a_i |h(u_i) - g(u_i)|$, 
$\mathcal{R}_t^b(h) := \sum_j b_j |h(v_j) - g(v_j)|$. Then
\[
\mathcal{R}_t^b(h)
\;\le\;
\underbrace{\mathcal{R}_s^a(h)}_{\text{source risk}}
\;+\;
\underbrace{(L_h+L_g)\sqrt{\,2 - 2\,\underline{m}\,}}_{\text{transfer gap}},
\]
where $\underline{m} := \max\{-1,\, \tau\log n_t + \tau H(a) 
- \tau\mathcal{L}_{\mathrm{Con}}(P) - 1 - \tfrac{1}{2\tau}\}$ 
and $H(a) := -\sum_i a_i \log a_i$.
\end{restatable}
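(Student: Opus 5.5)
The plan is to reduce the target--source risk gap to a transport cost under $P$, then to a mean cosine similarity, and finally to lower-bound that similarity through $\mathcal{L}_{\mathrm{Con}}(P)$.

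\emph{Step 1 (coupling the risks).} Set $f := h-g$, which is $(L_h+L_g)$-Lipschitz on $\mathbb{S}^{d-1}$. Because $P^\top\mathbf 1=b$ and $P\mathbf 1=a$, we can write
\[
\mathcal{R}_t^b(h)=\sum_{i,j}P_{ij}|f(v_j)| \quad\text{and}\quad \mathcal{R}_s^a(h)=\sum_{i,j}P_{ij}|f(u_i)|.
\]
The reverse triangle inequality and the Lipschitz property then give
\[
\mathcal{R}_t^b(h)-\mathcal{R}_s^a(h)\le (L_h+L_g)\sum_{i,j}P_{ij}\lVert u_i-v_j\rVert_2 .
\]
Since $\sum_{ij}P_{ij}=1$, $P$ is a probability measure, and Jensen's inequality (concavity of $\sqrt{\cdot}$) bounds the sum by $\sqrt{\sum_{ij}P_{ij}\lVert u_i-v_j\rVert^2}$. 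On the sphere, $\lVert u_i-v_j\rVert^2=2-2u_i^\top v_j$. The gap is therefore at most $(L_h+L_g)\sqrt{2-2\bar m}$, where $\bar m:=\sum_{ij}P_{ij}u_i^\top v_j$. Because $\sqrt{2-2x}$ is decreasing in $x$, it suffices to show $\bar m\ge \underline m$. The branch $\bar m\ge -1$ is trivial, so the real work is the second branch of the $\max$.

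\emph{Step 2 (bounding $\mathcal{L}_{\mathrm{Con}}$ in terms of $\bar m$).} Write $S_{ij}=u_i^\top v_j/\tau\in[-1/\tau,1/\tau]$. I read $\mathcal{L}_{\mathrm{Con}}(P)$ as averaging rows with weights $a_i$; this coincides with the $1/n_s$ definition in \eqref{eq:lcon} when $a$ is uniform, which is the case used in the method. I would state this reading explicitly, since it is what produces the $H(a)$ term.

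\emph{Numerator.} Factor $\sum_j P_{ij}e^{S_{ij}}=a_i\,\mathbb E_{j\sim P_{i\cdot}/a_i}e^{S_{ij}}$. Since $S_{ij}$ lies in an interval of length $2/\tau$, Hoeffding's lemma gives
\[
\log\mathbb E\, e^{S}\le \mathbb E S+\tfrac{(2/\tau)^2}{8}=\mathbb E S+\tfrac{1}{2\tau^2}.
\]

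\emph{Denominator.} Jensen's inequality with respect to the uniform distribution on $[n_t]$ gives
\[
\log\sum_j e^{S_{ij}}\ge \log n_t+\tfrac1{n_t}\sum_j S_{ij}\ge \log n_t-\tfrac1\tau .
\]

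\emph{Combining.} Averaging the two bounds with weights $a_i$ yields
\[
-\mathcal{L}_{\mathrm{Con}}\le -H(a)+\bar m/\tau+\tfrac{1}{2\tau^2}-\log n_t+\tfrac1\tau .
\]

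\emph{Step 3 (rearranging).} Multiplying the last inequality by $\tau$ and rearranging gives
\[
\bar m\ge \tau\log n_t+\tau H(a)-\tau\mathcal{L}_{\mathrm{Con}}(P)-1-\tfrac1{2\tau}.
\]
Together with $\bar m\ge-1$, this gives $\bar m\ge\underline m$, and substituting into Step 1 finishes the proof.

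\emph{Main obstacle.} The delicate point is Step 2. The numerator needs an \emph{upper} bound, since we want small $\mathcal{L}_{\mathrm{Con}}$ to force large $\bar m$, and Jensen's inequality runs the wrong way there. Hoeffding's lemma supplies exactly the $\tfrac1{2\tau}$ correction that appears in the statement. The other subtlety is the per-row normalization: the row weights must be $a_i$ rather than $1/n_s$ for the $\log a_i$ terms to assemble into $H(a)$.
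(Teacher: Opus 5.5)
Your proposal is correct and follows essentially the paper's own argument. It uses the same coupling of the two risks through $P$ (reverse triangle inequality, then Jensen to reach $\sqrt{2-2\bar m}$), then Hoeffding's lemma on the row-conditional moment generating function (your $\lambda=1$ on $S_{ij}\in[-1/\tau,1/\tau]$ is the paper's $\lambda=1/\tau$ on the cosine) together with $\sum_j e^{S_{ij}}\ge n_t e^{-1/\tau}$, averaged with weights $a_i$. Your one-pass accounting $-\mathcal{L}_{\mathrm{Con}}=-H(a)+\sum_i a_i\log\bigl(\sum_j (P_{ij}/a_i)e^{S_{ij}}/\sum_j e^{S_{ij}}\bigr)$ is the correct way to obtain the $+\tau H(a)$ term, and it is cleaner than the paper's write-up, which first asserts $\mathcal{L}_{\mathrm{Con}}(P)=\sum_i a_i\ell_i$ and then rewrites it as $\sum_i a_i\tilde\ell_i-H(a)$ with $\tilde\ell_i=\ell_i$, whereas the correct identity is $\mathcal{L}_{\mathrm{Con}}(P)=\sum_i a_i\ell_i+H(a)$.
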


The transfer gap depends on $\mathcal{L}_{\mathrm{Con}}$ through 
$\underline{m}$, suggesting that a contrastive objective with this 
specific form is structurally aligned with controlling transfer 
error. A proof and detailed derivation appear in 
Appendix~\ref{app:contrastive_mae_proof}; empirical correlation 
between $\mathcal{L}_{\mathrm{Con}}$ and target MAE is reported in 
Appendix~\ref{app:theory_empirical_check}.

\subsubsection{Alignment Loss}
We combine the two terms into a single alignment objective:
\begin{equation}
\label{eq:align}
\mathcal{L}_{\mathrm{Align}}
\;=\;
\mathcal{L}_{\mathrm{OT}}
+ \eta\,\mathcal{L}_{\mathrm{Con}},
\end{equation}
where $\eta$ balances \emph{geometric coupling} ($\mathcal{L}_{\mathrm{OT}}$) 
against \emph{semantic discriminability} ($\mathcal{L}_{\mathrm{Con}}$). 
The two terms are complementary: OT enforces mass-controlled correspondence, 
contrastive sharpening turns it into discriminative semantic structure.



\subsection{Cycle Reconstruction Regularization}
A practical issue with entropic OT under strong cross-city heterogeneity is 
that the early-training coupling tends to be diffuse, producing noisy 
gradients that destabilize representation learning. We address this with a 
\textbf{one-sided cross-attention cycle} that stabilizes source$\to$target 
transfer at the correspondence level, rather than reconstructing features 
or intra-city relational structure as in prior cross-city designs.

Concretely, we reconstruct \emph{transport-conditioned identity}: under 
the learned correspondence, source semantics should remain recoverable 
after passing through and returning from the target. Because cross-city 
matching is rectangular ($n_s \neq n_t$), enforcing this in both directions 
implicitly assumes a symmetry that does not hold; we therefore constrain 
\emph{only} the more reliable source$\to$target$\to$source direction. This 
aligns the regularizer with what cross-city transfer actually requires: 
the recoverability of explicit correspondences, not the symmetric 
reconstruction of city-internal relations.

Given $\mathbf{Z}_c\in\mathbb{R}^{n_c\times d}$ and shared 
$\mathbf{W}_q,\mathbf{W}_k\in\mathbb{R}^{d\times d}$, we form 
$\mathbf{Q}_c=\mathbf{Z}_c\mathbf{W}_q^\top$, 
$\mathbf{K}_c=\mathbf{Z}_c\mathbf{W}_k^\top$, and define cross-attention 
maps
\begin{equation}
\mathbf{A}_{s\to t} = \mathrm{softmax}\!\left(\tfrac{\mathbf{Q}_s\mathbf{K}_t^\top}{\sqrt d}\right),
\quad
\mathbf{A}_{t\to s} = \mathrm{softmax}\!\left(\tfrac{\mathbf{Q}_t\mathbf{K}_s^\top}{\sqrt d}\right).
\end{equation}
The one-sided cycle then enforces approximate recovery of source 
identities, and an entropy penalty (with floor $\delta=10^{-8}$) prevents 
overly diffuse attention:
\begin{equation}
\mathcal{L}_{\mathrm{cyc}} = \big\|\mathbf{A}_{s\to t}\mathbf{A}_{t\to s}-\mathbf{I}_{n_s}\big\|_F^2,
\quad
\mathcal{R}_{\mathrm{ent}} = -\tfrac{1}{n_s}\sum_{i,j} A_{s\to t}(i,j)\log\!\big(A_{s\to t}(i,j)+\delta\big),
\end{equation}
yielding 
$\mathcal{L}_{\mathrm{Rec}} = \mathcal{L}_{\mathrm{cyc}} + \beta\,\mathcal{R}_{\mathrm{ent}}$.

\subsection{Model Training}
\scot is trained end-to-end with a single objective combining intra-city 
mobility consistency, cross-city alignment, and cycle stabilization:
\begin{equation}
\label{eq:total}
\mathcal{L}_{\mathrm{Total}}
=
\underbrace{\mathcal{L}^{(s)}_{\mathrm{intra}}+\mathcal{L}^{(t)}_{\mathrm{intra}}}_{\text{intra-city}}
+ \lambda_{\mathrm{align}}\,
\underbrace{\mathcal{L}_{\mathrm{Align}}}_{\text{cross-city}}
+ \lambda_{\mathrm{rec}}\,
\underbrace{\mathcal{L}_{\mathrm{Rec}}}_{\text{stabilization}},
\end{equation}
where $\lambda_{\mathrm{align}}, \lambda_{\mathrm{rec}}$ are tuned on 
validation data.

\section{Multi-Source Hub Alignment}
\label{sec:ms-hub}

Multi-source transfer is fundamentally harder than the single-source case: 
different sources induce \emph{conflicting correspondences} to the same target, 
and independent pairwise alignments are easily destabilized or dominated by 
one source. We resolve both pathologies through a \textbf{shared semantic hub}---
a set of $K$ learnable prototypes that serve as intermediate anchors in a 
common alignment space (Fig.~\ref{fig:multi}, Alg.~\ref{alg:scot_multi_hub_ot_single_style}).

Instead of aligning each source to the target separately, all cities (sources 
and target) are aligned to the hub via balanced entropic OT, yielding a 
\emph{coordinated many-to-hub matching}. A target-induced prototype marginal 
controls hub capacity and steers prototypes toward target-relevant semantics, 
simultaneously stabilizing optimization and preventing source domination. 
The resulting transport plans $\{\Pi^{(m)}\}$ jointly supervise 
$\mathcal{L}^{m}_{\mathrm{OT}}$ and $\mathcal{L}^{m}_{\mathrm{Con}}$, enabling scalable alignment that avoids the source-domination and conflict pathologies observed under independent pairwise OT.

\paragraph{Balanced entropic OT to the hub.}
Let $\mathcal{S}=\{s_1,\dots,s_M\}$ denote the source cities and $t$ the 
target. For each city $m\in\mathcal{S}\cup\{t\}$, we $\ell_2$-normalize 
region embeddings and prototypes 
($\tilde{\mathbf{z}}^m_i=\mathbf{z}^m_i/\|\mathbf{z}^m_i\|_2$, 
$\tilde{\mathbf{a}}_k=\mathbf{a}_k/\|\mathbf{a}_k\|_2$) and form the 
hub cost $C^m_{ik}=\|\tilde{\mathbf{z}}^m_i-\tilde{\mathbf{a}}_k\|_2$, 
$\mathbf{C}^m\in\mathbb{R}^{n_m\times K}$.

\paragraph{Target-induced prototype marginal.}
The shared prototype marginal $\mathbf{b}\in\Delta^{K-1}$ is constructed 
from target--prototype affinities:
\begin{equation}
\label{eq:bt_from_target}
\bar{s}_k=\tfrac{1}{n_t}\sum_{j=1}^{n_t}\tilde{\mathbf{z}}^{t\top}_j\tilde{\mathbf{a}}_k,
\qquad
b_k \propto \max\!\big\{\exp(\bar{s}_k/\tau_b),\,\epsilon_b\big\},
\end{equation}
normalized to $\sum_k b_k=1$, where $\tau_b>0$ controls sharpness and 
$\epsilon_b>0$ prevents dead prototypes 
(Appendix~\ref{subsec:ablationA_bt}). City-side marginals are uniform: 
$\mathbf{a}^m=\tfrac{1}{n_m}\mathbf{1}$.

\paragraph{Balanced entropic coupling.}
For each $m\in\mathcal{S}\cup\{t\}$, we solve
\begin{equation}
\label{eq:hub_balanced_ot}
\mathbf{\Pi}^m
\in \arg\min_{\mathbf{P}\ge 0}\;
\langle \mathbf{P},\mathbf{C}^m\rangle
+\varepsilon\textstyle\sum_{i,k} P_{ik}(\log P_{ik}-1)
\quad\text{s.t.}\quad
\mathbf{P}\mathbf{1}=\mathbf{a}^m,\;
\mathbf{P}^\top\mathbf{1}=\mathbf{b},
\end{equation}
via $T$ Sinkhorn iterations, and obtain row-normalized assignments 
$Q^m_{ik}=\Pi^m_{ik}/\sum_{k'}\Pi^m_{ik'}$.

\paragraph{OT-guided contrastive alignment to the hub.}
For each $m\in\mathcal S\cup\{t\}$, we compute region--prototype 
similarities $S^m_{ik}=\tilde{\mathbf{z}}^{m\top}_{i}\tilde{\mathbf{a}}_{k}/\tau$ 
and use the OT-induced assignments $Q^m_{ik}$ as soft positive weights:
\begin{equation}
\label{eq:hub_contrastive}
\mathcal{L}_{\mathrm{Con}}^m
= -\frac{1}{n_m}\sum_{i=1}^{n_m}
\log
\frac{\sum_{k} Q^m_{ik}\exp(S^m_{ik})}
{\sum_{k}\exp(S^m_{ik})}.
\end{equation}
With the transport cost 
$\mathcal{L}_{\mathrm{OT}}^m=\langle \mathbf{\Pi}^m,\mathbf{C}^m\rangle$, 
the per-city alignment loss is 
$\mathcal{L}_{\mathrm{Align}}^m = \mathcal{L}_{\mathrm{OT}}^m + \lambda_c\mathcal{L}_{\mathrm{Con}}^m$.

\paragraph{Hub-cycle stabilization.}
We extend the one-sided cycle of Section~\ref{sec:alignment} to the 
city-to-hub setting. With shared $\mathbf{W}_q,\mathbf{W}_k\in\mathbb R^{d\times d}$, 
we form cross-attention maps between region embeddings 
$\mathbf{Z}^m\in\mathbb{R}^{n_m\times d}$ and prototypes 
$\mathbf{A}\in\mathbb{R}^{K\times d}$:
\begin{equation}
\mathbf{A}_{m\to h}=\mathrm{softmax}\!\left(\tfrac{(\mathbf{Z}^m\mathbf{W}_q)(\mathbf{A}\mathbf{W}_k)^\top}{\sqrt d}\right),
\quad
\mathbf{A}_{h\to m}=\mathrm{softmax}\!\left(\tfrac{(\mathbf{A}\mathbf{W}_q)(\mathbf{Z}^m\mathbf{W}_k)^\top}{\sqrt d}\right).
\end{equation}
The hub-cycle loss and entropy penalty (with floor $\delta=10^{-8}$) are
\begin{equation}
\mathcal{L}_{\mathrm{cyc}}^m
= \tfrac{1}{n_m^2}\big\|\mathbf{A}_{m\to h}\mathbf{A}_{h\to m}-\mathbf{I}_{n_m}\big\|_F^2,
\quad
\mathcal{R}_{\mathrm{ent}}^m
= -\tfrac{1}{n_m}\sum_{i,k}\mathbf{A}_{m\to h}(i,k)\log\!\big(\mathbf{A}_{m\to h}(i,k)+\delta\big),
\end{equation}
giving $\mathcal{L}_{\mathrm{Rec}}^m = \mathcal{L}_{\mathrm{cyc}}^m + \beta\,\mathcal{R}_{\mathrm{ent}}^m$.

\paragraph{Objective.}
Multi-source \scot is trained end-to-end by minimizing
\begin{equation}
\label{eq:hub-obj}
\mathcal{L}
= \sum_{m\in\mathcal{S}\cup\{t\}}\mathcal{L}^{m}_{\mathrm{intra}}
+ \frac{1}{|\mathcal{S}|+1}\sum_{m\in\mathcal{S}\cup\{t\}}
\big(\lambda_{\mathrm{align}}\mathcal{L}_{\mathrm{Align}}^m
+ \lambda_{\mathrm{rec}}\mathcal{L}_{\mathrm{Rec}}^m\big).
\end{equation}

\section{Experiments}
\label{sec:experiments}

\paragraph{Setup, tasks, and metrics.}
We evaluate \scot on mobility data from three Chinese cities 
(Beijing, Xi'an, Chengdu), aggregating anonymized OD trips into 
region-level mobility graphs and evaluating transfer on all ordered 
city pairs. Cross-country generalization with New York City is 
reported in Appendix~\ref{app:nyc_transfer}. For each pair $X\!\to\!Y$, a downstream regressor is fit 
on $(\mathbf{Z}^X, \mathbf{y}^X)$ and applied directly to 
$\mathbf{Z}^Y$ to predict $\mathbf{y}^Y$; we use ridge by default 
and verify in Section~\ref{sec:robustness} that Lasso, SVR, and 
Elastic Net yield comparable results. For two-source experiments, 
each baseline adaptively weights the two transfer directions via 
softmax (with MMD and Adv additionally using a joint-mixture 
variant), and a single predictor is trained on the union of 
labeled source regions. We report MAE and MAPE on GDP, population, 
and CO$_2$ (lower is better).

\paragraph{Implementation.}
A two-layer GAT encoder ($d=128$, $H=8$) with PReLU and a linear 
output layer, trained end-to-end with Adam (lr $=10^{-3}$). 
Hyperparameters are fixed to a single configuration across all 
city pairs and tasks: $\lambda_{\mathrm{align}}=1.0$, 
$\lambda_{\mathrm{rec}}=0.5$, $\eta=0.5$, $\beta=0.05$, $\tau=0.1$, 
$\varepsilon=0.15$. Multi-source additionally uses $K=32$ 
prototypes, $\tau_b=0.5$, and probability floor $10^{-3}$.

\paragraph{Baselines.}
We compare \scot against three paradigms: a \emph{non-alignment} 
baseline (intra-city objectives only); \emph{correspondence-based} 
alignment via surrogate matches (RP, HBP, HSA); and 
\emph{correspondence-free} transfer via distributional or 
relational alignment (MMD~\citep{gretton2012kernel}, 
Adv~\citep{ganin2016domain}, CrossTReS~\citep{jin2022selective}, 
CoRE~\citep{chencross}). Details in Appendix~\ref{app:data}.

\begin{table*}[t]
\centering
\caption{Single-source transfer results on XA$\leftrightarrow$BJ. 
Lower is better. \best{Red}: best; \second{Blue}: runner-up. 
Gain row reports \scot's relative improvement over the strongest baseline.}
\label{tab:xa_bj}
\setlength{\tabcolsep}{5pt}
\renewcommand{\arraystretch}{1.2}
\resizebox{\textwidth}{!}{%
\begin{tabular}{@{}l|cccccc|cccccc@{}}
\toprule
\multirow{3}{*}{\textbf{Method}}
& \multicolumn{6}{c|}{\textbf{XA $\to$ BJ}}
& \multicolumn{6}{c}{\textbf{BJ $\to$ XA}} \\
\cmidrule(lr){2-7} \cmidrule(lr){8-13}
& \multicolumn{2}{c}{GDP} & \multicolumn{2}{c}{Population} & \multicolumn{2}{c|}{CO$_2$}
& \multicolumn{2}{c}{GDP} & \multicolumn{2}{c}{Population} & \multicolumn{2}{c}{CO$_2$} \\
\cmidrule(lr){2-3} \cmidrule(lr){4-5} \cmidrule(lr){6-7}
\cmidrule(lr){8-9} \cmidrule(lr){10-11} \cmidrule(lr){12-13}
& MAE & MAPE & MAE & MAPE & MAE & MAPE
& MAE & MAPE & MAE & MAPE & MAE & MAPE \\
\midrule
Non-Alignment
& 264.30 & 12.08 & 981.07 & 8.56 & 288.41 & 6.40
& 252.91 & 5.84 & 946.93 & 8.53 & 270.42 & 8.66 \\
\rowcolor{gray!06}
RP
& 189.84 & 9.07 & 684.50 & 6.55 & 196.05 & 4.70
& 181.08 & 4.58 & 670.44 & 6.46 & 191.85 & 6.42 \\
HBP
& 177.69 & 8.40 & 665.46 & 5.95 & 188.72 & 4.18
& 196.20 & 3.10 & 627.46 & 4.37 & 176.35 & 4.25 \\
\rowcolor{gray!06}
HSA
& 201.27 & 6.83 & 619.33 & 4.00 & 176.40 & 3.20
& 188.40 & 2.11 & 636.33 & 5.03 & 182.91 & 5.02 \\
MMD
& 183.32 & 5.71 & \second{588.34} & \second{3.17} & \second{165.70} & \second{2.54}
& 180.73 & 1.99 & \second{499.94} & \second{1.85} & \second{141.57} & \second{1.83} \\
\rowcolor{gray!06}
Adv
& 192.59 & 8.72 & 702.19 & 6.78 & 199.23 & 4.83
& 199.21 & 6.32 & 805.01 & 9.16 & 203.27 & 7.21 \\
CrossTReS
& 207.43 & 7.39 & 633.25 & 4.42 & 179.75 & 3.50
& 170.27 & 4.23 & 639.44 & 5.62 & 182.72 & 5.55 \\
\rowcolor{gray!06}
CoRE
& \second{157.83} & \second{5.46} & 611.18 & 4.05 & 166.28 & 2.95
& \second{162.19} & \second{1.91} & 547.74 & 2.17 & 153.63 & 2.09 \\
\midrule
\rowcolor{red!8}
\textbf{\scot (Ours)}
& \best{115.33} & \best{3.17} & \best{528.50} & \best{2.13} & \best{149.42} & \best{1.79}
& \best{154.92} & \best{1.60} & \best{452.67} & \best{1.58} & \best{128.74} & \best{1.63} \\
\rowcolor{blue!6}
\textit{$\Delta$ vs.\ best baseline}
& \cellcolor{blue!18}\textbf{+26.9\%} & \cellcolor{blue!22}\textbf{+41.9\%}
& \cellcolor{blue!10}+10.2\% & \cellcolor{blue!18}\textbf{+32.8\%}
& \cellcolor{blue!10}+9.8\% & \cellcolor{blue!18}\textbf{+29.5\%}
& \cellcolor{blue!6}+4.5\% & \cellcolor{blue!12}+15.7\%
& \cellcolor{blue!10}+9.5\% & \cellcolor{blue!12}+14.6\%
& \cellcolor{blue!10}+9.1\% & \cellcolor{blue!10}+10.9\% \\
\bottomrule
\vspace{-2em}
\end{tabular}}
\end{table*}


\subsection{Experimental Results}

\cI~\textbf{Single-source transfer.} \scot achieves the best results 
across all target cities and tasks under both MAE and MAPE 
(Table~\ref{tab:xa_bj}, 
Appendix~\ref{app:additional_single_source}). \cII~\textbf{Multi-source transfer.} In the two-source setting 
(Table~\ref{tab:multi_source_main}), \scot again attains the best 
performance, with gains from the shared hub stabilizing multi-source 
aggregation. \cIII~\textbf{Multi-source vs.\ single-source.} Multi-source \scot
consistently outperforms its best single-source counterpart on most 
target--task pairs (Fig.~\ref{fig:scot_single_vs_multi_mae}), reflecting complementary signals aggregated through the hub rather 
than reliance on a single closest source. The marginal Xi'an GDP 
degradation is mechanistically diagnosed via hub statistics in 
Appendix~\ref{app:source_quality}.




\begin{table*}[!t]
\centering
\caption{Multi-source cross-city transfer results (two targets shown; 
the remaining target is reported in Appendix). Lower is better. 
\best{Red}: best; \second{Blue}: runner-up. Gain row reports SCOT's 
relative improvement over the strongest baseline.}
\label{tab:multi_source_main}
\footnotesize
\setlength{\tabcolsep}{3.5pt}
\renewcommand{\arraystretch}{1.15}

\begin{minipage}[t]{0.49\textwidth}
\centering
\textbf{Target: Beijing (BJ)}\\[2pt]
\resizebox{\linewidth}{!}{%
\begin{tabular}{@{}l|cc|cc|cc@{}}
\toprule
\multirow{2}{*}{\textbf{Method}}
& \multicolumn{2}{c|}{\textbf{GDP}}
& \multicolumn{2}{c|}{\textbf{Population}}
& \multicolumn{2}{c}{\textbf{CO$_2$}}\\
\cmidrule(lr){2-3}\cmidrule(lr){4-5}\cmidrule(lr){6-7}
& MAE & MAPE & MAE & MAPE & MAE & MAPE\\
\midrule
RP         & 172.76 & 7.64 & 679.86 & 6.98 & 166.80 & 2.69\\
\rowcolor{gray!06}
HBP        & 164.25 & 7.13 & 662.60 & 6.53 & 165.53 & 2.57\\
HSA        & 156.40 & 6.62 & 644.89 & 5.53 & 160.05 & 2.59\\
\rowcolor{gray!06}
MMD        & \second{127.45} & \second{4.93} & \second{605.81} & \second{4.11} & 160.52 & \second{1.29}\\
Adv        & 196.76 & 9.90 & 717.96 & 6.34 & 189.41 & 3.19\\
\rowcolor{gray!06}
CrossTReS  & 151.17 & 6.41 & 666.74 & 5.03 & 187.59 & 2.32\\
CoRE       & 152.88 & 5.86 & 620.34 & 4.30 & \second{152.24} & 1.99\\
\midrule
\rowcolor{red!8}
\textbf{\scot (Ours)} 
& \best{104.16} & \best{2.57} & \best{525.10} & \best{1.87} & \best{143.53} & \best{1.16}\\
\rowcolor{blue!6}
\textit{$\Delta$ vs.\ best}
& \cellcolor{blue!16}\textbf{+18.3\%} & \cellcolor{blue!24}\textbf{+47.9\%}
& \cellcolor{blue!12}\textbf{+13.3\%} & \cellcolor{blue!24}\textbf{+54.5\%}
& \cellcolor{blue!8}+5.7\% & \cellcolor{blue!10}+10.1\%\\
\bottomrule
\end{tabular}}
\end{minipage}
\hfill
\begin{minipage}[t]{0.49\textwidth}
\centering
\textbf{Target: Xi'an (XA)}\\[2pt]
\resizebox{\linewidth}{!}{%
\begin{tabular}{@{}l|cc|cc|cc@{}}
\toprule
\multirow{2}{*}{\textbf{Method}}
& \multicolumn{2}{c|}{\textbf{GDP}}
& \multicolumn{2}{c|}{\textbf{Population}}
& \multicolumn{2}{c}{\textbf{CO$_2$}}\\
\cmidrule(lr){2-3}\cmidrule(lr){4-5}\cmidrule(lr){6-7}
& MAE & MAPE & MAE & MAPE & MAE & MAPE\\
\midrule
RP         & 195.91 & 2.89 & 642.01 & 3.67 & 181.16 & 2.88\\
\rowcolor{gray!06}
HBP        & 200.04 & 4.24 & 670.41 & 3.80 & 150.16 & 2.09\\
HSA        & 183.55 & 2.42 & 648.67 & 3.79 & 155.20 & 2.45\\
\rowcolor{gray!06}
MMD        & \second{163.78} & \second{2.18} & \second{506.22} & \second{3.05} & 144.61 & 3.18\\
Adv        & 221.31 & 4.84 & 731.92 & 5.43 & 184.73 & 3.46\\
\rowcolor{gray!06}
CrossTReS  & 179.01 & 4.94 & 625.63 & 5.52 & 151.22 & 3.48\\
CoRE       & 173.72 & 5.48 & 549.37 & 3.89 & \second{134.19} & \second{1.97}\\
\midrule
\rowcolor{red!8}
\textbf{\scot (Ours)} 
& \best{156.94} & \best{1.71} & \best{446.13} & \best{1.86} & \best{127.66} & \best{1.26}\\
\rowcolor{blue!6}
\textit{$\Delta$ vs.\ best}
& \cellcolor{blue!8}+4.2\% & \cellcolor{blue!18}\textbf{+21.6\%}
& \cellcolor{blue!12}\textbf{+11.9\%} & \cellcolor{blue!22}\textbf{+39.0\%}
& \cellcolor{blue!8}+4.9\% & \cellcolor{blue!22}\textbf{+36.0\%}\\
\bottomrule
\end{tabular}}
\end{minipage}

\vspace{-0.8em}
\end{table*}


\subsection{Robustness to Encoder and Readout Choice}
\label{sec:robustness}

A natural concern is whether SCOT's gains stem from its alignment 
design or from incidental choices in the encoder and downstream 
regressor. To isolate this, we hold the alignment module fixed and 
vary the GNN encoder (GAT, GATv2, SuperGAT) and the regressor 
(Ridge, Lasso, SVR, Elastic Net) on BJ$\to$XA 
(Table~\ref{tab:robustness_bj2xa}). The asymmetry is striking: 
\colorbox{yellow!10}{\textbf{encoder/readout substitution $<$3\%}} 
vs.\ \colorbox{blue!8}{\textbf{alignment-design gap 25--40\%}}---roughly an order of magnitude larger. This provides strong evidence that SCOT's gains stem primarily from its \emph{alignment design} rather than from encoder capacity or readout flexibility, and justifies treating both as \emph{interchangeable 
components}.

\begin{table}[H]
\centering
\vspace{-2.2em}
\footnotesize
\caption{Robustness to encoder and readout choice on BJ$\to$XA 
(mean $\pm$ std). 
\best{Red}: best per column.}
\label{tab:robustness_bj2xa}
\setlength{\tabcolsep}{3.5pt}
\renewcommand{\arraystretch}{1.15}
\vspace{0.2em}
\textbf{(a) Encoder ablation} \\[2pt]
\resizebox{\linewidth}{!}{%
\begin{tabular}{@{}l|cc|cc|cc@{}}
\toprule
\multirow{2}{*}{\textbf{Encoder}}
& \multicolumn{2}{c|}{\textbf{GDP}}
& \multicolumn{2}{c|}{\textbf{Population}}
& \multicolumn{2}{c}{\textbf{CO$_2$}}\\
\cmidrule(lr){2-3}\cmidrule(lr){4-5}\cmidrule(lr){6-7}
& MAE$\downarrow$ & MAPE$\downarrow$
& MAE$\downarrow$ & MAPE$\downarrow$
& MAE$\downarrow$ & MAPE$\downarrow$ \\
\midrule
\rowcolor{blue!5}
GAT~\citep{velivckovic2017graph} \textit{(default)}
& \best{160.21 $\pm$ 3.53} & 1.87 $\pm$ 0.18
& \best{450.14 $\pm$ 2.81} & \best{1.73 $\pm$ 0.12}
& \best{127.79 $\pm$ 1.08} & \best{1.78 $\pm$ 0.10} \\
GATv2~\citep{brody2021attentive}
& 162.60 $\pm$ 2.27 & 1.74 $\pm$ 0.22
& 455.36 $\pm$ 4.56 & 1.74 $\pm$ 0.13
& 128.83 $\pm$ 1.29 & 1.82 $\pm$ 0.11 \\
\rowcolor{gray!06}
SuperGAT~\citep{kim2022find}
& 164.42 $\pm$ 5.29 & \best{1.49 $\pm$ 0.13}
& 461.45 $\pm$ 8.37 & 1.95 $\pm$ 0.21
& 132.31 $\pm$ 3.05 & 1.86 $\pm$ 0.21 \\
\bottomrule
\end{tabular}}

\vspace{8pt}
\textbf{(b) Readout ablation} \\[2pt]
\resizebox{\linewidth}{!}{%
\begin{tabular}{@{}l|cc|cc|cc@{}}
\toprule
\multirow{2}{*}{\textbf{Readout}}
& \multicolumn{2}{c|}{\textbf{GDP}}
& \multicolumn{2}{c|}{\textbf{Population}}
& \multicolumn{2}{c}{\textbf{CO$_2$}}\\
\cmidrule(lr){2-3}\cmidrule(lr){4-5}\cmidrule(lr){6-7}
& MAE$\downarrow$ & MAPE$\downarrow$
& MAE$\downarrow$ & MAPE$\downarrow$
& MAE$\downarrow$ & MAPE$\downarrow$ \\
\midrule
\rowcolor{blue!5}
Ridge~\citep{hoerl1970ridge} \textit{(default)}
& 160.21 $\pm$ 3.53 & 1.87 $\pm$ 0.18
& \best{450.14 $\pm$ 2.81} & \best{1.73 $\pm$ 0.12}
& \best{127.79 $\pm$ 1.08} & \best{1.78 $\pm$ 0.10} \\
Lasso~\citep{tibshirani1996regression}
& \best{158.66 $\pm$ 4.07} & 2.03 $\pm$ 0.12
& 455.68 $\pm$ 5.66 & 1.96 $\pm$ 0.02
& 131.40 $\pm$ 3.26 & 1.99 $\pm$ 0.05 \\
\rowcolor{gray!06}
Linear SVR~\citep{drucker1996support}
& 162.23 $\pm$ 2.00 & 1.61 $\pm$ 0.09
& 456.35 $\pm$ 3.62 & 1.94 $\pm$ 0.19
& 128.62 $\pm$ 0.78 & 1.86 $\pm$ 0.18 \\
Elastic Net~\citep{zou2005regularization}
& 164.60 $\pm$ 2.80 & \best{1.57 $\pm$ 0.19}
& 459.46 $\pm$ 3.93 & 1.89 $\pm$ 0.34
& 129.90 $\pm$ 1.83 & 1.82 $\pm$ 0.31 \\
\bottomrule
\end{tabular}}
\vspace{-6pt}
\end{table}

\subsection{Ablation Study}
\label{sec:ablation}

\begin{figure}[t]
\centering
\begin{minipage}[t]{0.50\columnwidth}
    \centering
    \includegraphics[width=\linewidth]{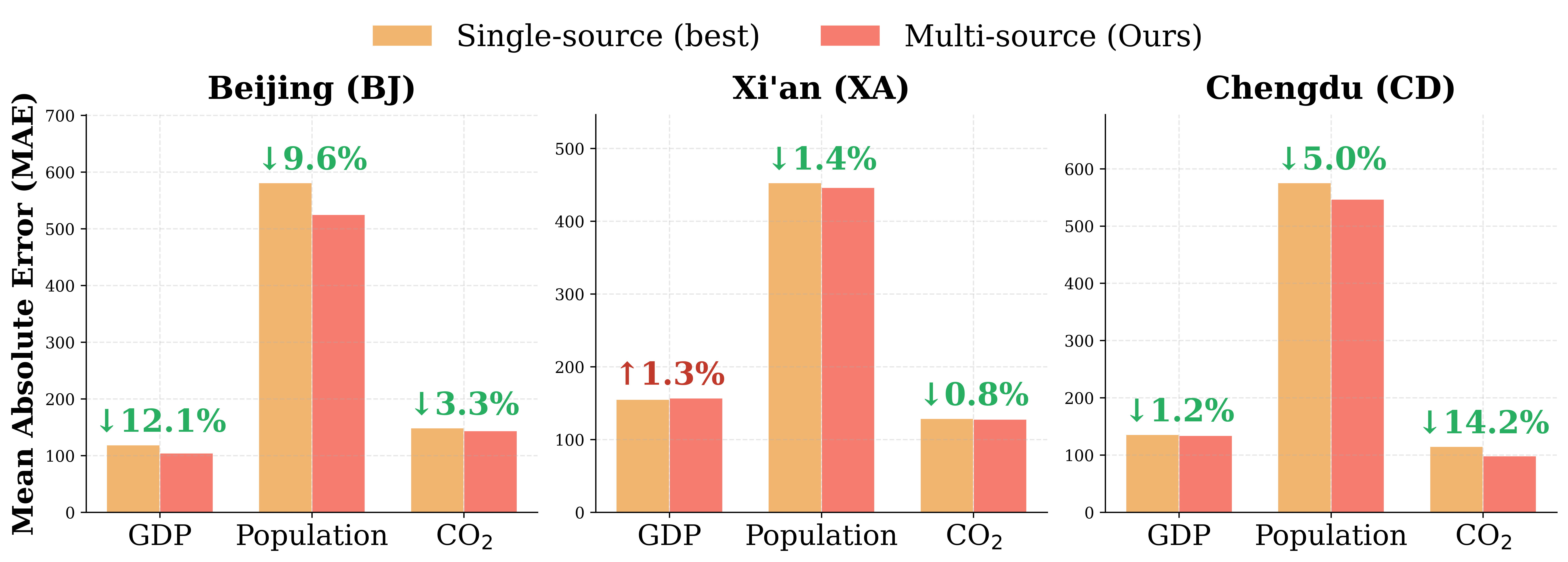}
    \captionsetup{font=small}
    \captionof{figure}{Best single-source \scot (orange) vs.\ 
    multi-source \scot (red). Labels show relative MAE change $\Delta$ 
    (blue: improvement; red: degradation).}
    \label{fig:scot_single_vs_multi_mae}
    
    \vspace{-0.2em}
    
    \includegraphics[width=\linewidth]{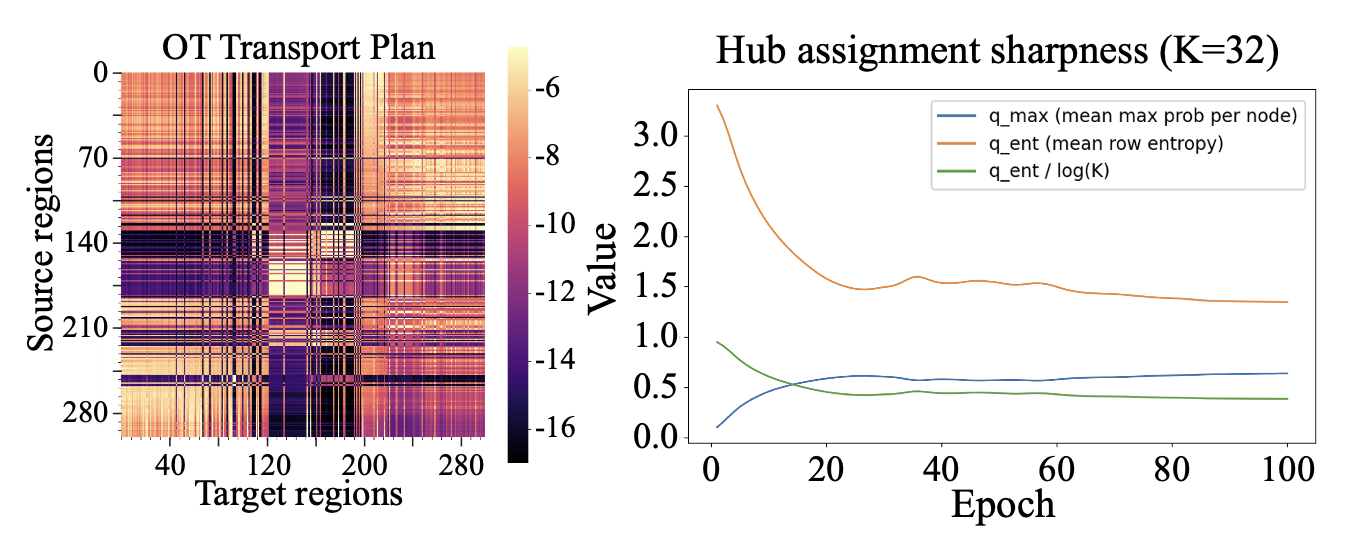}
    \captionsetup{font=small}
    \captionof{figure}{Diagnostics: (left) entropic OT coupling 
    (XA$\to$BJ, epoch 100); (right) hub assignment sharpness for 
    $K=32$.}
    \label{fig:alignment_diagnostics}
\end{minipage}%
\hfill
\begin{minipage}[t]{0.46\columnwidth}
    \centering
     \vspace{-7em}
    \includegraphics[width=\linewidth]{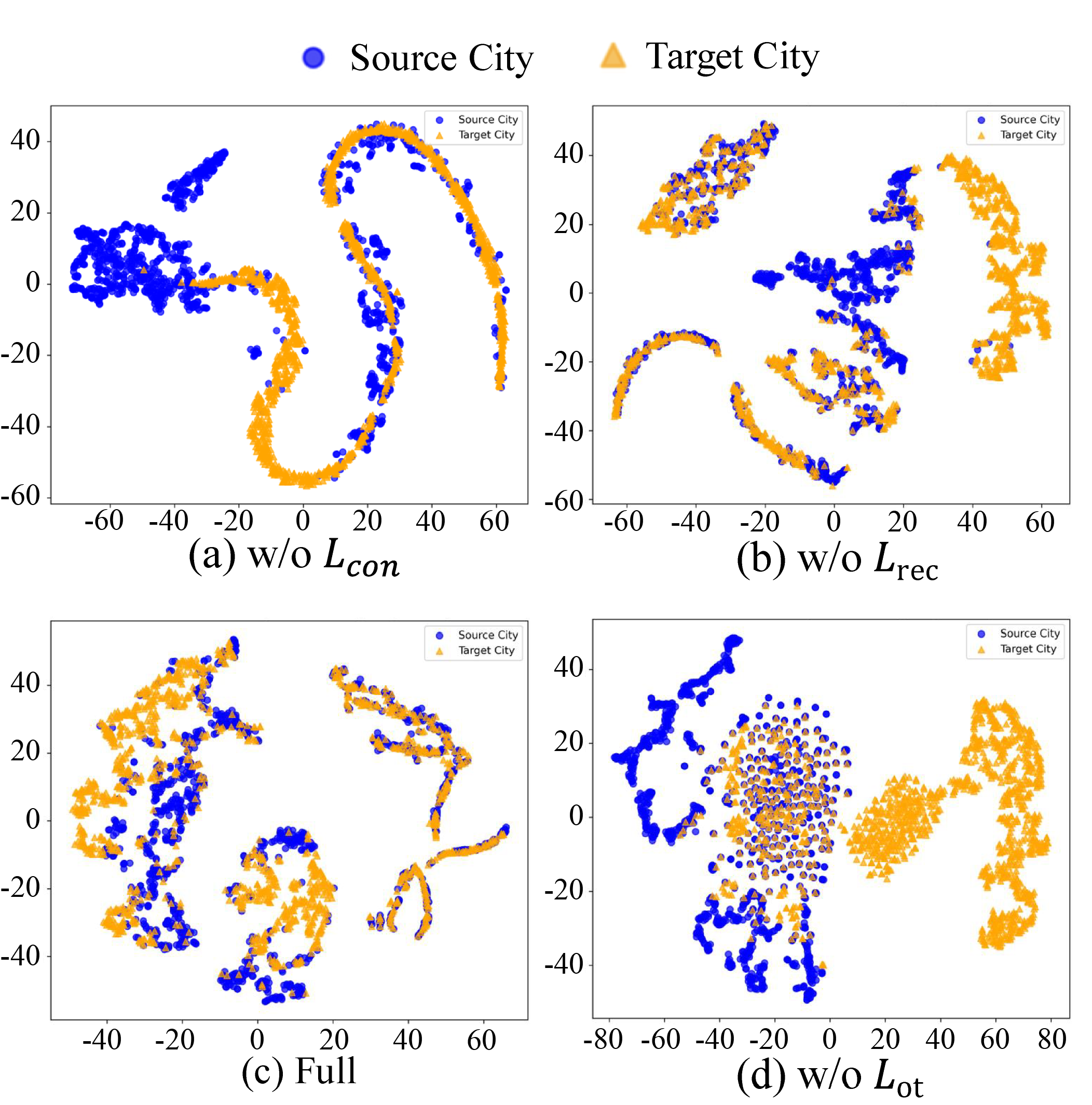}
    \captionsetup{font=small}
    \captionof{figure}{\textbf{Ablation t-SNE of \scot (XA$\to$BJ).} 
    (a) w/o contrastive, (b) w/o reconstruction, (c) full, (d) w/o 
    OT. The full model is the best.}
    \label{fig:tsne_scot_ablation}
\end{minipage}
\end{figure}

\paragraph{Single-source components.}
Removing each of $\mathcal{L}_{\mathrm{con}}$, 
$\mathcal{L}_{\mathrm{OT}}$, and $\mathcal{L}_{\mathrm{rec}}$ reveals 
distinct roles 
(Figs.~\ref{fig:tsne_scot_ablation},~\ref{fig:ablation_transfer}). 
\colorbox{red!8}{\textbf{$\mathcal{L}_{\mathrm{OT}}$ matters most}}: 
without it, target-side branches persist, confirming OT as the 
primary mechanism for resolving heterogeneity. Without 
$\mathcal{L}_{\mathrm{con}}$, embeddings remain city-specific---OT 
provides geometric proximity, contrastive sharpening turns it into 
\emph{semantic discriminability}. Without 
$\mathcal{L}_{\mathrm{rec}}$, training destabilizes but the 
qualitative geometry remains. The full model achieves the cleanest 
overlap and lowest MAE/MAPE across all tasks and directions.

\paragraph{Multi-source and additional ablations.}
SCOT's multi-source design rests on several deliberate choices 
(Appendix~\ref{app:ablation_study}): \emph{hub mediation} (vs.\ 
pairwise OT), \emph{target-induced prior} (vs.\ uniform), and 
\emph{balanced OT} (vs.\ unbalanced).
\colorbox{red!8}{\textbf{Each contributes meaningfully}}: removing any one 
degrades both performance and stability. We further verify that 
\emph{one-sided} cycle reconstruction outperforms two-sided under 
$n_s\neq n_t$, and that alignment preserves within-city predictive 
quality.

\begin{figure}[t]
\vspace{-0.4em}
    \centering
    \begin{minipage}[t]{0.48\columnwidth}
        \centering
        \includegraphics[width=\linewidth]{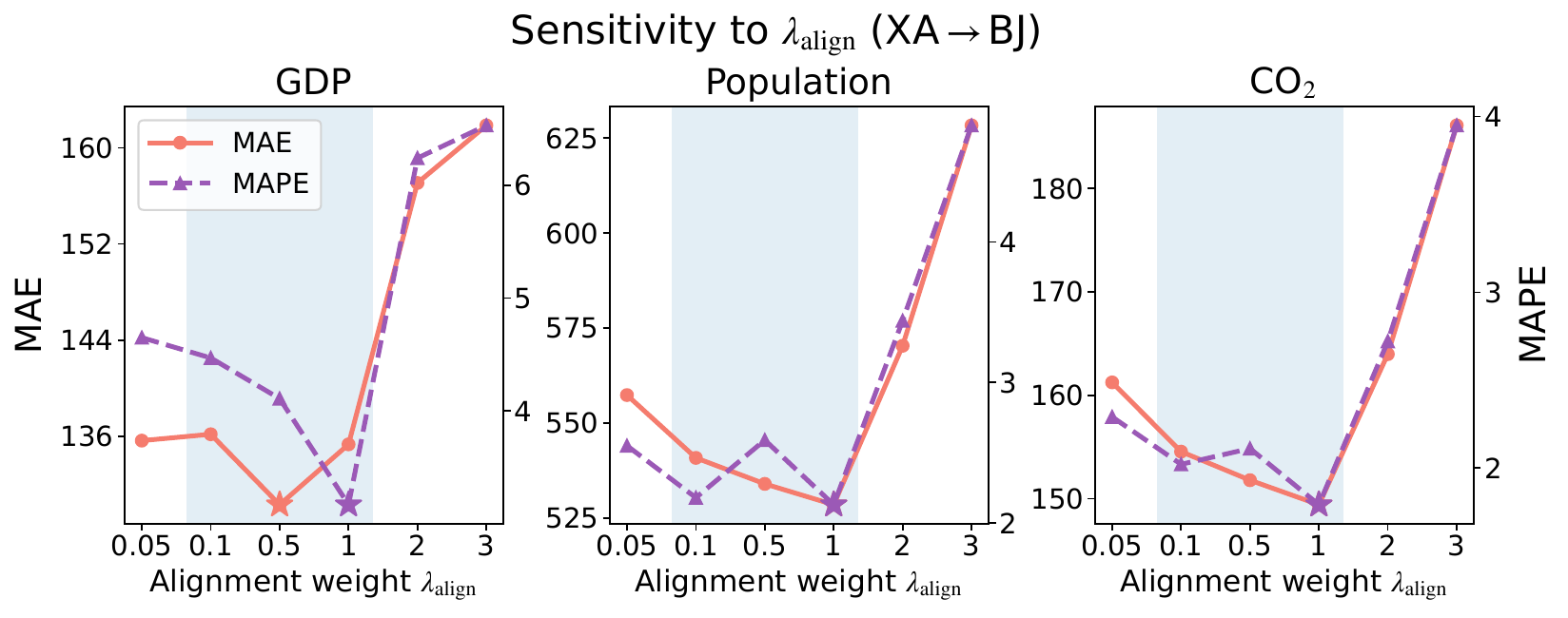}
        \caption{Sensitivity to $\lambda_{\mathrm{align}}$.}
        \label{fig:lambda_align_sens_xa2bj}
    \end{minipage}
    \hfill
    \begin{minipage}[t]{0.48\columnwidth}
        \centering
        \includegraphics[width=\linewidth]{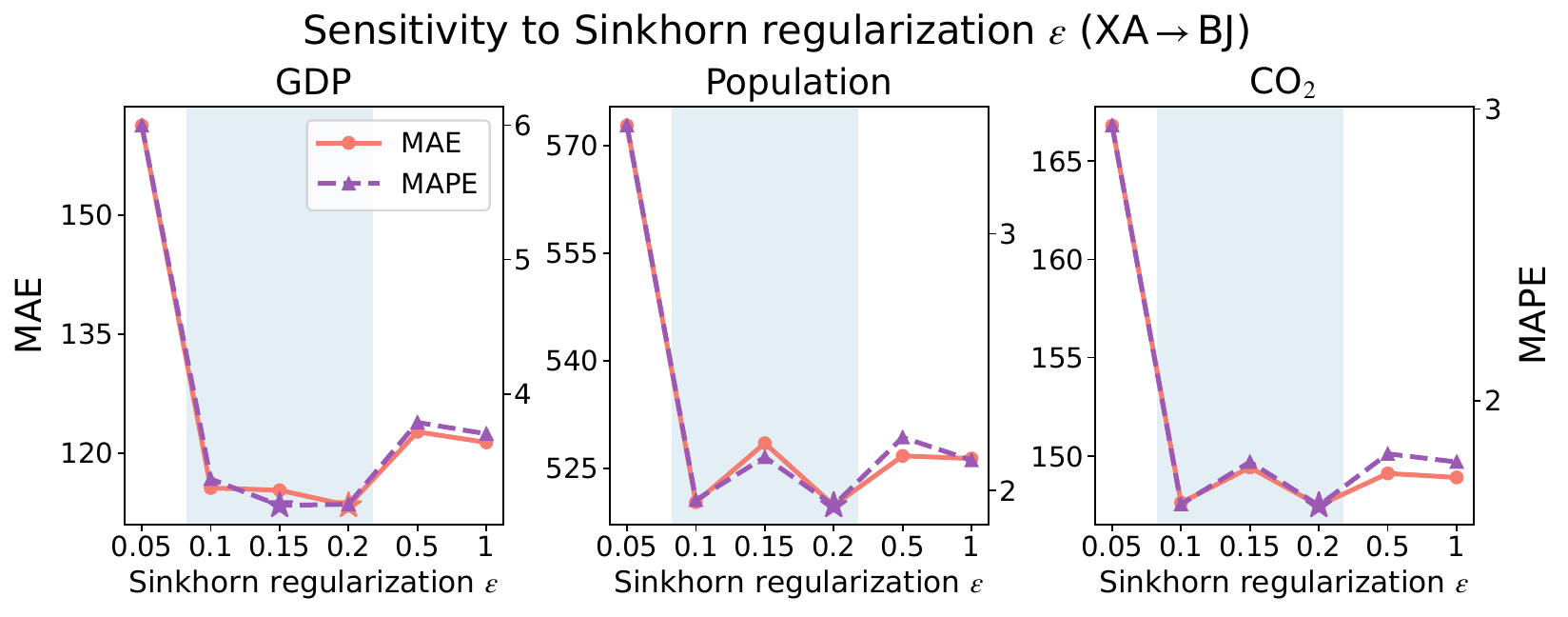}
        \caption{Sensitivity to OT regularization $\varepsilon$.}
        \label{fig:eps_sens_xa2bj}
    \end{minipage}
    \vspace{-1.5em}
\end{figure}

\subsection{Diagnostics}
\label{sec:diagnostics}

\cI~\textbf{OT coupling structure.} The learned coupling 
$\mathbf{P}$ for XA$\to$BJ shows a clear block structure after 
barycentric reordering (Fig.~\ref{fig:alignment_diagnostics}, 
left), confirming that \scot learns \emph{selective many-to-many 
correspondences} rather than collapsing into hubness or spreading 
indiscriminately---the two failure modes targeted by our design.

\cII~\textbf{Hub assignment sharpness.} 
The normalized entropy $q_{\mathrm{ent}}/\log K\!\approx\!0.4$ 
($K\!=\!32$) corresponds to roughly four active prototypes per 
region---\emph{stable specialization rather than uniform pooling}, 
indicating the target-induced prior steers prototypes toward 
task-relevant semantics (App.~\ref{app:ot_diagnostics}).

\subsection{Hyperparameter Sensitivity}
\label{sec:hp_sensitivity}




Hyperparameter sweeps show that \colorbox{yellow!20}{\textbf{\scot 
remains stable across broad parameter ranges}}, remaining competitive 
with several baselines even at moderately off-recommended settings 
--- a valuable property for label-scarce targets, where per-target 
tuning is infeasible. We illustrate three cases; remaining sweeps 
($\eta$, $\tau$, $\tau_b$) appear in 
Appendix~\ref{app:hyperpameter_sensitivity}.

\paragraph{Sensitivity to $\lambda_{\mathrm{align}}$.}
Sweeping $\lambda_{\mathrm{align}}\!\in\!\{0.05, 0.1, 0.5, 1, 2, 3\}$ 
(Fig.~\ref{fig:lambda_align_sens_xa2bj}), performance is best for 
$\lambda_{\mathrm{align}}\!\in\![0.1,1]$ and degrades only at 
$\lambda_{\mathrm{align}}\!\ge\!2$ (over-alignment). Default: $1$.

\paragraph{Sensitivity to $\varepsilon$.}
Sweeping $\varepsilon\!\in\!\{0.05, 0.1, 0.15, 0.2, 0.5, 1\}$ 
(Fig.~\ref{fig:eps_sens_xa2bj}), performance peaks at 
$\varepsilon\!\in\![0.1, 0.2]$; smaller values give noisy couplings, 
larger values diffuse matching. Default: $0.15$.

\paragraph{Hub size $K$.}
On CD,BJ$\to$XA, performance is stable for $K\!\in\![4,32]$ 
(smaller $K$ underfits, $K\!\ge\!64$ over-resolves; 
App.~\ref{app:hubK_sens}). Default: $32$. For $M\!>\!2$ sources, 
$K\!\approx\!\min(32\log_2(M{+}1),\, n_t/2)$ serves as a starting 
heuristic; hub-conflict diagnostics 
(App.~\ref{app:source_quality}) provide an unsupervised adjustment 
signal.

\section{Conclusion}
\label{sec:conclusion}

We proposed \scot, a framework for cross-city region 
transfer that addresses two complementary failure modes of existing 
alignment paradigms---hubness under unequal partitions and 
over-mixing under heterogeneity---through Sinkhorn-based entropic 
OT paired with an OT-guided contrastive objective. A shared 
prototype hub with target-induced prior extends this to multi-source 
transfer, coordinating heterogeneous sources without gradient 
conflict or explicit source selection. Experiments on GDP, 
population, and CO$_2$ across multiple cities and directions show 
consistent gains over strong baselines. Future work includes 
uncertainty-aware mechanisms for selectively integrating sources 
under severe heterogeneity.



\bibliographystyle{plainnat}
\bibliography{example_paper}

\newpage

\appendix

\section{Related Work}
\label{app:related_work}

\subsection{Cross-city transfer.}

Cross-city transfer learning tackles data scarcity and high labeling costs in urban computing by transferring knowledge from well-instrumented source cities to label-scarce targets. FLORAL demonstrates early cross-city multimodal transfer for urban environment inference (e.g., air quality) \citep{wei2016floral}, while RegionTrans adopts a match-then-transfer paradigm by learning cross-city region correspondences and transferring region representations for spatio-temporal forecasting \citep{wang2018regiontranas}. MetaST further leverages meta-learning over multiple cities to learn transferable meta-knowledge for fast adaptation \citep{yao2019metast}. More recent work explicitly mitigates inter-city heterogeneity and negative transfer; for example, CrossTReS reweights source regions to selectively transfer beneficial knowledge \citep{jin2022selective}.

Despite these advances, graph-based regional regression remains challenging: many methods assume comparable spatial units (often grids), which breaks when regions are nodes in mobility/interaction graphs and targets are continuous outcomes (e.g., GDP, population, carbon); moreover, heterogeneous partitions yield unequal region counts and no natural one-to-one correspondence, making explicit matching brittle. Structure-aware transfer begins to address this via spatio-temporal graph few-shot learning (ST-GFSL) \citep{lu2022stgfsl} and transferable graph structure learning (TransGTR) \citep{jin2023transGTR}, alongside region-level transfer with connectivity/parameter generation (CARPG) \citep{yang2023carpg} and one-stage embedding-plus-alignment frameworks (CoRE) \citep{chencross}. Overall, the key challenge is local and selective alignment across unequal, non-corresponding region sets while preserving city-internal structure and task-relevant semantics, motivating our approach.

\subsection{Spatio-temporal representation learning.}

Spatio-temporal representation learning extracts embeddings that capture spatial dependence and temporal dynamics in urban data for tasks such as traffic forecasting, crowd flow/OD estimation \citep{mu2025gem,yuan2025spatio}, Weather prediction \citep{gong2024spatio}, and regional attribute regression. In grid or region settings with regular partitions, ST-ResNet models citywide inflow/outflow by decomposing temporal patterns into closeness, period, and trend and using residual learning \citep{zhang2017deep}; DeepSTN+ strengthens this line with richer context and spatial interactions \citep{lin2019deepstn+}; UrbanFM addresses resolution mismatch and sparsity via coarse-to-fine flow inference \citep{liang2019urbanfm}.  

For graph-structured spatio-temporal data, STGNNs model regions/sensors as nodes and combine spatial message passing (graph/diffusion convolution) with temporal modules (RNN/TCN/attention). Representative models include DCRNN \citep{li2017diffusion}, STGCN \citep{yu2017spatio}, Graph WaveNet \citep{wu2019graph}, and GMAN \citep{zheng2020gman}; recent work also explores pretraining and self-supervision, e.g., contrastive learning on spatio-temporal graphs \citep{liu2022contrastive} and task-specific self-supervised objectives for traffic forecasting \citep{ji2023spatio}. However, these methods are mainly developed for single-city or homogeneous node sets and often assume aligned node identities or comparable graph structures; in cross-city settings with heterogeneous partitions, unequal region counts, and no natural correspondence, stronger encoders alone do not ensure transferability, motivating integration with alignment or soft correspondence mechanisms.

\subsection{Optimal transport in deep learning.}

Optimal Transport (OT) compares and aligns probability measures by 
learning a cost-minimizing coupling. Entropic regularization enables 
the Sinkhorn algorithm, making OT scalable, numerically stable, and 
differentiable for end-to-end learning 
\citep{cuturi2013sinkhorn,peyre2019computational}. OT is widely used 
as a geometry-aware loss, e.g., Wasserstein objectives for structured 
prediction \citep{frogner2015learning} and Sinkhorn-type 
objectives/divergences that balance geometric sensitivity with 
statistical stability \citep{genevay2018learning,feydy2019interpolating}.

OT is a core tool for distribution alignment in domain adaptation: 
OT-DA aligns source and target by optimizing a coupling with optional 
structure-preserving regularizers \citep{courty2016optimal}, while 
deep variants integrate OT into representation learning, e.g., joint 
OT over features and labels (DeepJDOT) \citep{damodaran2018deepjdot}.

\paragraph{OT formulation variants.}
Several formulations extend standard balanced OT for settings with 
support mismatch or partial overlap. Unbalanced OT replaces hard 
marginal constraints with KL or $\ell_1$ penalties 
\citep{chizat2018scaling,fatras2021unbalanced}; partial OT 
transports only a fraction of the total mass 
\citep{caffarelli2010free,chapel2020partial}; 
quadratic-regularized partial OT \citep{blondel2018smooth} encourages 
sparse couplings via $\ell_2$ penalties; constrained variants impose 
explicit capacity bounds for sparsity control. \scot deliberately 
adopts balanced entropic OT: the strict marginal constraints are 
precisely what enforce mass-controlled correspondence and prevent 
the many-to-one concentration of anchor methods 
(Appendix~\ref{app:ot_variants}).

\paragraph{Graph-structural OT.}
Gromov-Wasserstein (GW) \citep{memoli2011gromov,peyre2016gromov} and 
its fused variant (FGW) \citep{titouan2019optimal} align non-isomorphic 
graphs by preserving intra-domain distance structures, making them 
structurally well-suited to cross-graph settings. We instead adopt 
feature-space OT on GAT-encoded embeddings: graph structure is 
already integrated by the encoder, so explicit GW-style alignment 
introduces redundancy without commensurate benefit, while incurring 
non-convex optimization and $O(n_s^2 n_t^2)$ per-iteration cost 
(Appendix~\ref{app:cost_variants}).

\section{Experimental Details}
\label{app:details}

\subsection{Data}
\label{app:data}
We use datasets from Xi'an (XA), Chengdu (CD), and Beijing (BJ). Each city is partitioned into irregular road-network-based regions, with one month of anonymized taxi OD trips mapped to regions to form a directed mobility graph. We evaluate three region-level targets (GDP, population, and carbon emissions) aggregated from public gridded/raster products by assigning grid cells to polygons and summing within each region.

\begin{table}[H]
\centering
\vspace{-0.4em}
\caption{Dataset summary.}
\footnotesize
\label{tab:dataset_summary}
\begin{tabular}{lccc}
\toprule
\textbf{City} & \textbf{\# Regions} & \textbf{\# Trips} & \textbf{Targets} \\
\midrule
XA & 1306 & 559{,}729 & GDP / Pop / CO$_2$ \\
CD & 1056 & 384{,}618 & GDP / Pop / CO$_2$ \\
BJ & 1311 & 78{,}945  & GDP / Pop / CO$_2$ \\
\bottomrule
\end{tabular}
\end{table}

\subsection{Baselines.}

\label{app:baselines}


\paragraph{Baselines.}
We compare \scot with the following methods. \textbf{Non-Alignment} 
trains on the source and directly applies to the target without 
adaptation. \textbf{RP}~\citep{yabe2019city2city} forms one-to-one 
anchors by rank-matching regions and aligns them via 
Procrustes~\citep{schonemann1966generalized}. 
\textbf{HBP}~\citep{yabe2020unsupervised} uses level-wise prototype 
(mean) vectors as anchors under a hierarchical partition, aligned by 
Procrustes. \textbf{HSA}~\citep{yabe2020unsupervised} samples anchors 
within each hierarchical level and fits an unconstrained affine map 
for more flexible alignment. \textbf{MMD}~\citep{gretton2012kernel} 
is an RKHS-based correspondence-free loss matching source and target 
embedding distributions. \textbf{Adv} 
(DANN)~\citep{ganin2016domain} learns domain-invariant embeddings 
via gradient reversal against a domain discriminator. 
\textbf{CrossTReS}~\citep{jin2022selective} is a selective 
fine-tuning framework that meta-learns region weights to prioritize 
source regions most helpful for the target. 
\textbf{CoRE}~\citep{chencross} jointly learns region embeddings and 
aligns the two latent spaces both globally and at the region level.

\section{Proof of Proposition~\ref{prop:contrastive_mae}}
\label{app:contrastive_mae_proof}

\contrastivemaeprop*

\begin{proof}
Define
\[
\phi(x):=|h(x)-g(x)|,\qquad x\in\mathbb S^{d-1}.
\]
For any $x,x'\in\mathbb S^{d-1}$, by the reverse triangle inequality and the Lipschitzness of $h$ and $g$,
\begin{equation}
|\phi(x)-\phi(x')|
\le |h(x)-h(x')|+|g(x)-g(x')|
\le (L_h+L_g)\|x-x'\|_2.
\label{eq:phi_lip_general}
\end{equation}

Let $(I,J)\sim P$. Since $P\mathbf 1=a$ and $P^\top\mathbf 1=b$, we have $I\sim a$ and $J\sim b$. Therefore,
\begin{align}
\mathcal R_t^b(h)-\mathcal R_s^a(h)
&=
\mathbb E\,\phi(v_J)-\mathbb E\,\phi(u_I)
=
\mathbb E\!\left[\phi(v_J)-\phi(u_I)\right]
\nonumber\\
&\le
\mathbb E\!\left|\phi(v_J)-\phi(u_I)\right|
\overset{\eqref{eq:phi_lip_general}}{\le}
(L_h+L_g)\,\mathbb E\|v_J-u_I\|_2.
\label{eq:risk_gap_general}
\end{align}

Define the cost matrix $C_{ij} := \|u_i - v_j\|_2$. Then 
$\mathbb{E}\|v_J - u_I\|_2 = \sum_{i,j} P_{ij} \|u_i - v_j\|_2 = \langle C, P\rangle$, 
so \eqref{eq:risk_gap_general} becomes
\begin{equation}
\mathcal R_t^b(h) - \mathcal R_s^a(h)
\le (L_h+L_g)\,\langle C, P\rangle.
\label{eq:gap_to_cost_general}
\end{equation}

Since $\|u_i\|_2 = \|v_j\|_2 = 1$, we have 
$\|u_i - v_j\|_2^2 = 2 - 2\langle u_i, v_j\rangle$. Applying Jensen's 
inequality to the concave function $\sqrt{\cdot}$,
\begin{equation}
\langle C, P\rangle 
= \mathbb{E}\|u_I - v_J\|_2 
\le \sqrt{\mathbb{E}\|u_I - v_J\|_2^2}
= \sqrt{\,2 - 2\,\mathbb{E}\langle u_I, v_J\rangle\,}.
\label{eq:cost_to_cos_general}
\end{equation}
It remains to lower-bound $\mathbb{E}\langle u_I, v_J\rangle$ in terms 
of $\mathcal{L}_{\mathrm{Con}}(P)$.

For each $i$ with $a_i > 0$, define
\[
Z_i := \sum_{k} \exp(\langle u_i, v_k\rangle/\tau), \quad
q_i(j) := \frac{\exp(\langle u_i, v_j\rangle/\tau)}{Z_i}, \quad
p_i(j) := \frac{P_{ij}}{a_i}, \quad
\ell_i := -\log\sum_{j} p_i(j) q_i(j).
\]
(Rows with $a_i = 0$ contribute zero and are ignored.) 
Since $P_{ij} = a_i p_i(j)$, direct substitution gives
\begin{equation}
\mathcal{L}_{\mathrm{Con}}(P) = \sum_{i} a_i \ell_i.
\label{eq:lcon_decomp_general}
\end{equation}

For each $i$ with $a_i>0$,
\begin{equation}
\mathbb E_{J\sim p_i}\exp(\langle u_i,v_J\rangle/\tau)
= Z_i\sum_{j} p_i(j)q_i(j)
= Z_i e^{-\ell_i}.
\label{eq:mgf_exact_general}
\end{equation}
Since $\langle u_i,v_k\rangle\ge -1$, we have 
$Z_i \ge n_t e^{-1/\tau}$, so 
\eqref{eq:mgf_exact_general} yields
\begin{equation}
\mathbb E_{J\sim p_i}\exp(\langle u_i,v_J\rangle/\tau)
\ge n_t e^{-1/\tau}e^{-\ell_i}.
\label{eq:mgf_lower_general}
\end{equation}

Let $X := \langle u_i,v_J\rangle \in [-1,1]$ with $J\sim p_i$. By 
Hoeffding's lemma, $\mathbb E X \ge \tfrac{1}{\lambda}\log\mathbb E e^{\lambda X} - \tfrac{\lambda}{2}$ 
for any $\lambda > 0$. Setting $\lambda = 1/\tau$ and applying 
\eqref{eq:mgf_lower_general},
\begin{equation}
\mathbb E_{J\sim p_i}\langle u_i,v_J\rangle
\ge \tau\log n_t - 1 - \tau\ell_i - \frac{1}{2\tau}.
\label{eq:row_cos_lower_general}
\end{equation}
Averaging over $I\sim a$ and using 
$\sum_i a_i\ell_i = \mathcal L_{\mathrm{Con}}(P)$,
\begin{equation}
\mathbb E_{(I,J)\sim P}\langle u_I,v_J\rangle
\ge \tau\log n_t - 1 - \tau\mathcal L_{\mathrm{Con}}(P) - \frac{1}{2\tau}.
\label{eq:cos_lower_pre_entropy}
\end{equation}

To expose the source-marginal entropy, rewrite 
$\mathcal L_{\mathrm{Con}}(P) = \sum_i a_i\tilde\ell_i - H(a)$, where 
$\tilde\ell_i := -\log\!\left(\sum_j (P_{ij}/a_i)\,q_i(j)\right)$ and 
$H(a) := -\sum_i a_i\log a_i$. Substituting into 
\eqref{eq:cos_lower_pre_entropy} gives
\begin{equation}
\mathbb E_{(I,J)\sim P}\langle u_I,v_J\rangle
\ge \tau\log n_t + \tau H(a) - \tau\mathcal L_{\mathrm{Con}}(P) - 1 - \tfrac{1}{2\tau}.
\label{eq:cos_lower_general_final}
\end{equation}

Defining 
$\underline m := \max\{-1,\,\tau\log n_t + \tau H(a) - \tau\mathcal L_{\mathrm{Con}}(P) - 1 - \tfrac{1}{2\tau}\}$ 
and using $\langle u_I,v_J\rangle \ge -1$, we obtain 
$\mathbb E\langle u_I,v_J\rangle \ge \underline m$. Combining with 
\eqref{eq:gap_to_cost_general} and \eqref{eq:cost_to_cos_general},
\[
\mathcal R_t^b(h) - \mathcal R_s^a(h)
\le (L_h+L_g)\sqrt{\,2 - 2\,\underline m\,},
\]
proving the claim.
\end{proof}

\section{Training Algorithms}
\label{app:algorithms}

We provide pseudocode for the single-source 
(Algorithm~\ref{alg:scot_single_sinkhorn}) and multi-source 
(Algorithm~\ref{alg:scot_multi_hub_ot_single_style}) variants of \scot.

\subsection{Single-source \scot Training Algorithm}

\begin{algorithm}[H]
\caption{Single-source \scot training}
\label{alg:scot_single_sinkhorn}
\definecolor{stage1}{RGB}{230, 240, 255}
\definecolor{stage2}{RGB}{255, 240, 230}
\definecolor{stage3}{RGB}{235, 250, 235}
\definecolor{stage4}{RGB}{250, 235, 245}
\definecolor{commentgray}{RGB}{120, 120, 120}

\begin{algorithmic}[1]
\Require Graphs $\mathcal{G}_s,\mathcal{G}_t$; mobility $\mathbf{M}_s,\mathbf{M}_t$; 
hyperparameters $\tau,\varepsilon,T,\eta,\beta,\lambda_{\mathrm{align}},\lambda_{\mathrm{rec}}$; 
step size $\alpha$
\Ensure Trained parameters $\Theta$

\Statex \colorbox{stage1}{\parbox{0.96\linewidth}{\small\textsc{\textbf{$\blacktriangleright$ Stage 1.}\ \ Encode regions \& compute intra-city objective}}}
\For{$\text{epoch}=1,2,\dots$}
  \State $\mathbf{z}_c \leftarrow \mathrm{GAT}(\mathcal{G}_c;\Theta)$,
         \quad $\mathcal{L}_{\mathrm{intra}}^c \leftarrow \mathcal{L}_{\mathrm{intra}}(\mathbf{z}_c;\mathbf{M}_c)$
         \quad for $c\in\{s,t\}$

\Statex \colorbox{stage2}{\parbox{0.96\linewidth}{\small\textsc{\textbf{$\blacktriangleright$ Stage 2.}\ \ Sinkhorn entropic OT coupling}}}
  \State $\tilde{\mathbf{z}}_c \leftarrow \mathrm{RowNorm}(\mathbf{z}_c)$,
         \quad $C_{ij} \leftarrow \lVert\tilde{\mathbf{z}}_i^s - \tilde{\mathbf{z}}_j^t\rVert_2$,
         \quad $\mathbf{K} \leftarrow \exp(-\mathbf{C}/\varepsilon)$
  \State Initialize $\mathbf{u},\mathbf{v}\leftarrow\mathbf{1}$
  \For{$k=1,\dots,T$}
       \State $\mathbf{u}\leftarrow\mathbf{1}\oslash(\mathbf{K}\mathbf{v})$,
              \quad $\mathbf{v}\leftarrow\mathbf{1}\oslash(\mathbf{K}^\top\mathbf{u})$
  \EndFor
  \State $\mathbf{P} \leftarrow \mathrm{diag}(\mathbf{u})\,\mathbf{K}\,\mathrm{diag}(\mathbf{v})$
         \hfill {\color{commentgray}\textit{// soft correspondence}}

\Statex \colorbox{stage3}{\parbox{0.96\linewidth}{\small\textsc{\textbf{$\blacktriangleright$ Stage 3.}\ \ Alignment \& reconstruction losses}}}
  \State $\mathcal{L}_{\mathrm{OT}} \leftarrow \langle\mathbf{P},\mathbf{C}\rangle / \min(n_s,n_t)$,
         \quad $\mathcal{L}_{\mathrm{Con}}$ via Eq.~\eqref{eq:lcon}
  \State $\mathcal{L}_{\mathrm{Align}} \leftarrow \mathcal{L}_{\mathrm{OT}} + \eta\,\mathcal{L}_{\mathrm{Con}}$,
         \quad $\mathcal{L}_{\mathrm{Rec}} \leftarrow \mathcal{L}_{\mathrm{cyc}} + \beta\,\mathcal{R}_{\mathrm{ent}}$

\Statex \colorbox{stage4}{\parbox{0.96\linewidth}{\small\textsc{\textbf{$\blacktriangleright$ Stage 4.}\ \ Total objective \& parameter update}}}
  \State $\mathcal{L} \leftarrow 
         \underbrace{\mathcal{L}_{\mathrm{intra}}^s + \mathcal{L}_{\mathrm{intra}}^t}_{\text{\scriptsize intra-city}}
         \,+\, \lambda_{\mathrm{align}}\underbrace{\mathcal{L}_{\mathrm{Align}}}_{\text{\scriptsize cross-city}}
         \,+\, \lambda_{\mathrm{rec}}\underbrace{\mathcal{L}_{\mathrm{Rec}}}_{\text{\scriptsize stabilization}}$
  \State $\Theta \leftarrow \Theta - \alpha\,\nabla_\Theta \mathcal{L}$

\EndFor
\end{algorithmic}
\end{algorithm}


\subsection{Multi-source hub alignment (illustration)}
\label{app:multi_source_hub}

Figure~\ref{fig:multi} provides an illustration of the proposed multi-source hub alignment mechanism.
Given region embeddings from multiple source cities $\{Z^{(1)},\dots,Z^{(M)}\}$ and a target city $Z^{(T)}$, we introduce a set of shared prototypes (hubs) that serve as intermediate anchors to mediate cross-city alignment.

\begin{wrapfigure}{r}{0.5\textwidth}
    \centering
    \includegraphics[width=\linewidth]{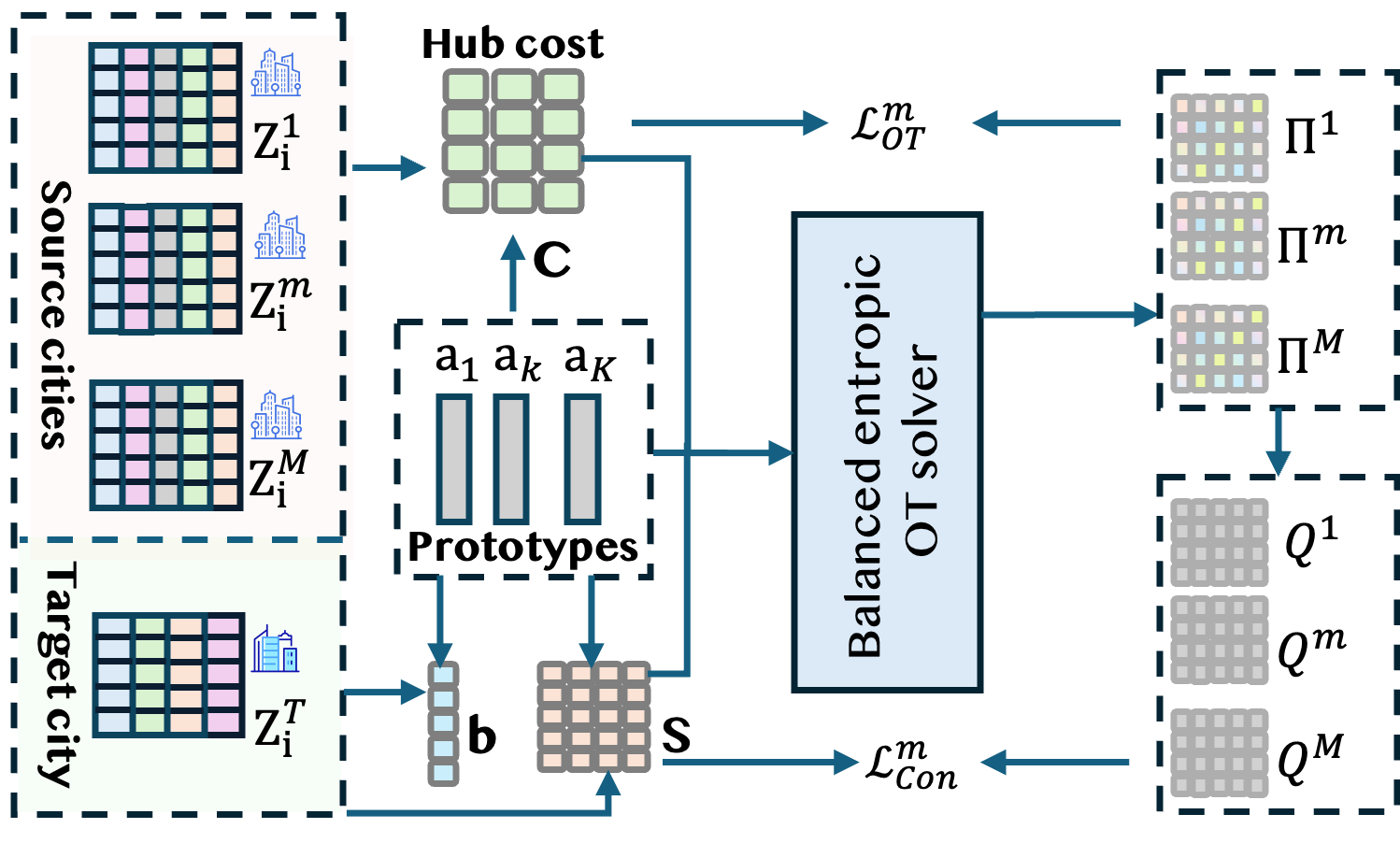}
    \caption{\textbf{Illustration of multi-source \scot.}}
    \vspace{-1em} 
    \label{fig:multi}
\end{wrapfigure}

Specifically, each city is softly assigned to the prototypes via learned assignment distributions, inducing hub-level representations that summarize transferable structure across sources.
These hub representations are then aligned with the target city through a balanced entropic OT solver, producing transport plans $\{\Pi^{(m)}\}$ that guide both the OT alignment loss $\mathcal{L}^{m}_{\mathrm{OT}}$ and the OT-weighted contrastive objective $\mathcal{L}^{m}_{\mathrm{Con}}$. This design enables structured many-to-many alignment across cities while avoiding brittle pairwise correspondences and mitigating hub collapse, yielding a scalable and robust extension of \scot to the multi-source setting.

\subsection{Algorithm: Multi-source \scot training}

\begin{algorithm}[H]
\caption{Multi-source \scot training (hub entropic OT)}
\label{alg:scot_multi_hub_ot_single_style}
\definecolor{stage1}{RGB}{230, 240, 255}
\definecolor{stage2}{RGB}{255, 240, 230}
\definecolor{stage3}{RGB}{235, 250, 235}
\definecolor{stage4}{RGB}{250, 235, 245}
\definecolor{commentgray}{RGB}{120, 120, 120}

\begin{algorithmic}[1]
\Require $\{\mathcal{G}_m,\mathbf{M}_m\}_{m\in\mathcal{S}}$, $\mathcal{G}_t,\mathbf{M}_t$; 
hub size $K$; hyperparameters 
$\tau,\varepsilon,T,\lambda_{\mathrm{align}},\lambda_{\mathrm{rec}},\lambda_c,\lambda_{\mathrm{hub}},\beta$; 
step size $\alpha$
\Ensure Trained parameters $\Theta$ (including learnable prototypes $\mathbf{a}$)

\Statex \colorbox{stage1}{\parbox{0.96\linewidth}{\small\textsc{\textbf{$\blacktriangleright$ Stage 1.}\ \ Encode all cities \& compute intra-city losses}}}
\For{$\text{epoch}=1,2,\dots$}
  \State $\mathbf{z}_m \leftarrow \mathrm{GAT}_m(\mathcal{G}_m;\Theta)$,
         \quad $\mathcal{L}_{\mathrm{intra}}^{m} \leftarrow \mathcal{L}_{\mathrm{intra}}(\mathbf{z}_m;\mathbf{M}_m)$
         \quad for $m \in \mathcal{S}\cup\{t\}$

\Statex \colorbox{stage2}{\parbox{0.96\linewidth}{\small\textsc{\textbf{$\blacktriangleright$ Stage 2.}\ \ City-to-hub coupling via balanced entropic OT}}}
  \State $\tilde{\mathbf{z}}_m \leftarrow \mathrm{RowNorm}(\mathbf{z}_m)$ for $m\in\mathcal{S}\cup\{t\}$,
         \quad $\tilde{\mathbf{a}} \leftarrow \mathrm{RowNorm}(\mathbf{a})$
  \ForAll{$m \in \mathcal{S}\cup\{t\}$}
       \State $\mathbf{C}^{m}_{ik} \leftarrow \lVert\tilde{\mathbf{z}}^{m}_{i} - \tilde{\mathbf{a}}_{k}\rVert_{2}$
              \hfill {\color{commentgray}\textit{// city-to-hub cost}}
       \State $\boldsymbol{\Pi}^{m} \leftarrow \arg\min_{\mathbf{P}\in\Pi(\mathbf{a}^m,\mathbf{b})} 
              \langle\mathbf{P},\mathbf{C}^{m}\rangle - \varepsilon\,H(\mathbf{P})$
              \hfill {\color{commentgray}\textit{// Sinkhorn, $T$ iters}}
       \State $\mathbf{Q}^{m} \leftarrow \mathrm{RowNorm}(\boldsymbol{\Pi}^{m})$
  \EndFor

\Statex \colorbox{stage3}{\parbox{0.96\linewidth}{\small\textsc{\textbf{$\blacktriangleright$ Stage 3.}\ \ Per-city alignment \& hub-balance losses}}}
  \ForAll{$m \in \mathcal{S}\cup\{t\}$}
       \State $\mathcal{L}_{\mathrm{OT}}^{m} \leftarrow \langle\boldsymbol{\Pi}^{m},\mathbf{C}^{m}\rangle/\min(n_m,K)$
       \State $\mathcal{L}_{\mathrm{Con}}^{m} \leftarrow 
              -\tfrac{1}{n_m}\sum_i \log\!\Big(
              \tfrac{\sum_k Q^m_{ik}\exp(\tilde{\mathbf{z}}_i^{m\top}\tilde{\mathbf{a}}_k/\tau)}
              {\sum_k \exp(\tilde{\mathbf{z}}_i^{m\top}\tilde{\mathbf{a}}_k/\tau)}\Big)$
       \State $\mathcal{L}_{\mathrm{align}}^{m} \leftarrow \mathcal{L}_{\mathrm{OT}}^{m} + \lambda_c\,\mathcal{L}_{\mathrm{Con}}^{m}$
       \State $\mathbf{p}^{m} \leftarrow (\boldsymbol{\Pi}^{m})^\top\mathbf{1}$,
              \quad $\mathcal{L}_{\mathrm{hub}}^{m} \leftarrow \mathrm{KL}\!\big(\mathbf{p}^{m}\,\|\,\tfrac{1}{K}\mathbf{1}\big)$
  \EndFor

\Statex \colorbox{stage4}{\parbox{0.96\linewidth}{\small\textsc{\textbf{$\blacktriangleright$ Stage 4.}\ \ Aggregate \& update}}}
  \State $\mathcal{L}_{\mathrm{align}} \leftarrow 
         \tfrac{1}{|\mathcal{S}|+1}\sum_{m}\mathcal{L}_{\mathrm{align}}^{m} 
         + \lambda_{\mathrm{hub}}\cdot\tfrac{1}{|\mathcal{S}|+1}\sum_{m}\mathcal{L}_{\mathrm{hub}}^{m}$
  \State $\mathcal{L}_{\mathrm{rec}} \leftarrow 
         \mathcal{L}_{\mathrm{cycle}}(\{\mathbf{z}_m\}_{m\in\mathcal{S}},\mathbf{z}_t) 
         + \beta\,\mathrm{Ent}(\mathbf{A}_{\cdot\to t})$
  \State $\mathcal{L} \leftarrow 
         \underbrace{\textstyle\sum_{m\in\mathcal{S}}\mathcal{L}_{\mathrm{intra}}^m + \mathcal{L}_{\mathrm{intra}}^t}_{\text{\scriptsize intra-city}}
         + \lambda_{\mathrm{align}}\underbrace{\mathcal{L}_{\mathrm{align}}}_{\text{\scriptsize hub alignment}}
         + \lambda_{\mathrm{rec}}\underbrace{\mathcal{L}_{\mathrm{rec}}}_{\text{\scriptsize stabilization}}$
  \State $\Theta \leftarrow \Theta - \alpha\,\nabla_\Theta\mathcal{L}$

\EndFor
\end{algorithmic}
\end{algorithm}

\section{Intra-city Prediction with and without Alignment}
\label{app:intra_city_alignment}

\begin{table}[H]
\centering
\footnotesize
\caption{Intra-city prediction with and without alignment 
(4 seeds, mean $\pm$ std). Differences fall within one standard 
deviation, indicating that alignment does \emph{not} degrade 
within-city quality. Lower is better.}
\label{tab:intra_city_alignment}
\setlength{\tabcolsep}{4pt}
\renewcommand{\arraystretch}{1.2}
\begin{tabular}{@{}l|cc|cc|cc@{}}
\toprule
\multirow{2}{*}{\textbf{Direction / Variant}}
& \multicolumn{2}{c|}{\textbf{GDP}}
& \multicolumn{2}{c|}{\textbf{Population}}
& \multicolumn{2}{c}{\textbf{CO$_2$}} \\
\cmidrule(lr){2-3}\cmidrule(lr){4-5}\cmidrule(lr){6-7}
& MAE$\downarrow$ & MAPE$\downarrow$
& MAE$\downarrow$ & MAPE$\downarrow$
& MAE$\downarrow$ & MAPE$\downarrow$ \\
\midrule

\rowcolor{cyan!18}
\multicolumn{7}{@{}l}{\textcolor{cyan!50!black}{$\blacktriangleright$}\ \textbf{\textsc{Target: Xi'an (XA)}}} \\
\midrule
\rowcolor{blue!5}
\hspace{0.5em}XA$\to$XA \textit{(w/o Alignment)}
& 155.31 $\pm$ 1.92 & 3.63 $\pm$ 0.06
& 467.03 $\pm$ 4.45 & 2.50 $\pm$ 0.04
& 130.35 $\pm$ 0.97 & 2.60 $\pm$ 0.05 \\
\hspace{0.5em}XA$\to$XA \textit{(Full \scot)}
& 156.44 $\pm$ 2.29 & 3.67 $\pm$ 0.11
& 467.96 $\pm$ 8.19 & 2.51 $\pm$ 0.05
& 130.55 $\pm$ 1.93 & 2.59 $\pm$ 0.06 \\

\midrule
\rowcolor{purple!18}
\multicolumn{7}{@{}l}{\textcolor{purple!50!black}{$\blacktriangleright$}\ \textbf{\textsc{Target: Beijing (BJ)}}} \\
\midrule
\rowcolor{blue!5}
\hspace{0.5em}BJ$\to$BJ \textit{(w/o Alignment)}
& 95.41 $\pm$ 1.19 & 2.48 $\pm$ 0.06
& 531.27 $\pm$ 10.10 & 2.85 $\pm$ 0.06
& 154.57 $\pm$ 2.24 & 2.35 $\pm$ 0.28 \\
\hspace{0.5em}BJ$\to$BJ \textit{(Full \scot)}
& 96.05 $\pm$ 1.65 & 2.45 $\pm$ 0.07
& 534.79 $\pm$ 13.17 & 2.93 $\pm$ 0.07
& 155.58 $\pm$ 1.71 & 2.55 $\pm$ 0.06 \\

\bottomrule
\end{tabular}
\end{table}

A well-designed cross-city alignment should preserve local 
representations: any cross-city gain that comes at the cost of 
within-city predictive quality is a \emph{trade-off}, not a genuine 
improvement. We verify this by comparing Full \scot against an 
alignment-free variant on \emph{intra-city} prediction 
(XA$\to$XA, BJ$\to$BJ).
\tauI~\textbf{The two variants are statistically indistinguishable.} 
Across all six (city, task) combinations in 
Table~\ref{tab:intra_city_alignment}, differences between Full \scot 
and the alignment-free variant fall \emph{well within one standard 
deviation}, with no consistent winner across cities or tasks.
\tauII~\textbf{Cross-city alignment is non-disruptive.} The 
alignment module \emph{operates compatibly with within-city structure}, 
transferring external knowledge without overwriting intrinsic 
city-specific patterns. \scot's cross-city gains therefore reflect 
genuine improvement, not trade-offs against local quality.

\section{Additional Experiment Details}

\subsection{Additional Single-Source Results}
\label{app:additional_single_source}

Tables~\ref{tab:bj_cd} and~\ref{tab:xa_cd} report additional 
single-source transfer results between Xi'an (XA) and Beijing (BJ) 
across all three tasks (GDP, population, CO$_2$) and both transfer 
directions. \scot consistently achieves the best performance under 
both MAE and MAPE, outperforming all baselines, and remains robust 
across both directions (XA$\to$BJ and BJ$\to$XA), whereas several 
baselines show large asymmetry or unstable performance under 
distribution mismatch. Fig.~\ref{fig:radar_transfer} corroborates 
this with consistently smallest radar polygons across tasks and 
directions.

\begin{figure}[H]
    \centering
    \includegraphics[width=0.7\linewidth]{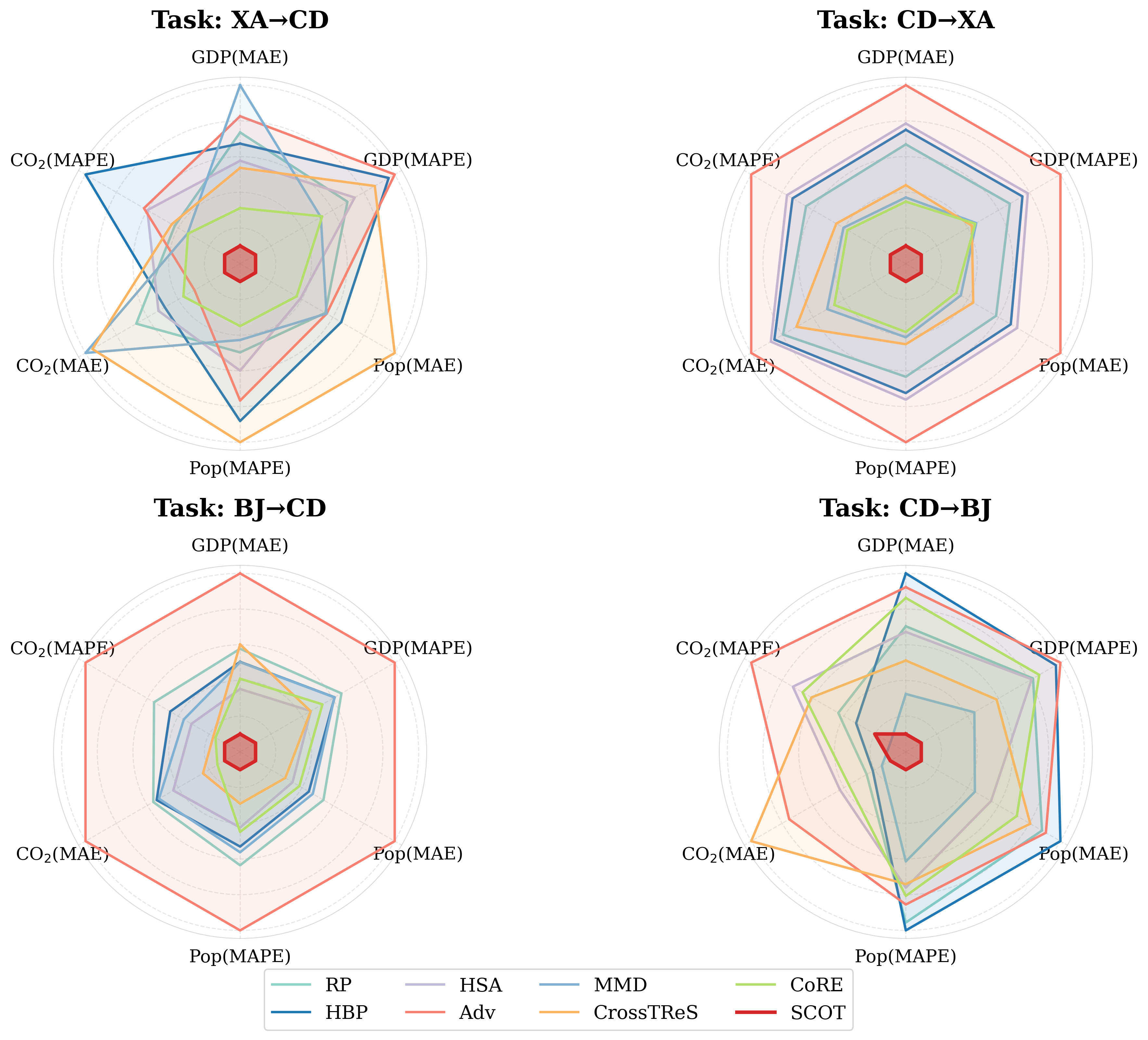}
    \captionsetup{font=small}
    \caption{\textbf{Radar-chart matrix for cross-city transfer 
    performance.} MAE, MAPE, and Avg (min--max normalized within 
    panel; center = lower error). Smaller polygons indicate 
    better transfer.}
    \label{fig:radar_transfer}
\end{figure}

\begin{table*}[!t]
\centering
\caption{Single-source transfer results on XA$\leftrightarrow$CD. 
Lower is better. \best{Red}: best; \second{Blue}: runner-up. 
Gain row reports \scot's relative improvement over the strongest baseline.}
\label{tab:xa_cd}
\setlength{\tabcolsep}{5pt}
\renewcommand{\arraystretch}{1.2}
\resizebox{\textwidth}{!}{%
\begin{tabular}{@{}l|cccccc|cccccc@{}}
\toprule
\multirow{3}{*}{\textbf{Method}}
& \multicolumn{6}{c|}{\textbf{XA $\to$ CD}}
& \multicolumn{6}{c}{\textbf{CD $\to$ XA}} \\
\cmidrule(lr){2-7} \cmidrule(lr){8-13}
& \multicolumn{2}{c}{GDP} & \multicolumn{2}{c}{Population} & \multicolumn{2}{c|}{CO$_2$}
& \multicolumn{2}{c}{GDP} & \multicolumn{2}{c}{Population} & \multicolumn{2}{c}{CO$_2$} \\
\cmidrule(lr){2-3} \cmidrule(lr){4-5} \cmidrule(lr){6-7}
\cmidrule(lr){8-9} \cmidrule(lr){10-11} \cmidrule(lr){12-13}
& MAE & MAPE & MAE & MAPE & MAE & MAPE
& MAE & MAPE & MAE & MAPE & MAE & MAPE \\
\midrule
Non-Alignment
& 323.90 & 11.49 & 908.18 & 4.73 & 237.73 & 12.91
& 232.86 & \second{3.74} & 748.16 & 5.22 & 179.25 & \second{1.67} \\
\rowcolor{gray!06}
RP
& 191.11 & 11.64 & 644.22 & 3.32 & 144.87 & 9.18
& 194.67 & 6.57 & 671.30 & 6.41 & 145.51 & 2.30 \\
HBP
& 188.61 & 13.42 & 660.01 & 4.26 & 135.14 & 11.59
& 199.79 & 7.06 & 697.44 & 6.93 & 146.76 & 2.47 \\
\rowcolor{gray!06}
HSA
& 184.81 & 11.95 & 619.72 & 3.57 & 137.29 & 9.92
& 202.01 & 7.27 & 708.80 & 7.14 & 147.27 & 2.54 \\
MMD
& 201.60 & \second{10.49} & 645.42 & 3.15 & 162.21 & 8.83
& 176.00 & 5.26 & 608.58 & 5.15 & 139.15 & 1.83 \\
\rowcolor{gray!06}
Adv
& 194.71 & 13.68 & 645.59 & 3.98 & \second{124.98} & 10.01
& 215.47 & 8.54 & 785.78 & 8.50 & 150.10 & 2.99 \\
CrossTReS
& 183.22 & 12.82 & 712.74 & 4.55 & 159.86 & 9.26
& 180.31 & 5.09 & 630.66 & 5.37 & 143.58 & 1.92 \\
\rowcolor{gray!06}
CoRE
& \second{174.28} & 10.55 & \second{615.97} & \second{2.96} & 128.77 & \second{8.82}
& \second{174.52} & 5.22 & \second{600.45} & \second{4.98} & \second{138.12} & 1.78 \\
\midrule
\rowcolor{red!8}
\textbf{\scot (Ours)}
& \best{165.88} & \best{7.67} & \best{575.43} & \best{2.35} & \best{114.68} & \best{7.83}
& \best{158.95} & \best{3.12} & \best{538.23} & \best{3.37} & \best{130.04} & \best{1.24} \\
\rowcolor{blue!6}
\textit{$\Delta$ vs.\ best baseline}
& \cellcolor{blue!6}+4.8\% & \cellcolor{blue!18}\textbf{+26.9\%}
& \cellcolor{blue!8}+6.6\% & \cellcolor{blue!16}\textbf{+20.6\%}
& \cellcolor{blue!10}\textbf{+8.2\%} & \cellcolor{blue!12}\textbf{+11.2\%}
& \cellcolor{blue!10}\textbf{+8.9\%} & \cellcolor{blue!12}\textbf{+16.6\%}
& \cellcolor{blue!10}\textbf{+10.4\%} & \cellcolor{blue!22}\textbf{+32.3\%}
& \cellcolor{blue!8}+5.8\% & \cellcolor{blue!18}\textbf{+25.7\%} \\
\bottomrule
\end{tabular}}
\end{table*}

\begin{table*}[!t]
\centering
\caption{Single-source transfer results on BJ$\leftrightarrow$CD. 
Lower is better. \best{Red}: best; \second{Blue}: runner-up. 
Gain row reports \scot's relative improvement over the strongest baseline.}
\label{tab:bj_cd}
\setlength{\tabcolsep}{5pt}
\renewcommand{\arraystretch}{1.2}
\resizebox{\textwidth}{!}{%
\begin{tabular}{@{}l|cccccc|cccccc@{}}
\toprule
\multirow{3}{*}{\textbf{Method}}
& \multicolumn{6}{c|}{\textbf{BJ $\to$ CD}}
& \multicolumn{6}{c}{\textbf{CD $\to$ BJ}} \\
\cmidrule(lr){2-7} \cmidrule(lr){8-13}
& \multicolumn{2}{c}{GDP} & \multicolumn{2}{c}{Population} & \multicolumn{2}{c|}{CO$_2$}
& \multicolumn{2}{c}{GDP} & \multicolumn{2}{c}{Population} & \multicolumn{2}{c}{CO$_2$} \\
\cmidrule(lr){2-3} \cmidrule(lr){4-5} \cmidrule(lr){6-7}
\cmidrule(lr){8-9} \cmidrule(lr){10-11} \cmidrule(lr){12-13}
& MAE & MAPE & MAE & MAPE & MAE & MAPE
& MAE & MAPE & MAE & MAPE & MAE & MAPE \\
\midrule
Non-Alignment
& 191.36 & 5.31 & 889.37 & 3.97 & 192.64 & 15.77
& 163.93 & 7.04 & 805.78 & 5.07 & 192.96 & 1.98 \\
\rowcolor{gray!06}
RP
& 158.57 & 8.71 & 726.42 & 5.39 & 175.21 & 14.49
& 150.56 & 6.63 & 697.53 & 6.64 & 156.97 & 2.03 \\
HBP
& 155.05 & 8.30 & 698.75 & 4.80 & 172.65 & 13.22
& 166.30 & 7.29 & 715.27 & 6.84 & 154.77 & 1.79 \\
\rowcolor{gray!06}
HSA
& \second{147.74} & 6.87 & 667.95 & 4.20 & 160.10 & 11.55
& 148.87 & 6.59 & 648.28 & 5.75 & 166.55 & 2.64 \\
MMD
& 154.96 & 8.29 & 706.31 & 4.98 & 170.85 & 12.15
& \second{130.41} & \second{4.94} & \second{632.71} & \second{5.08} & \second{151.57} & \second{1.75} \\
\rowcolor{gray!06}
Adv
& 178.84 & 11.91 & 861.43 & 7.44 & 226.49 & 19.88
& 162.20 & 7.42 & 701.22 & 6.18 & 184.74 & 3.20 \\
CrossTReS
& 159.73 & \second{6.86} & \second{654.14} & \second{3.45} & 137.63 & 9.83
& 140.37 & 5.58 & 686.19 & 5.66 & 198.34 & 2.39 \\
\rowcolor{gray!06}
CoRE
& 150.51 & 7.57 & 680.81 & 4.34 & \second{126.66} & \second{9.66}
& 159.00 & 6.81 & 673.17 & 5.96 & 163.39 & 2.51 \\
\midrule
\rowcolor{red!8}
\textbf{\scot (Ours)}
& \best{135.63} & \best{3.55} & \best{597.80} & \best{2.38} & \best{121.21} & \best{8.94}
& \best{118.48} & \best{3.41} & \best{580.95} & \best{2.74} & \best{148.50} & \best{1.54} \\
\rowcolor{blue!6}
\textit{$\Delta$ vs.\ best baseline}
& \cellcolor{blue!10}\textbf{+8.2\%} & \cellcolor{blue!22}\textbf{+48.3\%}
& \cellcolor{blue!10}\textbf{+8.6\%} & \cellcolor{blue!18}\textbf{+31.0\%}
& \cellcolor{blue!6}+4.3\% & \cellcolor{blue!8}+7.5\%
& \cellcolor{blue!10}\textbf{+9.1\%} & \cellcolor{blue!18}\textbf{+31.0\%}
& \cellcolor{blue!10}\textbf{+8.2\%} & \cellcolor{blue!22}\textbf{+46.1\%}
& \cellcolor{blue!6}+2.0\% & \cellcolor{blue!12}\textbf{+12.0\%} \\
\bottomrule
\end{tabular}}
\end{table*}

\subsection{Additional Results on XA$\leftrightarrow$BJ (4 Random Seeds)}
\label{app:seed4}

To verify that \scot's gains are not artifacts of seed selection, we 
report single-source transfer in both directions (XA$\to$BJ and 
BJ$\to$XA) under \emph{four random seeds}, with mean $\pm$ standard 
deviation of MAE and MAPE for GDP, population, and CO$_2$ 
(Tables~\ref{tab:xa2bj_seed4},~\ref{tab:bj2xa_seed4}). \scot achieves 
\textbf{the lowest mean error on every metric in both directions}, with 
\emph{the smallest standard deviation} on most entries, indicating that 
the improvements are both consistent and statistically robust rather 
than driven by favorable seeds.

\begin{table*}[!t]
\centering
\caption{XA$\to$BJ results averaged over 4 random seeds (mean $\pm$ std). 
Lower is better. \best{Red}: best; \second{Blue}: runner-up. 
Gain row reports \scot's relative improvement over the strongest baseline.}
\label{tab:xa2bj_seed4}
\setlength{\tabcolsep}{5pt}
\renewcommand{\arraystretch}{1.2}
\resizebox{\textwidth}{!}{%
\begin{tabular}{@{}l|cc|cc|cc@{}}
\toprule
\multirow{2}{*}{\textbf{Method}}
& \multicolumn{2}{c|}{\textbf{GDP}}
& \multicolumn{2}{c|}{\textbf{Population}}
& \multicolumn{2}{c}{\textbf{CO$_2$}}\\
\cmidrule(lr){2-3}\cmidrule(lr){4-5}\cmidrule(lr){6-7}
& MAE$\downarrow$ & MAPE$\downarrow$
& MAE$\downarrow$ & MAPE$\downarrow$
& MAE$\downarrow$ & MAPE$\downarrow$ \\
\midrule
Non-Alignment
& 276.33 $\pm$ 10.63 & 9.46 $\pm$ 1.98
& 958.52 $\pm$ 31.66 & 5.99 $\pm$ 1.82
& 278.25 $\pm$ 9.95 & 5.39 $\pm$ 0.75 \\
\rowcolor{gray!06}
RP
& 192.05 $\pm$ 26.62 & 6.65 $\pm$ 1.48
& 676.21 $\pm$ 28.24 & 4.00 $\pm$ 0.66
& 194.70 $\pm$ 11.88 & 3.71 $\pm$ 0.22 \\
HBP
& 186.21 $\pm$ 10.14 & 8.17 $\pm$ 0.88
& 664.15 $\pm$ 24.83 & 4.68 $\pm$ 0.89
& 188.14 $\pm$ 5.39 & 3.99 $\pm$ 0.20 \\
\rowcolor{gray!06}
HSA
& 179.96 $\pm$ 17.62 & 7.30 $\pm$ 1.00
& 631.69 $\pm$ 27.93 & 4.64 $\pm$ 1.19
& 180.21 $\pm$ 8.29 & 3.57 $\pm$ 0.64 \\
MMD
& 162.63 $\pm$ 16.27 & \second{5.93 $\pm$ 0.90}
& \second{596.60 $\pm$ 21.41} & \second{3.63 $\pm$ 0.66}
& \second{169.99 $\pm$ 7.14} & \second{2.91 $\pm$ 0.46} \\
\rowcolor{gray!06}
Adv
& 200.33 $\pm$ 13.90 & 8.98 $\pm$ 0.60
& 694.64 $\pm$ 9.39 & 6.15 $\pm$ 1.04
& 199.99 $\pm$ 2.61 & 4.63 $\pm$ 0.37 \\
CrossTReS
& 194.87 $\pm$ 28.96 & 7.28 $\pm$ 0.90
& 629.37 $\pm$ 22.46 & 4.29 $\pm$ 0.18
& 182.88 $\pm$ 9.74 & 3.59 $\pm$ 0.28 \\
\rowcolor{gray!06}
CoRE
& \second{159.53 $\pm$ 14.64} & 6.19 $\pm$ 1.65
& 607.79 $\pm$ 39.24 & 4.19 $\pm$ 1.13
& 170.55 $\pm$ 11.99 & 3.12 $\pm$ 0.67 \\
\midrule
\rowcolor{red!8}
\textbf{\scot (Ours)}
& \best{120.25 $\pm$ 7.30} & \best{3.59 $\pm$ 0.48}
& \best{527.04 $\pm$ 6.38} & \best{2.17 $\pm$ 0.23}
& \best{149.20 $\pm$ 1.58} & \best{1.80 $\pm$ 0.17} \\
\rowcolor{blue!6}
\textit{$\Delta$ vs.\ best baseline}
& \cellcolor{blue!18}\textbf{+24.6\%} & \cellcolor{blue!22}\textbf{+39.5\%}
& \cellcolor{blue!12}\textbf{+11.7\%} & \cellcolor{blue!22}\textbf{+40.2\%}
& \cellcolor{blue!12}\textbf{+12.2\%} & \cellcolor{blue!22}\textbf{+38.1\%} \\
\bottomrule
\end{tabular}}
\end{table*}

\begin{table*}[!t]
\centering
\caption{BJ$\to$XA results averaged over 4 random seeds (mean $\pm$ std). 
Lower is better. \best{Red}: best; \second{Blue}: runner-up. 
Gain row reports \scot's relative improvement over the strongest baseline.}
\label{tab:bj2xa_seed4}
\setlength{\tabcolsep}{5pt}
\renewcommand{\arraystretch}{1.2}
\resizebox{\textwidth}{!}{%
\begin{tabular}{@{}l|cc|cc|cc@{}}
\toprule
\multirow{2}{*}{\textbf{Method}}
& \multicolumn{2}{c|}{\textbf{GDP}}
& \multicolumn{2}{c|}{\textbf{Population}}
& \multicolumn{2}{c}{\textbf{CO$_2$}}\\
\cmidrule(lr){2-3}\cmidrule(lr){4-5}\cmidrule(lr){6-7}
& MAE$\downarrow$ & MAPE$\downarrow$
& MAE$\downarrow$ & MAPE$\downarrow$
& MAE$\downarrow$ & MAPE$\downarrow$ \\
\midrule
Non-Alignment
& 220.31 $\pm$ 22.10 & 7.91 $\pm$ 1.43
& 975.83 $\pm$ 65.55 & 9.63 $\pm$ 1.27
& 276.84 $\pm$ 18.56 & 9.70 $\pm$ 1.14 \\
\rowcolor{gray!06}
RP
& 175.21 $\pm$ 4.15 & 4.60 $\pm$ 0.53
& 671.08 $\pm$ 11.95 & 6.37 $\pm$ 0.22
& 191.56 $\pm$ 3.33 & 6.30 $\pm$ 0.24 \\
HBP
& 180.60 $\pm$ 13.16 & 2.97 $\pm$ 0.82
& 629.09 $\pm$ 5.07 & 4.95 $\pm$ 0.39
& 179.17 $\pm$ 2.76 & 4.91 $\pm$ 0.45 \\
\rowcolor{gray!06}
HSA
& 176.42 $\pm$ 10.52 & 3.66 $\pm$ 1.35
& 649.08 $\pm$ 16.82 & 5.66 $\pm$ 0.62
& 185.72 $\pm$ 4.29 & 5.62 $\pm$ 0.57 \\
MMD
& 180.71 $\pm$ 5.39 & \second{2.25 $\pm$ 0.22}
& \second{500.12 $\pm$ 27.35} & \second{1.93 $\pm$ 0.09}
& \second{141.24 $\pm$ 4.96} & \second{1.91 $\pm$ 0.08} \\
\rowcolor{gray!06}
Adv
& 192.06 $\pm$ 6.28 & 6.15 $\pm$ 0.34
& 778.54 $\pm$ 30.34 & 8.53 $\pm$ 0.62
& 212.72 $\pm$ 9.74 & 7.84 $\pm$ 0.62 \\
CrossTReS
& 165.18 $\pm$ 3.45 & 3.98 $\pm$ 0.22
& 627.96 $\pm$ 8.88 & 5.18 $\pm$ 0.32
& 179.42 $\pm$ 2.30 & 5.16 $\pm$ 0.29 \\
\rowcolor{gray!06}
CoRE
& \second{162.64 $\pm$ 5.07} & 2.80 $\pm$ 0.80
& 576.95 $\pm$ 25.66 & 3.43 $\pm$ 1.15
& 164.26 $\pm$ 8.68 & 3.44 $\pm$ 1.18 \\
\midrule
\rowcolor{red!8}
\textbf{\scot (Ours)}
& \best{160.21 $\pm$ 3.53} & \best{1.87 $\pm$ 0.18}
& \best{450.14 $\pm$ 2.81} & \best{1.73 $\pm$ 0.12}
& \best{127.79 $\pm$ 1.08} & \best{1.78 $\pm$ 0.10} \\
\rowcolor{blue!6}
\textit{$\Delta$ vs.\ best baseline}
& \cellcolor{blue!6}+1.8\% & \cellcolor{blue!12}\textbf{+16.9\%}
& \cellcolor{blue!10}\textbf{+10.0\%} & \cellcolor{blue!10}\textbf{+10.4\%}
& \cellcolor{blue!10}\textbf{+9.5\%} & \cellcolor{blue!8}+6.8\% \\
\bottomrule
\end{tabular}}
\end{table*}

\subsection{Additional Multi-Source Results (Target: CD)}
\label{app:multisource_cd}

Table~\ref{tab:multi_source_cd_app} reports multi-source transfer 
with Chengdu (CD) as target. \scot achieves the \textbf{best performance 
on all three tasks} under both MAE and MAPE, with the largest gains on 
\emph{population and CO}$_2$. When existing single-source alignment and 
distribution-matching strategies are naively extended to multiple sources, 
they yield only limited improvements. This confirms that effective 
multi-source transfer is not a matter of summing per-source losses, but 
requires the \emph{coordinated aggregation} provided by \scot's shared hub.

\begin{table}[H]
\centering
\footnotesize
\caption{Multi-source transfer results (Target: CD). Lower is better. 
\best{Red}: best; \second{Blue}: runner-up. Gain row reports \scot's 
relative improvement over the strongest baseline.}
\label{tab:multi_source_cd_app}
\vspace{1em}
\setlength{\tabcolsep}{6pt}
\renewcommand{\arraystretch}{1.15}
\begin{tabular}{@{}l|cc|cc|cc@{}}
\toprule
\multirow{2}{*}{\textbf{Method}}
& \multicolumn{2}{c|}{\textbf{GDP}}
& \multicolumn{2}{c|}{\textbf{Population}}
& \multicolumn{2}{c}{\textbf{CO$_2$}}\\
\cmidrule(lr){2-3}\cmidrule(lr){4-5}\cmidrule(lr){6-7}
& MAE$\downarrow$ & MAPE$\downarrow$
& MAE$\downarrow$ & MAPE$\downarrow$
& MAE$\downarrow$ & MAPE$\downarrow$ \\
\midrule
RP         & 187.13 & 13.31 & 630.44 & 3.62 & 167.76 & 14.05 \\
\rowcolor{gray!06}
HBP        & 176.35 & 11.11 & 631.26 & 3.69 & 125.97 & 9.96 \\
HSA        & 180.71 & 8.18  & 651.06 & \second{3.41} & 159.86 & 11.79 \\
\rowcolor{gray!06}
MMD        & 163.82 & 5.07  & 639.32 & 3.93 & 145.74 & 10.83 \\
Adv        & 183.26 & 12.43 & 687.72 & 4.73 & 156.37 & 12.92 \\
\rowcolor{gray!06}
CrossTReS  & 167.53 & 10.26 & 668.00 & 4.37 & 139.22 & 11.84 \\
CoRE       & \second{156.10} & \second{4.40} & \second{621.11} & 3.44 & \second{121.33} & \second{9.57} \\
\midrule
\rowcolor{red!8}
\textbf{\scot (Ours)}
& \best{133.94} & \best{3.82} & \best{546.82} & \best{2.43} & \best{98.43} & \best{5.10} \\
\rowcolor{blue!6}
\textit{$\Delta$ vs.\ best baseline}
& \cellcolor{blue!12}\textbf{+14.1\%} & \cellcolor{blue!12}\textbf{+13.1\%}
& \cellcolor{blue!12}\textbf{+11.9\%} & \cellcolor{blue!18}\textbf{+28.7\%}
& \cellcolor{blue!16}\textbf{+18.8\%} & \cellcolor{blue!22}\textbf{+46.7\%} \\
\bottomrule
\end{tabular}
\end{table}

\subsection{Empirical Check of the Proposition}
\label{app:theory_empirical_check}

To complement Proposition~\ref{prop:contrastive_mae}, we test its qualitative 
mechanism by relating target error $y$ to both alignment terms in a 
\emph{joint regression}:
\[
y \;=\; \beta_0 + \beta_1\,\mathcal{L}_{\mathrm{Con}} + 
\beta_2\,\mathcal{L}_{\mathrm{OT}} + \varepsilon.
\]
This isolates the effect of $\mathcal{L}_{\mathrm{Con}}$ on target error 
\emph{after controlling for $\mathcal{L}_{\mathrm{OT}}$}, addressing the 
concern that the two alignment terms might be confounded. As shown in 
Table~\ref{tab:theory_empirical_check}, the standardized coefficient on 
$\mathcal{L}_{\mathrm{Con}}$ is \textbf{0.77}, and its partial 
Pearson/Spearman correlations remain at \textbf{0.95/0.93} after 
controlling for $\mathcal{L}_{\mathrm{OT}}$. This empirically supports 
the proposition's qualitative message: \emph{stronger contrastive alignment 
is closely associated with lower target error, independently of OT 
alignment quality}.

\begin{table}[H]
\centering
\footnotesize
\caption{Empirical verification of Proposition~\ref{prop:contrastive_mae}. 
The standardized coefficient on $\mathcal{L}_{\mathrm{Con}}$ remains 
\textbf{strongly positive after controlling for $\mathcal{L}_{\mathrm{OT}}$}, 
and partial Pearson/Spearman correlations both exceed $0.93$ with 
$p<0.001$.}
\vspace{1em}
\label{tab:theory_empirical_check}
\setlength{\tabcolsep}{10pt}
\renewcommand{\arraystretch}{1.3}
\begin{tabular}{@{}cc|ccc@{}}
\toprule
\multicolumn{2}{c|}{\textbf{Joint OLS:} $y \sim \mathcal{L}_{\mathrm{Con}} + \mathcal{L}_{\mathrm{OT}}$}
& \multicolumn{3}{c}{\textbf{Partial correlation with $\mathcal{L}_{\mathrm{Con}}$}} \\
& & \multicolumn{3}{c}{\footnotesize\textit{(controlling for $\mathcal{L}_{\mathrm{OT}}$)}} \\
\cmidrule(lr){1-2}\cmidrule(lr){3-5}
Std.\ coef.\ on $\mathcal{L}_{\mathrm{Con}}$ & Adj.\ $R^2$
& Pearson $r$ & Spearman $\rho$ & $p$-value \\
\midrule
\rowcolor{yellow!6}
\cellcolor{yellow!18}\textbf{0.77}
& \cellcolor{yellow!18}\textbf{0.94}
& \cellcolor{yellow!16}\textbf{0.95}
& \cellcolor{yellow!16}\textbf{0.93}
& \cellcolor{yellow!12}\textbf{$<\!0.001$} \\
\bottomrule
\end{tabular}
\end{table}

\section{Additional OT Coupling Diagnostics}
\label{app:ot_diagnostics}

This appendix extends the diagnostics in Section~\ref{sec:diagnostics} 
with two further checks on the entropic OT coupling 
$\mathbf{P}\in\mathbb{R}_{+}^{n_s\times n_t}$.

\cIII~\textbf{Marginal-mass distribution.} To assess hubness, we monitor 
the column marginals $c_j=\sum_{i}P_{ij}$, which capture how much total 
mass each target region absorbs. As shown in 
Fig.~\ref{fig:ot_marginal_entropy} (XA$\to$BJ, epoch 100), $c_j$ is 
\emph{broadly spread without extreme spikes}, ruling out the failure mode 
where a few target regions act as ``hubs'' absorbing most mass from many 
sources.

\cIV~\textbf{Row/column entropy.} On the row- and column-normalized 
versions of $\mathbf{P}$, we compute $H(P_{i,:})$ and $H(P_{:,j})$: 
\emph{low entropy} indicates sharp, confident correspondences, 
\emph{high entropy} reflects diffuse, uncertain matching. The resulting 
histograms (Fig.~\ref{fig:ot_marginal_entropy}) are \textbf{multi-modal}, 
mixing sharp and diffuse matches---this is precisely the signature of 
\emph{selective alignment}: transferable regions are confidently aligned, 
while ambiguous or city-specific regions remain conservatively matched 
rather than forced into spurious correspondences.

\begin{figure}[H]
    \centering
    \vspace{-1em}
    \includegraphics[width=0.8\linewidth]{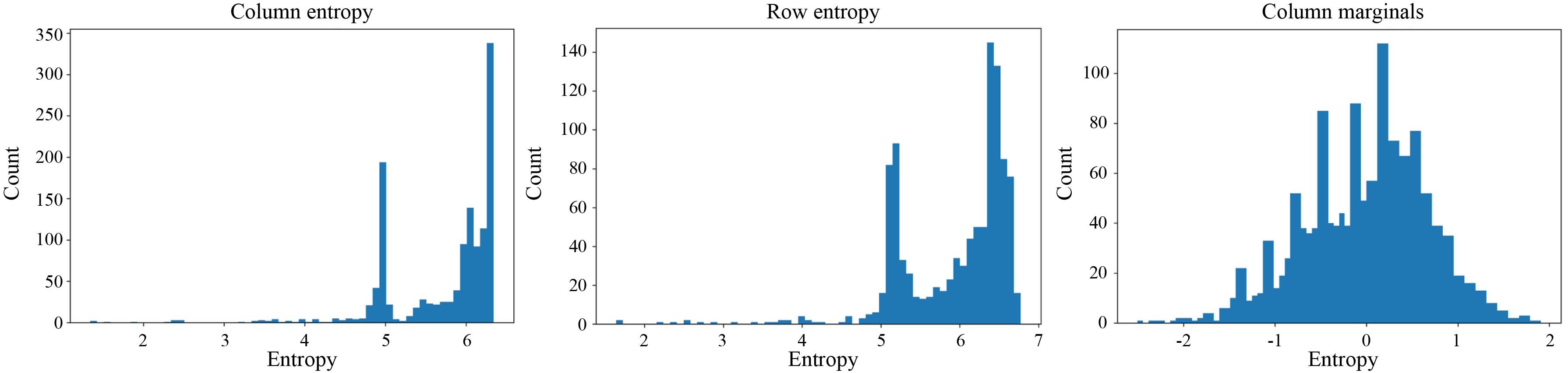}
    \caption{\textbf{OT coupling diagnostics.} Histograms of column entropy $H(P_{:,j})$, row entropy $H(P_{i,:})$, and column marginals $c_j=\sum_i P_{ij}$ (here shown for a representative epoch).}
    \label{fig:ot_marginal_entropy}
\end{figure}

\section{Ablation Study}
\label{app:ablation_study}

\subsection{Ablation Study for Single Source SCOT}

We ablate the three components of \scot: the OT alignment loss 
$\mathcal{L}_{\mathrm{OT}}$, the OT-weighted contrastive loss 
$\mathcal{L}_{\mathrm{con}}$, and the reconstruction regularizer 
$\mathcal{L}_{\mathrm{rec}}$. Fig.~\ref{fig:ablation_transfer} reports 
MAE and MAPE on GDP, population, and CO$_2$ across six transfer 
directions, comparing the full model against variants with each 
component removed. \abI~\textbf{Removing $\mathcal{L}_{\mathrm{OT}}$ causes the largest 
drop}, confirming that OT-based \emph{mass-controlled soft 
correspondence} is the load-bearing mechanism for cross-city alignment.
\abII~\textbf{Excluding $\mathcal{L}_{\mathrm{con}}$ consistently 
degrades results}, indicating that contrastive sharpening is essential 
for converting geometric correspondence into \emph{discriminative 
semantic structure}.
\abIII~\textbf{Removing $\mathcal{L}_{\mathrm{rec}}$ harms performance 
more mildly}, supporting its role as a \emph{stabilizing regularizer} 
rather than a primary alignment driver. Overall, the three components are \textbf{strictly complementary}.

\begin{figure}[H]
    \centering
    \includegraphics[width=0.8\textwidth]{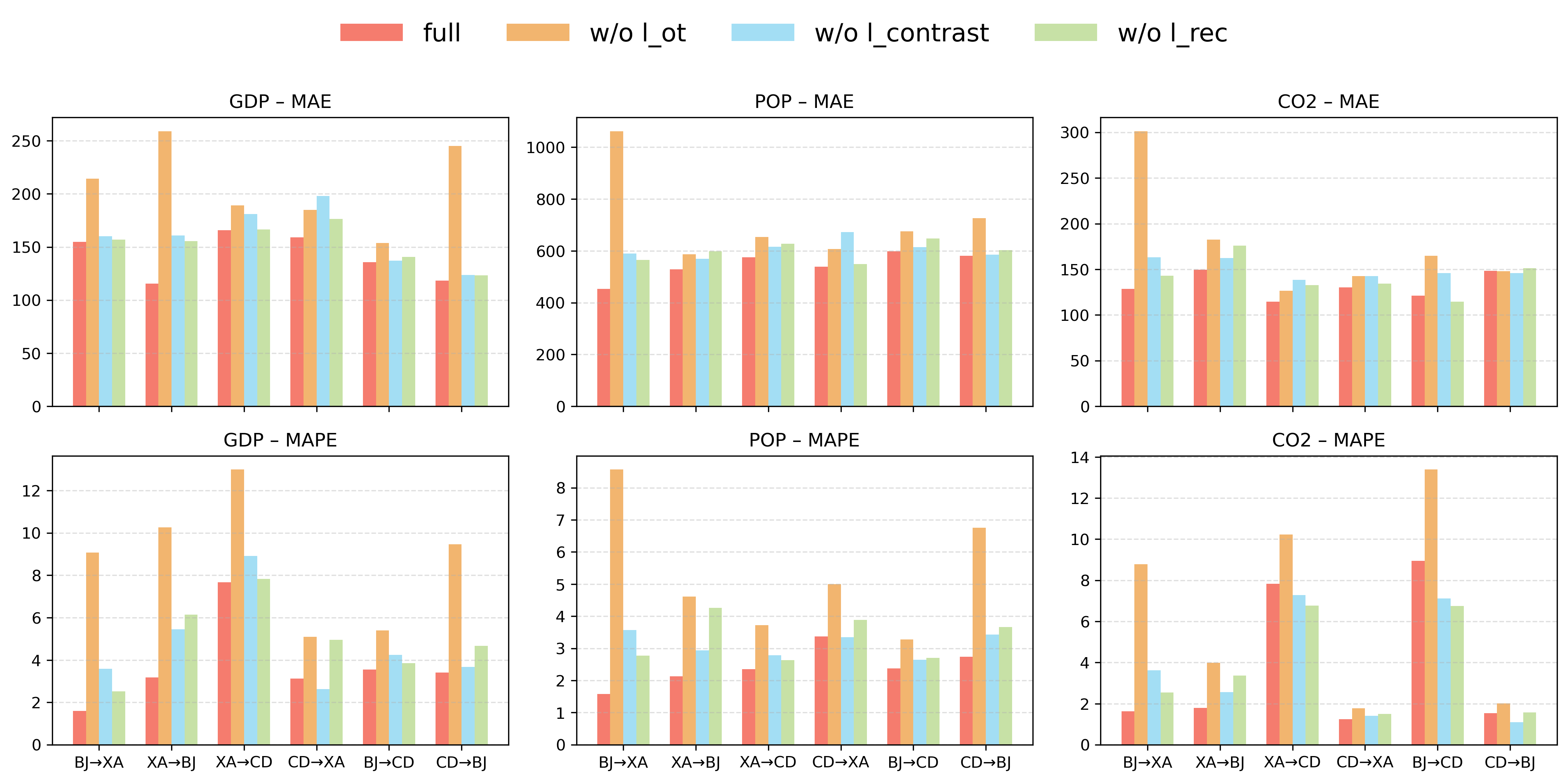}
    \caption{
    \textbf{Ablation study on cross-city transfer performance.}
    We report MAE (top row) and MAPE (bottom row) for GDP, population, and CO$_2$ prediction across six transfer directions
    (BJ$\rightarrow$XA, XA$\rightarrow$BJ, XA$\rightarrow$CD, CD$\rightarrow$XA, BJ$\rightarrow$CD, CD$\rightarrow$BJ).
    }
    \label{fig:ablation_transfer}
\end{figure}

\subsection{Ablation: Hub vs.\ Pairwise OT with Global Gating (Multi-source)}
\label{app:ablation_hub_vs_pairwise}

To isolate the contribution of the \textbf{shared-prototype hub} in our multi-source setting, we compare
(i) \textbf{Ours (Hub)}: aligning both sources and the target to a shared set of $K$ prototypes (shared semantic hub),
versus (ii) \textbf{No Hub (Pairwise)}: aligning each source to the target using pairwise entropic OT and combining the
two transfer objectives with a global learnable gate.
The goal of this ablation is to test whether introducing a shared latent semantic space improves stability and
effectivenes of multi-source transfer, beyond simply averaging (or gating) two independent source$\to$target alignments.
We report downstream prediction performance on three targets (XA, CD, BJ), each averaged over \textbf{4 random seeds}
(mean $\pm$ standard deviation). Lower is better.

\begin{table}[H]
\centering
\footnotesize
\caption{Multi-source ablation: shared-prototype \textbf{Hub} 
vs.\ pairwise OT with global gating (\textbf{No Hub}), across three 
targets (mean $\pm$ std over 4 seeds). \best{Red}: better in each pair. 
Lower is better.}
\label{tab:ablation_hub_pairwise_multi}
\setlength{\tabcolsep}{5pt}
\renewcommand{\arraystretch}{1.2}
\begin{tabular}{@{}l|cc|cc|cc@{}}
\toprule
\multirow{2}{*}{\textbf{Target / Variant}}
& \multicolumn{2}{c|}{\textbf{GDP}}
& \multicolumn{2}{c|}{\textbf{Population}}
& \multicolumn{2}{c}{\textbf{CO$_2$}} \\
\cmidrule(lr){2-3}\cmidrule(lr){4-5}\cmidrule(lr){6-7}
& MAE$\downarrow$ & MAPE$\downarrow$
& MAE$\downarrow$ & MAPE$\downarrow$
& MAE$\downarrow$ & MAPE$\downarrow$ \\
\midrule

\rowcolor{cyan!18}
\multicolumn{7}{@{}l}{\textcolor{cyan!50!black}{$\blacktriangleright$}\ \textbf{\textsc{Target: Xi'an (XA)}}} \\
\midrule
\rowcolor{cyan!8}
\hspace{0.5em}\textbf{Ours (Hub)}
& \best{154.49 $\pm$ 2.10} & \best{2.12 $\pm$ 0.31}
& \best{467.54 $\pm$ 17.42} & \best{2.23 $\pm$ 0.25}
& \best{133.58 $\pm$ 4.64} & \best{1.51 $\pm$ 0.18} \\
\hspace{0.5em}No Hub (Pairwise)
& 157.45 $\pm$ 3.31 & 2.52 $\pm$ 0.48
& 511.37 $\pm$ 23.34 & 2.98 $\pm$ 0.64
& 135.56 $\pm$ 6.90 & 1.71 $\pm$ 0.35 \\

\midrule

\rowcolor{orange!18}
\multicolumn{7}{@{}l}{\textcolor{orange!60!black}{$\blacktriangleright$}\ \textbf{\textsc{Target: Chengdu (CD)}}} \\
\midrule
\rowcolor{orange!8}
\hspace{0.5em}\textbf{Ours (Hub)}
& \best{143.91 $\pm$ 6.84} & \best{4.54 $\pm$ 0.49}
& \best{565.65 $\pm$ 14.55} & 2.38 $\pm$ 0.09
& \best{102.20 $\pm$ 11.20} & 5.92 $\pm$ 0.67 \\
\hspace{0.5em}No Hub (Pairwise)
& 146.72 $\pm$ 7.48 & 6.00 $\pm$ 1.03
& 585.49 $\pm$ 15.77 & \best{2.17 $\pm$ 0.20}
& 105.96 $\pm$ 2.38 & \best{5.87 $\pm$ 0.47} \\

\midrule

\rowcolor{purple!18}
\multicolumn{7}{@{}l}{\textcolor{purple!60!black}{$\blacktriangleright$}\ \textbf{\textsc{Target: Beijing (BJ)}}} \\
\midrule
\rowcolor{purple!8}
\hspace{0.5em}\textbf{Ours (Hub)}
& \best{110.40 $\pm$ 5.34} & \best{3.30 $\pm$ 0.50}
& \best{533.11 $\pm$ 10.52} & \best{2.98 $\pm$ 0.74}
& \best{145.72 $\pm$ 3.58} & \best{1.46 $\pm$ 0.21} \\
\hspace{0.5em}No Hub (Pairwise)
& 140.86 $\pm$ 15.09 & 5.15 $\pm$ 1.18
& 580.37 $\pm$ 20.59 & 4.20 $\pm$ 0.55
& 152.83 $\pm$ 4.38 & 1.92 $\pm$ 0.27 \\

\bottomrule
\end{tabular}
\end{table}

\subsection{Ablation: Effect of Target-Induced Prototype Prior}
\label{subsec:ablationA_bt}

The target-induced marginal $\mathbf{b}\in\Delta^{K-1}$ in 
Eq.~\eqref{eq:bt_from_target} aggregates target--prototype cosine similarity:
\[
\bar{s}_k = \frac{1}{n_t}\sum_{j=1}^{n_t}\tilde{\mathbf{z}}_j^{t\top}\tilde{\mathbf{a}}_k,
\qquad
b_k = \frac{\max\{\exp(\bar{s}_k/\tau_b),\,\epsilon_b\}}
           {\sum_{\ell}\max\{\exp(\bar{s}_\ell/\tau_b),\,\epsilon_b\}}.
\]
Without target guidance, a uniform $\mathbf{b}$ forces equal mass on 
every prototype, pushing transport onto \emph{target-irrelevant} 
prototypes under heterogeneity and causing semantic dilution. We 
compare three variants of $\mathbf{b}$: \textbf{Uniform} (equal mass 
$1/K$, no target signal), \textbf{Frozen} (initialized from early 
target representations and held fixed), and \textbf{Adaptive (Ours)} 
(updated online as the target encoder improves).

\textbf{The adaptive prior wins on every metric}, with the largest 
margins on Population and CO$_2$ MAPE 
(Table~\ref{tab:ablationA_bt}). Fig.~\ref{fig:ablationA_bt_entropy} 
explains \emph{why}: while the uniform prior keeps entropy pinned at 
$\log K$ throughout training, the adaptive prior's entropy 
\textbf{decreases monotonically}, reflecting \emph{progressive 
prototype specialization} guided by target semantics. Frozen 
specialization---fixing $\mathbf{b}$ early---yields a worse 
intermediate point than either extreme.

\begin{table}[H]
\centering
\footnotesize
\caption{Ablation on the target-induced prototype marginal $\mathbf{b}$ 
(XA as target, 4 seeds, mean $\pm$ std). The \textbf{adaptive} prior 
wins on every metric. Lower is better.}
\vspace{1em}
\label{tab:ablationA_bt}
\setlength{\tabcolsep}{3pt}
\renewcommand{\arraystretch}{1.2}
\begin{tabular}{@{}l|cc|cc|cc@{}}
\toprule
\multirow{2}{*}{\textbf{Prior Variant}}
& \multicolumn{2}{c|}{\textbf{GDP}}
& \multicolumn{2}{c|}{\textbf{Population}}
& \multicolumn{2}{c}{\textbf{CO$_2$}} \\
\cmidrule(lr){2-3}\cmidrule(lr){4-5}\cmidrule(lr){6-7}
& MAE$\downarrow$ & MAPE$\downarrow$
& MAE$\downarrow$ & MAPE$\downarrow$
& MAE$\downarrow$ & MAPE$\downarrow$ \\
\midrule
Uniform {\scriptsize\textit{(no target)}}
& 186.33 $\pm$ 3.93 & 3.42 $\pm$ 0.14
& 575.81 $\pm$ 17.09 & 3.27 $\pm$ 0.33
& 156.56 $\pm$ 10.13 & 1.85 $\pm$ 0.18 \\
\rowcolor{gray!06}
Frozen {\scriptsize\textit{(fixed)}}
& 181.47 $\pm$ 3.16 & 4.79 $\pm$ 0.19
& 553.05 $\pm$ 8.68 & 3.81 $\pm$ 0.10
& 148.89 $\pm$ 0.67 & 2.83 $\pm$ 0.02 \\
\midrule
\rowcolor{red!8}
\textbf{Adaptive (Ours)}
& \best{154.49 $\pm$ 2.10} & \best{2.12 $\pm$ 0.31}
& \best{467.54 $\pm$ 17.42} & \best{2.23 $\pm$ 0.25}
& \best{133.58 $\pm$ 4.64} & \best{1.51 $\pm$ 0.18} \\
\rowcolor{blue!6}
\textit{$\Delta$ vs.\ Uniform}
& \cellcolor{blue!16}\textbf{+17.1\%} & \cellcolor{blue!22}\textbf{+38.0\%}
& \cellcolor{blue!16}\textbf{+18.8\%} & \cellcolor{blue!18}\textbf{+31.8\%}
& \cellcolor{blue!12}\textbf{+14.7\%} & \cellcolor{blue!12}\textbf{+18.4\%} \\
\bottomrule
\end{tabular}
\end{table}

\begin{figure}[H]
    \centering
    \includegraphics[width=0.5\linewidth]{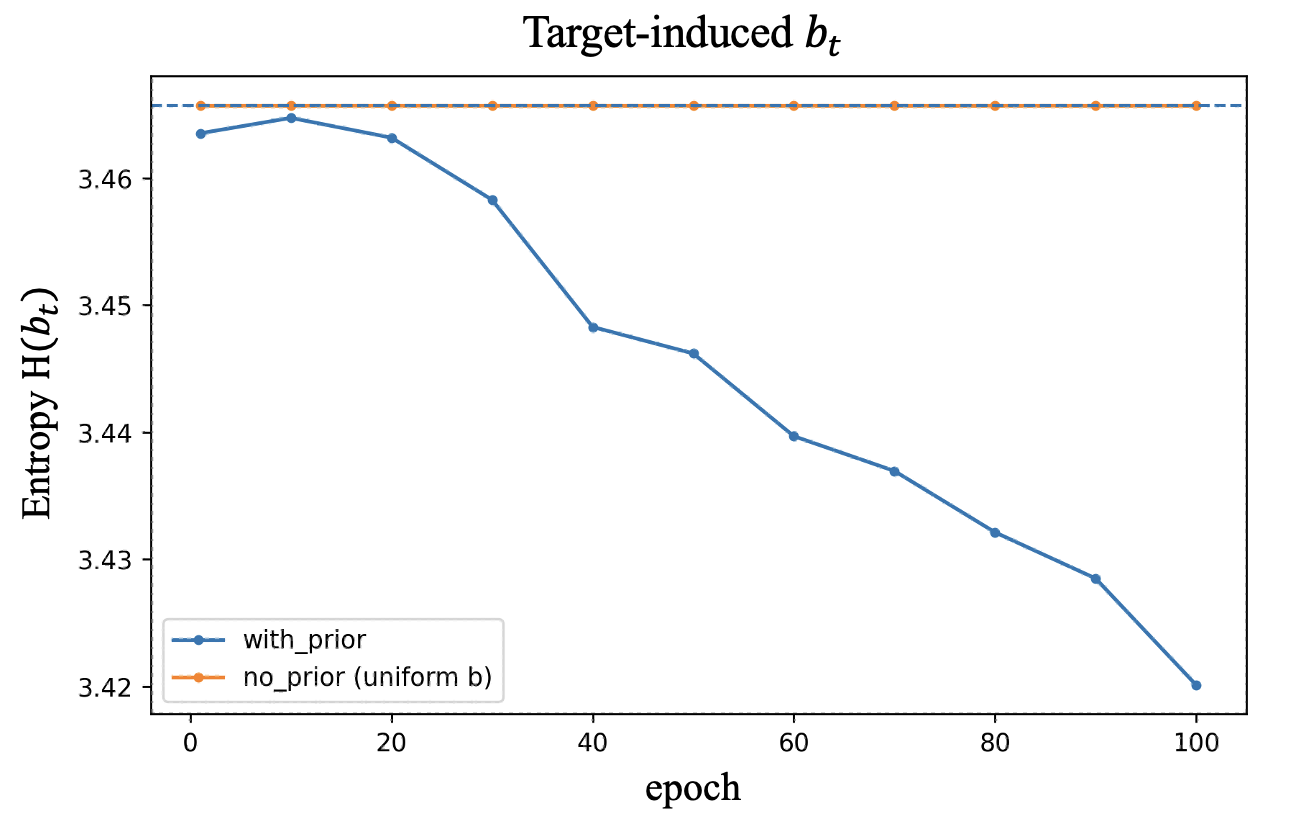}
    \caption{Entropy of prototype marginal $b_t$ over training. The uniform prior
    stays at $\log K$; the adaptive prior decreases steadily, indicating progressive
    prototype specialization guided by target semantics.}
    \label{fig:ablationA_bt_entropy}
\end{figure}

\subsection{Ablation: Balanced vs. Unbalanced OT}

\label{app:uot_vs_bot}

In hub-based alignment, balanced OT enforces exact mass conservation 
($\sum_k\Pi_{ik}=a_i$, $\sum_i\Pi_{ik}=b_k$), while unbalanced OT relaxes 
these constraints via a KL penalty $\rho$. We compare both designs to 
determine which is better suited to hub prototypes.
\uotI~\textbf{Small $\rho$ creates illusory sharpness via mass inflation.} 
Fig.~\ref{fig:uot_vs_balanced} shows that small $\rho$ yields sharper 
early assignments (higher $q_{\max}$), but at the cost of 
$\sum_{i,k}\Pi_{ik}\!\gg\!1$---a \emph{non-physical duplication} of mass 
rather than improved semantic matching.
\uotII~\textbf{Balanced OT is both more accurate and more stable.} 
Table~\ref{tab:ot_balanced_unbalanced_bj2xa} confirms this quantitatively: 
balanced OT achieves the \emph{lowest MAE on all three tasks and the 
lowest variance across seeds}. Unbalanced OT is highly sensitive to 
$\rho$---small values under-align and produce large errors, larger values 
partially recover accuracy but remain unstable.
\uotIII~\textbf{Hub prototypes already absorb the heterogeneity.} 
The flexibility that unbalanced OT provides is unnecessary here: the hub 
serves as a soft intermediate support that handles cross-city differences 
naturally. Enforcing full mass preservation avoids \emph{discarding 
hard-to-match regions}, yielding more reliable transfer.

\begin{figure}[H]

    \centering
    \begin{minipage}[t]{0.48\linewidth}
        \centering
        \includegraphics[width=\linewidth]{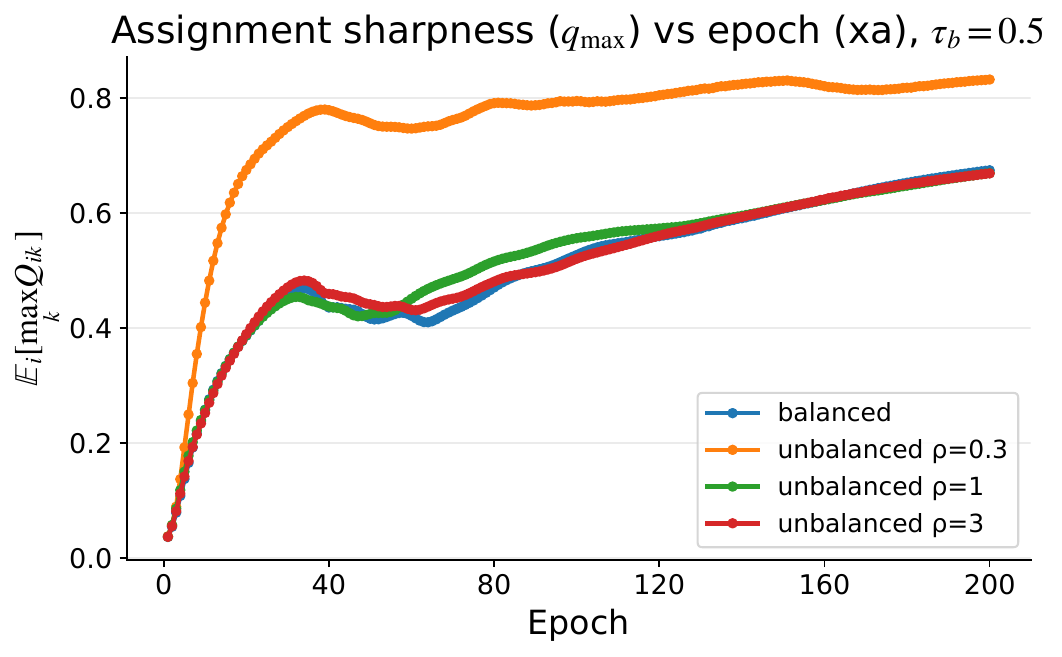}
        \vspace{-1.2em}
        \caption*{(a) Assignment sharpness $q_{\max}$}
    \end{minipage}\hfill
    \begin{minipage}[t]{0.48\linewidth}
        \centering
        \includegraphics[width=\linewidth]{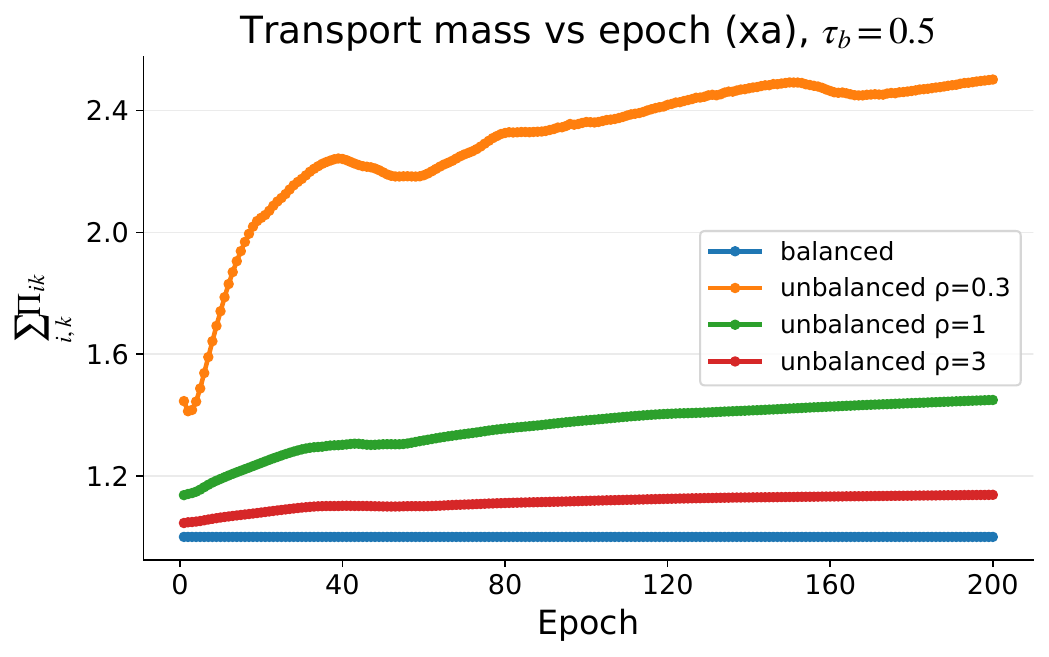}
        \vspace{-1.2em}
        \caption*{(b) Transport mass $\sum_{i,k}\Pi_{ik}$}
    \end{minipage}
    \caption{
    Comparison of balanced and unbalanced OT in the hub
    (CD,BJ$\rightarrow$XA, $\tau_b=0.5$).
    }
    \label{fig:uot_vs_balanced}
\end{figure}

\begin{table}[H]
\vspace{-2em}
\centering
\footnotesize
\caption{Balanced vs.\ unbalanced OT on BJ$\to$XA (4 seeds, mean $\pm$ std). 
Balanced OT achieves the best mean and lowest variance on every metric, 
while unbalanced OT requires careful tuning of $\rho$ to approach competitive 
accuracy. \best{Red}: best per column. Lower is better.}
\label{tab:ot_balanced_unbalanced_bj2xa}
\setlength{\tabcolsep}{3pt}
\renewcommand{\arraystretch}{1.2}
\begin{tabular}{@{}l|cc|cc|cc@{}}
\toprule
\multirow{2}{*}{\textbf{OT Variant}}
& \multicolumn{2}{c|}{\textbf{GDP}}
& \multicolumn{2}{c|}{\textbf{Population}}
& \multicolumn{2}{c}{\textbf{CO$_2$}} \\
\cmidrule(lr){2-3}\cmidrule(lr){4-5}\cmidrule(lr){6-7}
& MAE$\downarrow$ & MAPE$\downarrow$
& MAE$\downarrow$ & MAPE$\downarrow$
& MAE$\downarrow$ & MAPE$\downarrow$ \\
\midrule
\rowcolor{orange!12}
Unbalanced OT {\scriptsize($\rho=0.3$)}
& 212.96 $\pm$ 27.46 & 7.22 $\pm$ 1.16
& 721.83 $\pm$ 95.63 & 4.80 $\pm$ 0.69
& 174.77 $\pm$ 41.48 & 3.17 $\pm$ 1.29 \\
\rowcolor{orange!8}
Unbalanced OT {\scriptsize($\rho=1$)}
& 173.59 $\pm$ 13.68 & 4.16 $\pm$ 1.54
& 563.30 $\pm$ 33.74 & 3.49 $\pm$ 1.09
& 149.69 $\pm$ 9.89 & 1.90 $\pm$ 0.48 \\
\rowcolor{orange!4}
Unbalanced OT {\scriptsize($\rho=3$)}
& 165.98 $\pm$ 11.66 & \second{2.06 $\pm$ 0.90}
& 530.68 $\pm$ 13.40 & \second{2.27 $\pm$ 0.47}
& 147.91 $\pm$ 8.99 & \second{1.81 $\pm$ 0.14} \\
\midrule
\rowcolor{red!8}
\textbf{Balanced OT (Ours)}
& \best{154.49 $\pm$ 2.10} & \best{2.12 $\pm$ 0.31}
& \best{467.54 $\pm$ 17.42} & \best{2.23 $\pm$ 0.25}
& \best{133.58 $\pm$ 4.64} & \best{1.51 $\pm$ 0.18} \\
\bottomrule
\end{tabular}
\end{table}

\subsection{Ablation: One-sided vs.\ Two-sided Cycle Reconstruction}
\label{app:onesided_twosided}

We compare the default one-sided cycle reconstruction with a two-sided 
variant. The one-sided design enforces only the 
source$\to$target$\to$source cycle, while the two-sided design 
additionally enforces the reverse target$\to$source$\to$target cycle. 
The two designs differ only in whether \emph{symmetric recoverability} 
is enforced: this is precisely the assumption that fails under unequal, 
asymmetric cross-city partitions.
\cycI~\textbf{One-sided wins across all directions and tasks.} 
Tables~\ref{tab:onesided_twosided_combined} show a consistent pattern: the one-sided design is \emph{uniformly 
better} across BJ$\leftrightarrow$XA and XA$\leftrightarrow$CD, often 
by a large margin.
\cycII~\textbf{Population and CO$_2$ are most sensitive.} 
The two-sided degradation is sharpest on Population and CO$_2$, where 
asymmetry between cities is most pronounced---enforcing reverse cycle 
recovery here forces the model to reconcile incompatible cross-city 
relations.
\cycIII~\textbf{Symmetric reconstruction over-constrains rectangular 
matching.} Because $n_s\!\neq\!n_t$, the source and target are not 
related by an invertible map; demanding both directions simultaneously 
imposes a symmetry that the data cannot satisfy. We therefore use the 
one-sided cycle as the default design.

\begin{table}[H]
\centering
\footnotesize
\caption{One-sided vs.\ two-sided cycle reconstruction across four 
transfer directions (4 seeds, mean $\pm$ std). The one-sided design 
\textbf{wins on 22 of 24 metrics}, with the largest degradation under 
two-sided occurring on Population and CO$_2$. \best{Red}: better in 
each pair. Lower is better.}
\vspace{1em}
\label{tab:onesided_twosided_combined}
\setlength{\tabcolsep}{3pt}
\renewcommand{\arraystretch}{1.2}
\begin{tabular}{@{}l|cc|cc|cc@{}}
\toprule
\multirow{2}{*}{\textbf{Variant}}
& \multicolumn{2}{c|}{\textbf{GDP}}
& \multicolumn{2}{c|}{\textbf{Population}}
& \multicolumn{2}{c}{\textbf{CO$_2$}} \\
\cmidrule(lr){2-3}\cmidrule(lr){4-5}\cmidrule(lr){6-7}
& MAE$\downarrow$ & MAPE$\downarrow$
& MAE$\downarrow$ & MAPE$\downarrow$
& MAE$\downarrow$ & MAPE$\downarrow$ \\

\midrule
\rowcolor{olive!18}
\multicolumn{7}{@{}l}{\textcolor{olive!50!black}{$\blacktriangleright$}\ \textbf{\textsc{Direction: BJ $\to$ XA}}} \\
\midrule
\rowcolor{red!8}
\hspace{0.5em}\textbf{One-sided (Ours)}
& \best{160.21 $\pm$ 3.53} & 1.87 $\pm$ 0.18
& \best{450.14 $\pm$ 2.81} & \best{1.73 $\pm$ 0.12}
& \best{127.79 $\pm$ 1.08} & \best{1.78 $\pm$ 0.10} \\
\hspace{0.5em}Two-sided
& 160.92 $\pm$ 2.94 & \best{1.65 $\pm$ 0.20}
& 503.85 $\pm$ 14.49 & 2.51 $\pm$ 0.35
& 143.18 $\pm$ 2.82 & 2.47 $\pm$ 0.30 \\

\midrule
\rowcolor{violet!18}
\multicolumn{7}{@{}l}{\textcolor{violet!50!black}{$\blacktriangleright$}\ \textbf{\textsc{Direction: XA $\to$ BJ}}} \\
\midrule
\rowcolor{red!8}
\hspace{0.5em}\textbf{One-sided (Ours)}
& \best{120.25 $\pm$ 7.30} & \best{3.59 $\pm$ 0.48}
& \best{527.04 $\pm$ 6.38} & \best{2.17 $\pm$ 0.23}
& \best{149.20 $\pm$ 1.58} & \best{1.80 $\pm$ 0.17} \\
\hspace{0.5em}Two-sided
& 164.88 $\pm$ 12.64 & 6.38 $\pm$ 0.44
& 583.17 $\pm$ 18.17 & 3.92 $\pm$ 0.13
& 167.51 $\pm$ 5.62 & 2.98 $\pm$ 0.15 \\

\midrule
\rowcolor{teal!18}
\multicolumn{7}{@{}l}{\textcolor{teal!50!black}{$\blacktriangleright$}\ \textbf{\textsc{Direction: XA $\to$ CD}}} \\
\midrule
\rowcolor{red!8}
\hspace{0.5em}\textbf{One-sided (Ours)}
& \best{154.60 $\pm$ 2.33} & \best{4.88 $\pm$ 0.57}
& \best{558.56 $\pm$ 11.01} & \best{2.19 $\pm$ 0.21}
& \best{114.71 $\pm$ 4.22} & \best{6.48 $\pm$ 0.78} \\
\hspace{0.5em}Two-sided
& 172.93 $\pm$ 8.88 & 7.66 $\pm$ 1.58
& 582.95 $\pm$ 7.16 & 2.48 $\pm$ 0.08
& 130.86 $\pm$ 6.30 & 7.11 $\pm$ 0.35 \\

\midrule
\rowcolor{yellow!18}
\multicolumn{7}{@{}l}{\textcolor{purple!50!black}{$\blacktriangleright$}\ \textbf{\textsc{Direction: CD $\to$ XA}}} \\
\midrule
\rowcolor{red!8}
\hspace{0.5em}\textbf{One-sided (Ours)}
& \best{159.61 $\pm$ 0.71} & \best{3.17 $\pm$ 0.29}
& \best{531.00 $\pm$ 12.57} & \best{3.29 $\pm$ 0.10}
& \best{131.10 $\pm$ 2.00} & \best{1.24 $\pm$ 0.02} \\
\hspace{0.5em}Two-sided
& 172.65 $\pm$ 5.98 & 4.38 $\pm$ 0.15
& 572.33 $\pm$ 13.30 & 4.10 $\pm$ 0.31
& 135.88 $\pm$ 1.98 & 1.49 $\pm$ 0.13 \\

\bottomrule
\end{tabular}
\end{table}

\subsection{Comparison to Unbalanced and Partial OT Variants}
\label{app:ot_variants}

A natural question is whether \emph{unbalanced} 
OT~\citep{chizat2018scaling}, \emph{partial} 
OT~\citep{chapel2020partial}, or \emph{Sinkhorn 
divergence}~\citep{genevay2018learning,feydy2019interpolating} 
might handle cross-city heterogeneity differently than balanced 
entropic OT. We replace \scot's balanced OT with each variant 
while keeping all other components unchanged 
(Table~\ref{tab:ot_variants}).

\paragraph{Marginal control matters more than debiasing.}
The empirical ranking---balanced $>$ partial $>$ Sinkhorn 
divergence $>$ unbalanced---suggests a consistent pattern: variants that preserve strict marginal constraints tend to perform better, regardless of other modifications to the OT formulation.

\cI~\textbf{Balanced OT enforces $P\mathbf{1}=a, P^\top\mathbf{1}=b$ 
exactly}, giving the strongest anti-hubness guarantee. It wins on 
$11$ of $12$ metrics, with the single exception (Pop MAPE on 
BJ$\,\to\,$XA: 1.58 vs.\ Partial's 1.56) being essentially tied 
within seed variance.

\cII~\textbf{Partial OT preserves uniform marginal structure} 
(differing from balanced only by a global scaling factor 
$\mathrm{frac}=0.7$), so it remains close in performance, 
typically within $5$--$15\%$ of balanced.

\cIII~\textbf{Sinkhorn divergence maintains balanced marginals} 
but introduces self-OT terms whose gradient noise offsets the 
debiasing benefit at our moderate temperature ($\varepsilon=0.15$), 
without yielding a corresponding quality gain.

\cIV~\textbf{Unbalanced OT relaxes the marginal constraint} and 
gives the lowest performance among the four variants, in line 
with the role of mass conservation in limiting many-to-one 
concentration. The pattern is consistent with our motivation for 
\scot's design, though more cities and tasks would be needed to 
quantify the effect precisely.

The ordering is consistent with \scot's design rationale: marginal control is among the more impactful design choices in this setting, and OT variants that relax it tend to underperform.

\begin{table}[H]
\centering
\footnotesize
\caption{OT variant comparison on XA$\leftrightarrow$BJ. 
Balanced entropic OT (our default) achieves the best results on 
$11$ of $12$ metrics; on the single exception (Pop MAPE in 
BJ$\,\to\,$XA), Partial OT is essentially tied. The variants are 
ranked by overall performance: 
\textbf{Balanced} $>$ \textbf{Partial} $>$ \textbf{Sinkhorn 
divergence} $>$ \textbf{Unbalanced}---a pattern aligned with the 
strictness of the marginal constraints. \best{Red}: best per cell. 
Lower is better.}
\vspace{1em}
\label{tab:ot_variants}
\setlength{\tabcolsep}{4pt}
\renewcommand{\arraystretch}{1.2}
\resizebox{\textwidth}{!}{%
\begin{tabular}{@{}l|cccccc|cccccc@{}}
\toprule
\multirow{3}{*}{\textbf{OT variant in \scot}}
& \multicolumn{6}{c|}{\textbf{XA $\to$ BJ}}
& \multicolumn{6}{c}{\textbf{BJ $\to$ XA}} \\
\cmidrule(lr){2-7} \cmidrule(lr){8-13}
& \multicolumn{2}{c}{GDP} & \multicolumn{2}{c}{Pop} & \multicolumn{2}{c|}{CO$_2$}
& \multicolumn{2}{c}{GDP} & \multicolumn{2}{c}{Pop} & \multicolumn{2}{c}{CO$_2$} \\
\cmidrule(lr){2-3} \cmidrule(lr){4-5} \cmidrule(lr){6-7}
\cmidrule(lr){8-9} \cmidrule(lr){10-11} \cmidrule(lr){12-13}
& MAE & MAPE & MAE & MAPE & MAE & MAPE
& MAE & MAPE & MAE & MAPE & MAE & MAPE \\
\midrule

\rowcolor{red!8}
\textbf{Balanced (Ours)}
& \best{115.33} & \best{3.17} & \best{528.50} & \best{2.13} 
& \best{149.42} & \best{1.79}
& \best{154.92} & \best{1.60} & \best{452.67} & 1.58 
& \best{128.74} & \best{1.63} \\

\rowcolor{gray!06}
Partial (frac$=0.7$)
& 137.18 & 4.54 & 556.49 & 3.11 & 156.70 & 2.34
& 172.49 & 2.01 & 462.15 & \best{1.56} & 131.25 & 1.60 \\

Sinkhorn divergence
& 141.78 & 3.90 & 562.76 & 2.23 & 160.57 & 2.06
& 167.83 & 2.62 & 476.30 & 2.54 & 136.74 & 2.50 \\

\rowcolor{gray!06}
Unbalanced (KL marginal)
& 145.04 & 5.77 & 627.52 & 5.15 & 177.08 & 3.67
& 172.09 & 2.42 & 487.21 & 2.62 & 139.69 & 2.52 \\

\bottomrule
\end{tabular}}
\end{table}

\subsection{Comparison to Structure-Aware Cost Formulations}
\label{app:cost_variants}

A natural question is whether richer cost formulations capture 
cross-city structural similarity better than $\ell_2$ on 
$\ell_2$-normalized embeddings. We compare \scot's default cost 
against three structure-aware alternatives: \textbf{Graph-aware cost}: $C_{ij} = \alpha\|\tilde u_i - 
\tilde v_j\|_2 + (1{-}\alpha)|\sigma_s(i) - \sigma_t(j)|$ with 
$\sigma(\cdot) := \|L\tilde u(\cdot)\|_2$, $\alpha=0.7$.
\textbf{Mobility-profile cost}: combines embedding distance 
with OD-profile statistics (volume and entropy), $\alpha=0.5$.
\textbf{Gromov-Wasserstein}~\citep{peyre2016gromov,memoli2011gromov}: 
replaces cross-domain cost with intra-domain distance preservation.

$\ell_2$ on encoded embeddings achieves the best result on all $12$ 
metrics. The mechanism is straightforward: \emph{the GAT encoder 
already integrates graph and mobility information into the embedding}, 
so adding the same information through the cost matrix is redundant 
and introduces noise. 

\cI~Graph-aware cost is the closest 
competitor since its Laplacian signature $\sigma$ is itself derived 
from embeddings, making the added term largely redundant. 
\cII~Mobility-profile cost performs worse: coarse statistics provide 
a weaker per-region fingerprint than embeddings, and cross-city 
normalization removes the absolute information that differentiates 
regions. \cIII~Gromov-Wasserstein performs worst on most metrics due 
to non-convex optimization and incompatibility with the 
OT-contrastive coupling. The pattern reinforces that \emph{when the 
encoder is sufficiently expressive, cost simplicity is a feature, 
not a limitation}.

\begin{table}[H]
\centering
\footnotesize
\caption{Cost formulation comparison on XA$\,\leftrightarrow\,$BJ. 
$\ell_2$ on $\ell_2$-normalized embeddings (\scot's default) achieves 
the best result on \emph{all} $12$ metrics. \best{Red}: best per cell. 
Lower is better.}
\label{tab:cost_variants}
\setlength{\tabcolsep}{4pt}
\renewcommand{\arraystretch}{1.2}
\resizebox{\textwidth}{!}{%
\begin{tabular}{@{}l|cccccc|cccccc@{}}
\toprule
\multirow{3}{*}{\textbf{Cost variant in \scot}}
& \multicolumn{6}{c|}{\textbf{XA $\to$ BJ}}
& \multicolumn{6}{c}{\textbf{BJ $\to$ XA}} \\
\cmidrule(lr){2-7} \cmidrule(lr){8-13}
& \multicolumn{2}{c}{GDP} & \multicolumn{2}{c}{Pop} & \multicolumn{2}{c|}{CO$_2$}
& \multicolumn{2}{c}{GDP} & \multicolumn{2}{c}{Pop} & \multicolumn{2}{c}{CO$_2$} \\
\cmidrule(lr){2-3} \cmidrule(lr){4-5} \cmidrule(lr){6-7}
\cmidrule(lr){8-9} \cmidrule(lr){10-11} \cmidrule(lr){12-13}
& MAE & MAPE & MAE & MAPE & MAE & MAPE
& MAE & MAPE & MAE & MAPE & MAE & MAPE \\
\midrule

\rowcolor{red!8}
\textbf{$\ell_2$ on embeddings (Ours)}
& \best{115.33} & \best{3.17} & \best{528.50} & \best{2.13} 
& \best{149.42} & \best{1.79}
& \best{154.92} & \best{1.60} & \best{452.67} & \best{1.58} 
& \best{128.74} & \best{1.63} \\

\rowcolor{gray!06}
Graph-aware ($\alpha=0.7$)
& 152.93 & 5.73 & 588.56 & 3.74 & 168.17 & 2.96
& 166.51 & 2.43 & 458.53 & 2.27 & 131.21 & 2.24 \\

Mobility-profile ($\alpha=0.5$)
& 175.20 & 7.49 & 598.49 & 4.58 & 176.77 & 3.67
& 172.54 & 2.15 & 468.85 & 1.93 & 132.97 & 1.88 \\

\rowcolor{gray!06}
Gromov-Wasserstein
& 177.10 & 6.62 & 607.10 & 3.54 & 174.19 & 3.03
& 167.70 & 2.53 & 501.56 & 2.57 & 142.65 & 2.65 \\

\bottomrule
\end{tabular}}
\end{table}

\section{Hyperparameter Sensitivity}
\label{app:hyperpameter_sensitivity}

The main paper reports quantitative sensitivity for $\lambda_{\mathrm{align}}$ 
and $\varepsilon$. Here we provide the remaining sweeps and 
visualizations: $\tau$, hub size $K$, contrastive weight $\eta$, and 
target-prior temperature $\tau_b$, together with t-SNE visualizations 
that show how each hyperparameter shapes the cross-city embedding 
geometry. Across all six hyperparameters, \scot exhibits broad stable 
regions and only degrades at extreme values, supporting the main-paper 
claim that the gains stem from the OT-based alignment framework rather 
than fine-grained tuning.

\subsection{Sensitivity to Contrastive Temperature $\tau$}
\label{app:tau_sens}

We sweep $\tau\in\{0.03, 0.05, 0.1, 0.2, 0.5, 1\}$ on XA$\to$BJ 
(Fig.~\ref{fig:tau_sensitivity}), revealing a characteristic 
\emph{U-shape} in target performance and a matching geometric pattern 
in embedding space.

\tauI~\textbf{Quantitative U-shape.} 
Very small $\tau$ ($0.03$--$0.05$) over-sharpens similarity weighting 
and amplifies noise; large $\tau\!=\!1$ dilutes the discriminative 
signal. The \emph{sweet spot} $\tau\in[0.1, 0.5]$ balances sharpness 
against smoothness, and we adopt $\tau=0.1$ as the default 
(Fig.~\ref{fig:tau_sensitivity}\,(a)).
\tauII~\textbf{Geometric mirror in embedding space.} 
The t-SNE visualization (Fig.~\ref{fig:tau_sensitivity}\,(b)) reveals 
the same trade-off geometrically: small $\tau$ leaves the two cities 
\emph{weakly interleaved}, moderate $\tau$ produces \emph{clean 
interleaving with preserved clusters}, and large $\tau$ \emph{over-
smooths} the embeddings into indistinguishable mixtures. The 
embedding geometry tracks the metric U-shape, providing visual 
confirmation of the sweet spot.

\begin{figure}[H]
\centering
\begin{subfigure}[t]{0.46\textwidth}
    \centering
    \includegraphics[width=\linewidth]{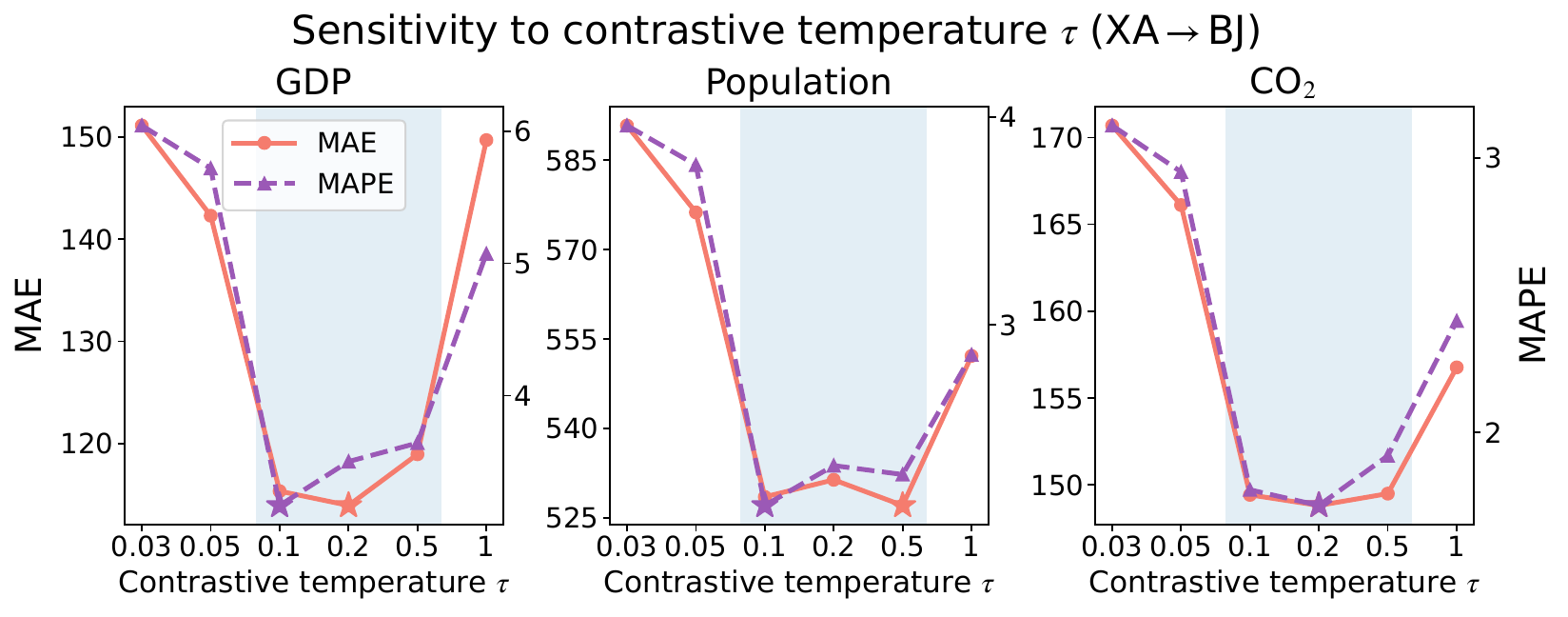}
    \caption{\textbf{Metrics.} MAE / MAPE vs.\ $\tau$.}
    \label{fig:tau_sensitivity_metrics}
\end{subfigure}
\hfill
\begin{subfigure}[t]{0.52\textwidth}
    \centering
    \includegraphics[width=\linewidth]{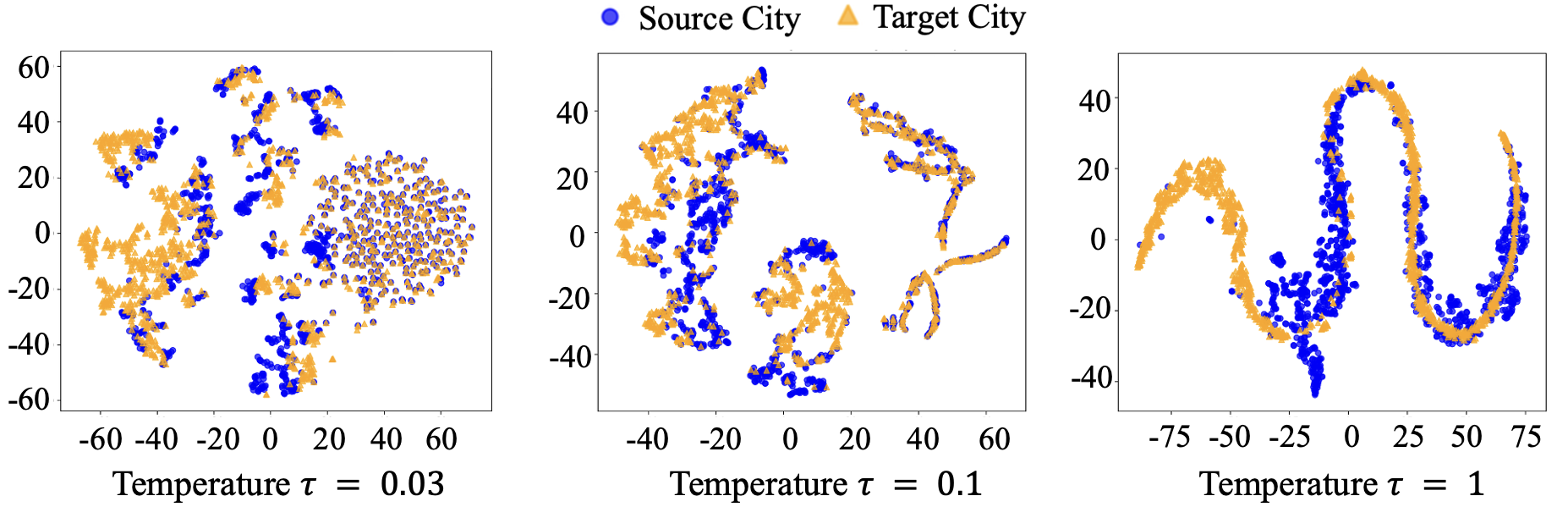}
    \caption{\textbf{Embeddings.} t-SNE at $\tau=0.03,\,0.1,\,1$.}
    \label{fig:tau_sensitivity_tsne}
\end{subfigure}
\caption{\textbf{Sensitivity to $\tau$ on XA$\to$BJ.} 
(a) U-shape with sweet spot $\tau\in[0.1, 0.5]$. 
(b) Geometry mirrors metrics: extreme $\tau$ yields weak interleaving 
or over-smoothing; $\tau=0.1$ aligns cleanly without collapsing clusters.}
\label{fig:tau_sensitivity}
\end{figure}

\subsection{Sensitivity to Contrastive Weight $\eta$}
\label{app:eta_sens}

The contrastive weight $\eta$ balances OT-based geometric 
correspondence against contrastive discriminative sharpening. We 
sweep $\eta\in\{0, 0.1, 0.2, 0.5, 1, 2, 5, 10\}$ on XA$\to$BJ 
(Fig.~\ref{fig:eta_sensitivity}).

\tauI~\textbf{Moderate $\eta$ wins; both extremes fail.} 
$\eta\in[0.1, 0.5]$ consistently yields the best or near-best 
performance (Fig.~\ref{fig:eta_sensitivity}\,(a)). Too small $\eta$ 
under-uses contrastive sharpening; too large $\eta\!\ge\!2$ causes 
sharp error increases.
\tauII~\textbf{The geometry confirms the failure modes.} 
t-SNE (Fig.~\ref{fig:eta_sensitivity}\,(b)) shows that $\eta=0$ 
leaves the two cities \emph{barely interleaved}, $\eta=0.5$ produces 
\emph{clean alignment with preserved clusters}, and $\eta=5$ drives 
\emph{near-complete feature mixing}. OT and contrastive alignment thus 
act complementarily, with \scot robust over a practical mid-range.

\begin{figure}[H]
\centering
\begin{subfigure}[t]{0.46\textwidth}
    \centering
    \includegraphics[width=\linewidth]{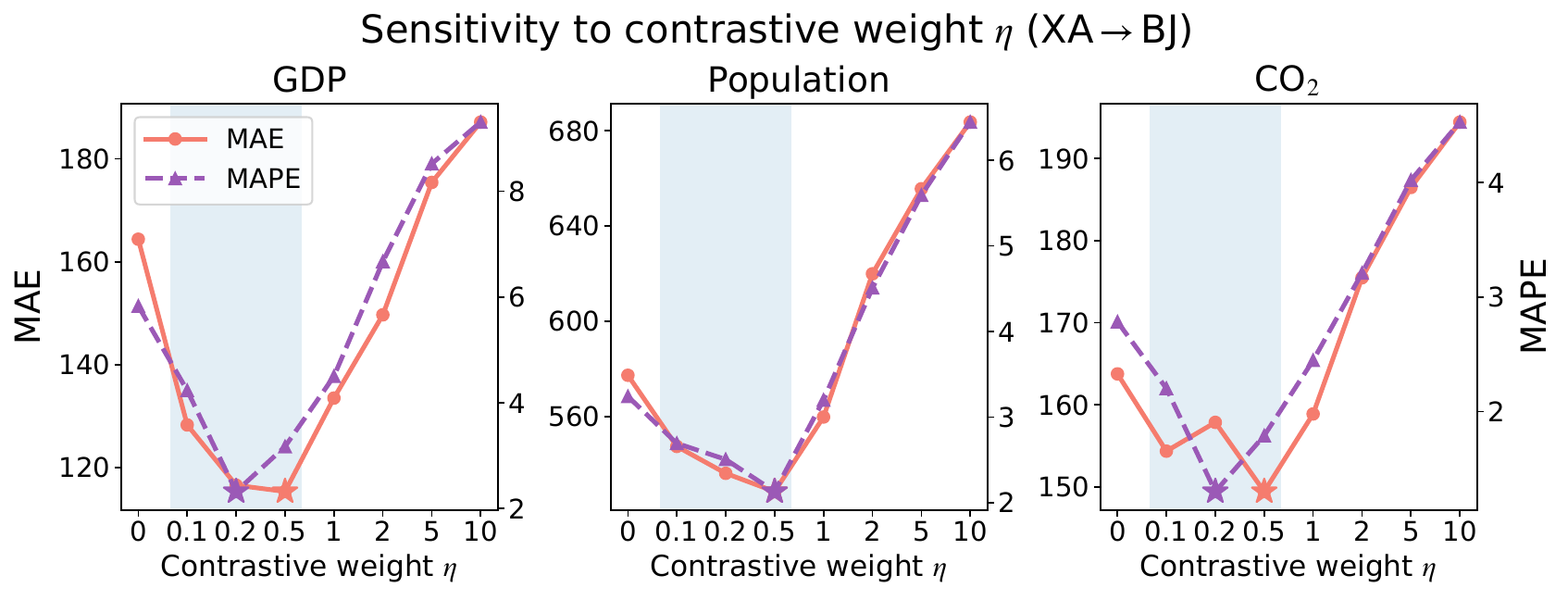}
    \caption{\textbf{Metrics.} MAE / MAPE vs.\ $\eta$.}
    \label{fig:eta_sensitivity_metrics}
\end{subfigure}
\hfill
\begin{subfigure}[t]{0.52\textwidth}
    \centering
    \includegraphics[width=\linewidth]{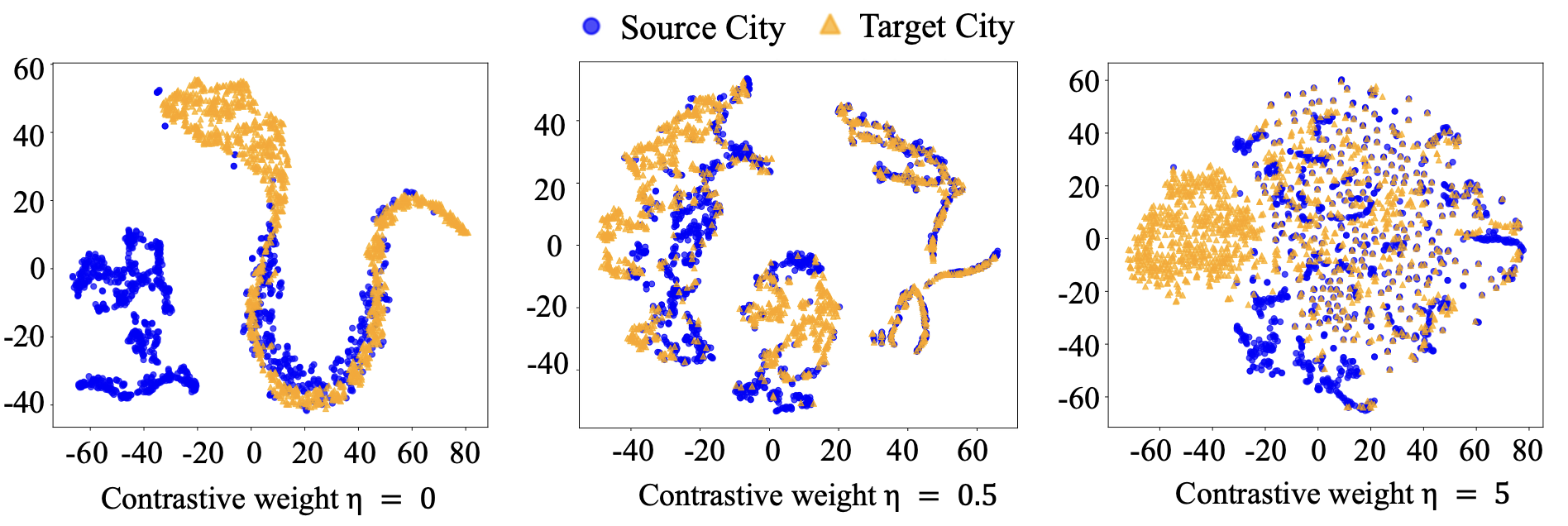}
    \caption{\textbf{Embeddings.} t-SNE at $\eta=0,\,0.5,\,5$. 
    Blue $\circ$: source; orange $\triangle$: target.}
    \label{fig:eta_sensitivity_tsne}
\end{subfigure}
\caption{\textbf{Sensitivity to $\eta$ on XA$\to$BJ.} 
(a) Best in $\eta\in[0.1, 0.5]$; large $\eta$ degrades sharply. 
(b) $\eta=0$ under-aligns; $\eta=0.5$ aligns cleanly; 
$\eta=5$ over-mixes the two cities.}
\label{fig:eta_sensitivity}
\end{figure}

\subsection{Visualization of $\lambda_{\mathrm{align}}$ Effect}
\label{app:lambda_align_sens}

The quantitative sweep in the main paper 
(Fig.~\ref{fig:lambda_align_sens_xa2bj}) shows that \scot achieves 
its best or near-best performance across a \textbf{broad range} 
of $\lambda_{\mathrm{align}}\in[0.1, 1]$, eliminating the need for 
careful per-target tuning. To complement this finding, 
Fig.~\ref{fig:tsne_lambda_align_xa2bj} visualizes how the embedding 
geometry varies with $\lambda_{\mathrm{align}}$ on XA$\to$BJ.
\tauI~\textbf{The recommended range produces consistent geometry.} 
Within $\lambda_{\mathrm{align}}\in[0.1, 1]$ (e.g., the middle panel 
with $\lambda_{\mathrm{align}}=1$), the embeddings exhibit 
\emph{coherent cross-city interleaving with preserved cluster 
structure}, matching the metric-level plateau.
\tauII~\textbf{Departures from this range degrade gracefully.} 
Outside the recommended range, the geometry deviates predictably: 
very small $\lambda_{\mathrm{align}}=0.05$ leaves the two cities 
\emph{insufficiently interleaved} (alignment signal too weak), while 
very large $\lambda_{\mathrm{align}}=3$ yields \emph{over-mixed} 
embeddings (alignment dominates intra-city structure). Both extremes 
are far from the operating range and serve only to bracket the 
boundary behavior.

\begin{figure}[H]
\centering
\includegraphics[width=0.85\linewidth]{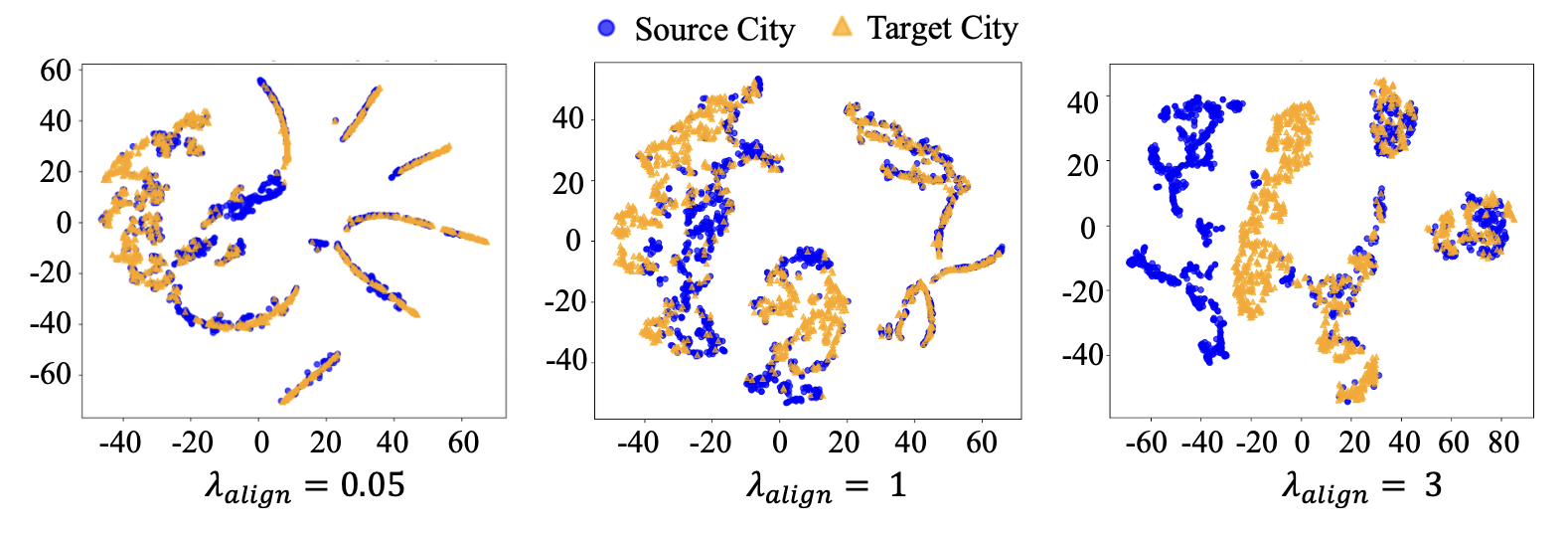}
\caption{\textbf{t-SNE under varying $\lambda_{\mathrm{align}}$ on 
XA$\to$BJ.} The recommended setting $\lambda_{\mathrm{align}}=1$ 
(middle) produces coherent interleaving with preserved clusters; 
extreme values outside the recommended range $[0.1, 1]$, namely 
$0.05$ (left) and $3$ (right), bracket the boundary behavior. 
Blue $\circ$: source; orange $\triangle$: target.}
\label{fig:tsne_lambda_align_xa2bj}
\end{figure}

\subsection{Sensitivity to Hub Size $K$}
\label{app:hubK_sens}

In multi-source SCOT, the hub size $K$ controls prototype capacity. 
We sweep $K\in\{2, 4, 8, 16, 32, 64, 128, 256, 1000\}$ on 
CD,BJ$\to$XA (Fig.~\ref{fig:hubK_sensitivity}), revealing a clear 
\emph{capacity trade-off} centered on a wide stable plateau.
\tauI~\textbf{Small $K$ underfits transferable structure.} 
At $K=2$, heterogeneous regions are forced to share too few 
prototypes, creating a \emph{severe capacity bottleneck} and the 
sharpest performance drop in the sweep.
\tauII~\textbf{Large $K$ erodes the regularization benefit.} 
For $K\!\ge\!64$, the hub space becomes over-fine, weakening its role 
as a compact semantic bridge and yielding noisier couplings. The 
\emph{stable plateau} $K\in[4, 32]$ balances both failure modes; we 
adopt $K=32$ as the default.

\begin{figure}[H]
\centering
\includegraphics[width=0.85\linewidth]{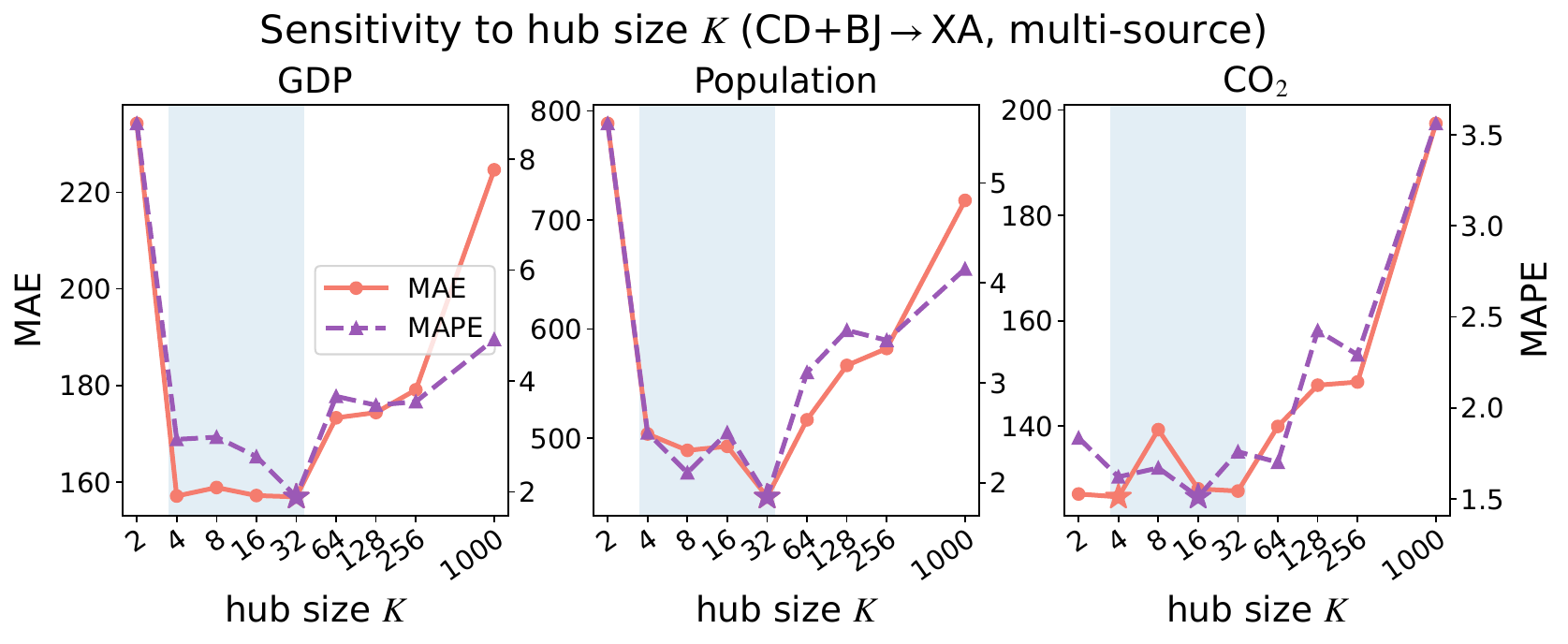}
\caption{\textbf{Sensitivity to hub size $K$ on CD,BJ$\to$XA 
(multi-source).} MAE / MAPE for GDP, population, and CO$_2$. 
Performance is stable across $K\in[4, 32]$; $K=2$ underfits and 
$K\!\ge\!64$ over-resolves the hub space.}
\label{fig:hubK_sensitivity}
\end{figure}

\subsection{Sensitivity to Target-Prior Temperature $\tau_b$}
\label{subsec:taub_sensitivity}

The target-prior temperature $\tau_b$ controls the sharpness of the 
target-induced prototype marginal $\mathbf{b}$: smaller $\tau_b$ 
peaks $\mathbf{b}$, larger $\tau_b$ flattens it. We sweep 
$\tau_b\in\{0.1, 0.2, 0.3, 0.5, 0.8, 1.0, 2.0\}$ on multi-source 
CD,BJ$\to$XA, jointly with hub-usage diagnostics that reveal the 
underlying mechanism.

\tauI~\textbf{The recommended range $\tau_b\in[0.3, 0.8]$ delivers 
the best performance.} Within this range, MAE and MAPE are stably 
minimized across GDP, Population, and CO$_2$, with $\tau_b=0.5$ as 
the operating midpoint (Fig.~\ref{fig:taub_combined}\,(a)). The 
plateau spans nearly an order of magnitude, eliminating the need 
for per-target tuning.

\tauII~\textbf{Hub diagnostics explain why.} The OT column marginal 
$p_k=\sum_i\Pi_{ik}$ reports how the target city distributes mass 
across prototypes. Within the recommended range, the normalized 
entropy $H(p)/\log K$ stabilizes around $0.4$, corresponding to 
\emph{$\exp(H(p))\approx 4$ effective prototypes} per region---a 
selective, non-collapsed hub usage that aligns with strong transfer 
(Fig.~\ref{fig:taub_combined}\,(b)).

\tauIII~\textbf{Boundary behavior is interpretable.} Departures from 
the recommended range deviate predictably: small $\tau_b\le 0.2$ 
peaks $\mathbf{b}$ onto few prototypes (low entropy, prototype 
\emph{specialization without diversity}), while large $\tau_b\ge 1.0$ 
flattens $\mathbf{b}$ toward uniform ($\exp(H(p))\!\to\!K$, 
\emph{diffuse usage with weakened target guidance}). Both extremes 
serve only to bracket the boundary behavior of the target-induced 
prior.

\begin{figure}[H]
\centering
\begin{subfigure}[t]{0.62\textwidth}
    \centering
    \includegraphics[width=\linewidth]{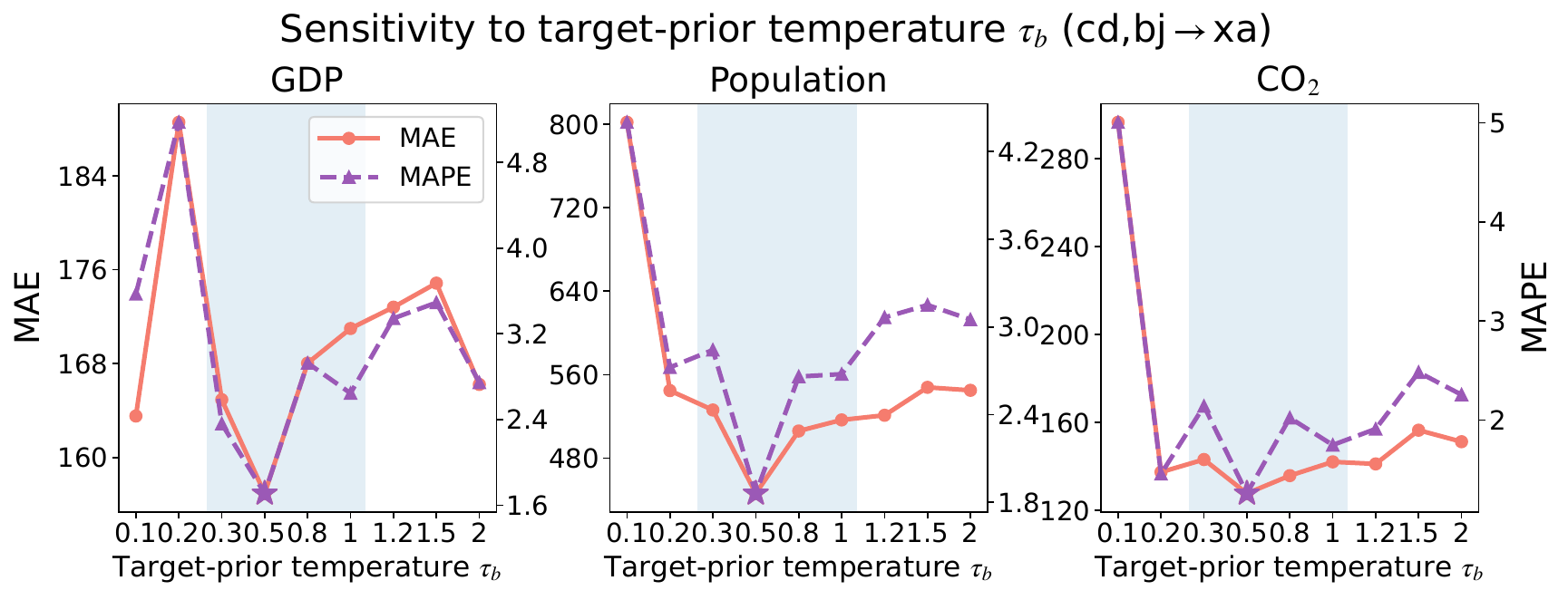}
     \caption{\textbf{Metrics.} MAE (solid) and MAPE (dashed) vs.\ $\tau_b$.}
    \label{fig:taub_metrics}
\end{subfigure}
\hfill
\begin{subfigure}[t]{0.36\textwidth}
    \centering
    \includegraphics[width=\linewidth]{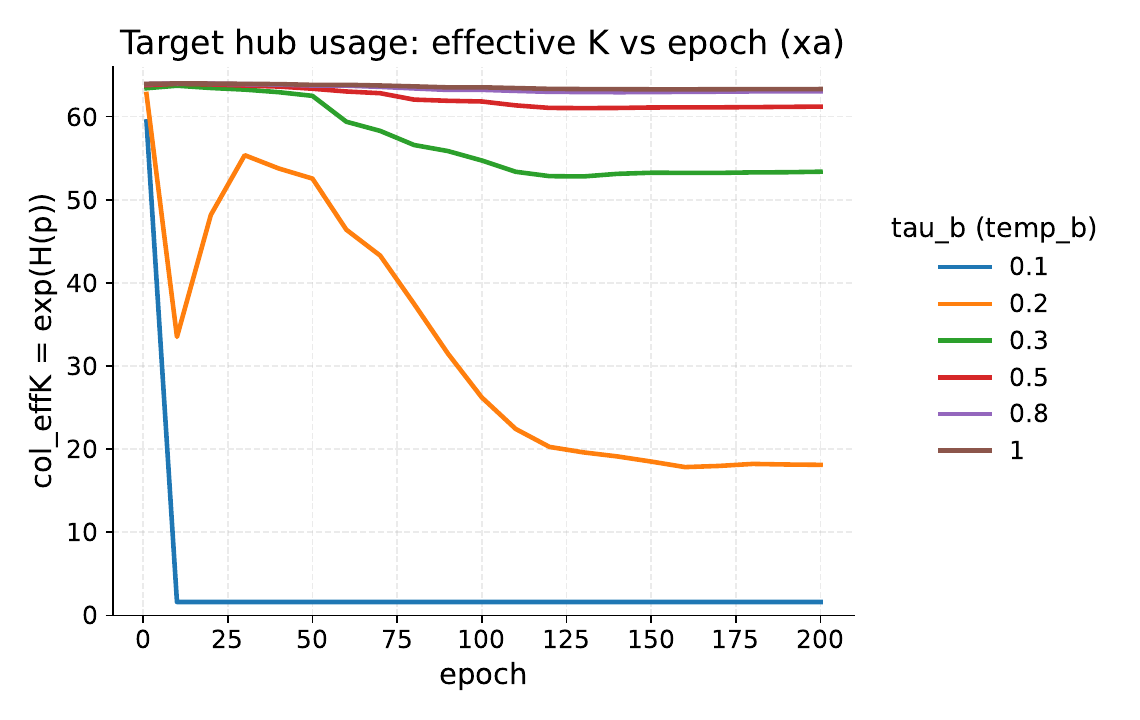}
    \caption{\textbf{Hub diagnostics.}}
    \label{fig:taub_hubdiag}
\end{subfigure}
\caption{\textbf{Sensitivity to $\tau_b$ on multi-source CD,BJ$\to$XA.} 
(a) Performance plateau across $\tau_b\in[0.3, 0.8]$. 
(b) Normalized entropy $H(p)/\log K$ and effective prototype count 
$\exp(H(p))$ on XA: hub usage stabilizes at $\sim 4$ effective 
prototypes within the same range, providing direct mechanistic 
evidence for the metric plateau.}
\label{fig:taub_combined}
\end{figure}





\section{Complexity Analysis}
\label{app:complexity}

The dominant extra cost of \scot, beyond the GAT encoder and 
intra-city objective, comes from Sinkhorn OT 
(Table~\ref{tab:complexity}).

\paragraph{Single-source \scot.}
With source and target sizes $n_s$ and $n_t$, \scot runs $T$ Sinkhorn 
iterations on a dense $n_s\!\times\!n_t$ cost matrix at $O(T n_s n_t)$ 
time and $O(n_s n_t)$ memory. The OT-weighted contrastive term shares 
this structure and adds no asymptotic cost.

\paragraph{Multi-source \scot: hub vs pairwise.}
With $M$ sources of $\sim\!n$ regions, target of $\sim\!n$ regions, 
and hub size $K$, hub aggregation replaces $M$ city-to-target OT 
instances of size $n\!\times\!n$ with $M\!+\!1$ city-to-hub instances 
of size $n\!\times\!K$. Total cost drops from $O(TMn^2)$ to 
$O\!\bigl(T(M\!+\!1)nK\bigr)$, \textbf{a factor of $\boldsymbol{n/K}$ 
speedup}---about $30\times$ in our experiments ($n\!\approx\!1000$, 
$K\!=\!32$). The hub thus delivers both stronger transfer accuracy 
and substantially better scalability to many sources.

\begin{table}[H]
\centering
\footnotesize
\caption{Complexity of \scot's alignment module. $n$: city size; 
$T$: Sinkhorn iterations; $M$: number of sources; $K$: hub size.}
\label{tab:complexity}
\vspace{1em}
\setlength{\tabcolsep}{8pt}
\renewcommand{\arraystretch}{1.25}
\begin{tabular}{@{}l|cc@{}}
\toprule
\textbf{Variant} & \textbf{Time} & \textbf{Memory} \\
\midrule
Single-source \scot
& $O(T n_s n_t)$ & $O(n_s n_t)$ \\
\rowcolor{gray!06}
Multi-source, naive pairwise OT
& $O(TMn^2)$ & $O(Mn^2)$ \\
\rowcolor{red!8}
\textbf{Multi-source \scot (hub)}
& $\boldsymbol{O(T(M\!+\!1)nK)}$ & $\boldsymbol{O((M\!+\!1)nK)}$ \\
\rowcolor{orange!8}
\textit{Speedup over pairwise}
& \cellcolor{orange!16}\textbf{$\sim n/K$} & \cellcolor{orange!16}\textbf{$\sim n/K$} \\
\bottomrule
\end{tabular}
\end{table}

\section{Multi-Source Integration: Source Quality and Conflict Analysis}
\label{app:source_quality}

A recurring question in multi-source transfer is whether 
\emph{aggregating} multiple sources reliably outperforms 
\emph{selecting} the single best source---a question that is 
particularly subtle in label-scarce regimes, where reliable per-target 
source selection itself requires target-side 
labels~\citep{mansour2008domain,sun2015survey,pan2009survey}. \scot 
sidesteps this difficulty by design: instead of explicit source 
selection, the shared hub aggregates all sources through a 
target-induced prior, producing stable average-case gains without 
requiring labeled validation on the target.

\paragraph{Multi-source vs.\ best-single-source.}
Table~\ref{tab:ss_vs_ms_scot} compares multi-source \scot against the 
\emph{strongest} single-source counterpart for each target---a 
deliberately conservative baseline that assumes oracle access to the 
best single source.

\tauI~\textbf{Consistent gains on 8 of 9 target--task combinations.} 
The largest improvements are on Beijing (GDP, Population, CO$_2$) 
and Chengdu (Population, CO$_2$). The hub mechanism extracts 
complementary signal from multiple sources without sacrificing the 
per-source signal an oracle selection would exploit.

\tauII~\textbf{The Xi'an GDP exception is mechanistically explained 
and points to a label-free remedy.} The single case where 
multi-source \scot does not improve over the oracle is Xi'an GDP 
(MAE 156.94 vs.\ 154.92, MAPE 1.91 vs.\ 1.60). 
Table~\ref{tab:hub_conflict} reveals the mechanism: Xi'an exhibits 
an order-of-magnitude larger inter-source gap $\Delta$ in 
$\mathcal{L}_{\mathrm{Con}}$ (0.420 vs.\ 0.012 for BJ and 0.237 for 
CD), indicating that the two source cities transport mass to 
substantially different prototype regions when targeting Xi'an. 
This asymmetry is detectable from internal hub statistics 
\emph{before any target label is seen}, providing an unsupervised 
signal for when multi-source aggregation is unlikely to dominate 
single-source selection.

\begin{table}[H]
\centering
\footnotesize
\caption{Single-source vs.\ multi-source \scot across three target 
cities. The single-source baseline uses the \emph{best} source per 
target--task (an oracle assumption). Multi-source \scot wins on 8 
of 9 combinations; the Xi'an GDP case is mechanistically explained 
by elevated source conflict (Table~\ref{tab:hub_conflict}) and 
operationally addressed by the conflict-aware variant. \best{Red}: best per cell pair. 
Lower is better.}
\vspace{1em}
\label{tab:ss_vs_ms_scot}
\setlength{\tabcolsep}{3pt}
\renewcommand{\arraystretch}{1.2}
\begin{tabular}{@{}l|cc|cc|cc@{}}
\toprule
\multirow{2}{*}{\textbf{Target / Variant}}
& \multicolumn{2}{c|}{\textbf{GDP}}
& \multicolumn{2}{c|}{\textbf{Population}}
& \multicolumn{2}{c}{\textbf{CO$_2$}} \\
\cmidrule(lr){2-3}\cmidrule(lr){4-5}\cmidrule(lr){6-7}
& MAE$\downarrow$ & MAPE$\downarrow$
& MAE$\downarrow$ & MAPE$\downarrow$
& MAE$\downarrow$ & MAPE$\downarrow$ \\

\midrule
\rowcolor{cyan!18}
\multicolumn{7}{@{}l}{\textcolor{cyan!50!black}{$\blacktriangleright$}\ \textbf{\textsc{Target: Beijing (BJ)}}} \\
\midrule
\hspace{0.5em}Best single-source \textit{(oracle)}
& 118.48 & 3.41
& 580.95 & 2.74
& 148.50 & 1.54 \\
\rowcolor{red!8}
\hspace{0.5em}\textbf{Multi-source (Ours)}
& \best{104.16} & \best{2.57}
& \best{525.10} & \best{1.87}
& \best{143.53} & \best{1.46} \\

\midrule
\rowcolor{orange!18}
\multicolumn{7}{@{}l}{\textcolor{orange!60!black}{$\blacktriangleright$}\ \textbf{\textsc{Target: Xi'an (XA)}}} \\
\midrule
\hspace{0.5em}Best single-source \textit{(oracle)}
& \best{154.92} & \best{1.60}
& 452.67 & 1.58
& 128.74 & \best{1.63} \\
\rowcolor{red!8}
\hspace{0.5em}\textbf{Multi-source (Ours)}
& 156.94 & 1.91
& \best{446.13} & \best{1.56}
& \best{127.66} & 1.76 \\

\midrule
\rowcolor{purple!18}
\multicolumn{7}{@{}l}{\textcolor{purple!60!black}{$\blacktriangleright$}\ \textbf{\textsc{Target: Chengdu (CD)}}} \\
\midrule
\hspace{0.5em}Best single-source \textit{(oracle)}
& 135.63 & 3.55
& 575.43 & 2.35
& 114.68 & 7.83 \\
\rowcolor{red!8}
\hspace{0.5em}\textbf{Multi-source (Ours)}
& \best{133.94} & \best{3.32}
& \best{546.82} & \best{2.23}
& \best{98.43} & \best{5.10} \\

\bottomrule
\end{tabular}
\end{table}

\paragraph{Robustness of the hub-conflict diagnostic.}
The conflict signal does not rely on a particular choice of 
measurement: as Table~\ref{tab:hub_conflict} shows, three 
independent hub statistics---$\mathcal{L}_{\mathrm{OT}}$, 
$\mathcal{L}_{\mathrm{Con}}$, and transport mass---agree on the 
same target ordering. Beijing and Chengdu show small, consistent 
$\Delta$ across all three, while Xi'an deviates substantially on 
each measure simultaneously. This convergence across independent 
diagnostics makes the conflict signal robust rather than 
statistic-specific.

\begin{table}[H]
\centering
\footnotesize
\caption{Near-convergence hub diagnostics for each source--target pair. 
$\Delta$ is the inter-source gap (absolute difference); larger 
$\Delta$ flags structural source imbalance \emph{detectable without 
target labels}.}
\vspace{0.2em}
\label{tab:hub_conflict}
\setlength{\tabcolsep}{4pt}
\renewcommand{\arraystretch}{1.2}
\begin{tabular}{@{}l|ccc|ccc|ccc@{}}
\toprule
\multirow{2}{*}{\textbf{Target}}
& \multicolumn{3}{c|}{$\mathcal{L}_{\mathrm{OT}}$}
& \multicolumn{3}{c|}{$\mathcal{L}_{\mathrm{Con}}$}
& \multicolumn{3}{c}{Transport mass} \\
\cmidrule(lr){2-4}\cmidrule(lr){5-7}\cmidrule(lr){8-10}
& S1 & S2 & $\Delta$
& S1 & S2 & $\Delta$
& S1 & S2 & $\Delta$ \\
\midrule
Beijing (BJ)
& 0.558 & 0.557 & 0.001
& 0.999 & 0.987 & 0.012
& 0.431 & 0.447 & 0.016 \\
\rowcolor{gray!06}
Chengdu (CD)
& 0.435 & 0.403 & 0.032
& 1.075 & 0.839 & 0.237
& 0.489 & 0.515 & 0.026 \\
\rowcolor{orange!12}
Xi'an (XA)
& 0.470 & 0.420 & \cellcolor{orange!22}\textbf{0.050}
& 1.080 & 0.660 & \cellcolor{orange!22}\textbf{0.420}
& 0.480 & 0.580 & \cellcolor{orange!22}\textbf{0.100} \\
\bottomrule
\end{tabular}
\end{table}

\paragraph{Toward Conflict-Aware Multi-Source Aggregation.} The hub-conflict signal exposed suggests a natural family of label-free, conflict-aware extensions to multi-source \scot. The general form is
\[
\text{predict}(t) \;=\; \alpha(t) \cdot \mathrm{Multi}(t) \;+\; 
\big(1 - \alpha(t)\big) \cdot \mathrm{Single}_{s^\star(t)}(t),
\qquad
s^\star(t) = \arg\min_s \mathcal{L}_{\mathrm{Con}}(s, t),
\]
where $\alpha(t) \in [0, 1]$ is determined by an internal conflict 
statistic. Three principled choices stand out:

\cI~\textbf{Scale-invariant disagreement.} 
$\rho(t) := \Delta_{\mathrm{Con}}(t) \big/ \min_s 
\mathcal{L}_{\mathrm{Con}}(s, t)$ measures inter-source disagreement 
relative to the better source's own alignment loss, making the 
signal independent of absolute loss magnitude. \cII~\textbf{Source-loss dispersion.} The coefficient of variation 
$\mathrm{CV}(t) := \sigma\{\mathcal{L}_{\mathrm{Con}}(s, t)\}_s 
\big/ \mathrm{mean}\{\mathcal{L}_{\mathrm{Con}}(s, t)\}_s$ treats 
per-source losses as a sample, generalizing naturally to $M > 2$ 
sources. \cIII~\textbf{Self-consistency criterion.} Aggregate only when 
$\mathcal{L}_{\mathrm{Con}}^{\mathrm{multi}}(t) \le \min_s 
\mathcal{L}_{\mathrm{Con}}(s, t)$, i.e., when aggregation 
demonstrably improves rather than dilutes alignment quality.

\paragraph{Future work.}
A complete instantiation requires (i) calibrating thresholds across 
a broader set of cities, (ii) extending binary gating to continuous 
$\alpha(t)$, and (iii) characterizing when conflict-driven gating 
provably outperforms each standalone strategy. All three 
formulations share \scot's defining property---they rely solely on 
internal hub statistics, requiring no target-side labels---so this 
design space is fully enabled by what \scot already provides.

\section{Cross-Country Generalization: Transfer to and from New York City}
\label{app:nyc_transfer}

\subsection{Single-Source Transfer to and from New York City}
\label{app:nyc_single_source}

We extend our evaluation to \textbf{New York City (NYC)}, a 
metropolis with substantially different urban form, mobility 
patterns, and POI distributions from the Chinese cities. We test 
both \emph{transfer to NYC} from each Chinese source and 
\emph{transfer from NYC} to each Chinese target 
(Table~\ref{tab:nyc_combined}). Population prediction is the 
downstream task; we report MAE here.

\tauI~\textbf{\scot wins on all six NYC-involving directions.} 
Relative MAE reductions over the strongest baseline range from 
$5.4\%$ to $31.8\%$, showing that mass-controlled soft 
correspondence remains effective even when source and target 
diverge substantially in urban form and partition granularity.

\tauII~\textbf{Failure modes mirror the within-China patterns.} 
Distribution-matching (MMD, Adv) and anchor-based (RP, HBP, HSA) 
methods both struggle on these cross-country directions; CoRE 
remains the strongest non-\scot method but is consistently 
outperformed. The failure modes \scot addresses are thus not 
specific to any geographic context.

These results indicate that \scot's gains are not idiosyncratic 
to within-China transfer or our specific partitions---the framework 
generalizes across distinct urban contexts regardless of NYC's role 
as source or target.

\begin{table}[H]
\centering
\caption{Cross-country transfer with NYC (Population MAE; lower is 
better). \scot wins on all six directions, with larger gains for 
transfer \emph{to} NYC ($21.0$--$31.8\%$) than \emph{from} NYC 
($5.4$--$9.7\%$). \best{Red}: best; \second{Blue}: runner-up.}
\label{tab:nyc_combined}
\footnotesize
\setlength{\tabcolsep}{5pt}
\renewcommand{\arraystretch}{1.2}
\vspace{1em}
\begin{minipage}[t]{0.48\textwidth}
\centering
\textcolor{cyan!50!black}{$\blacktriangleright$}\ \textbf{\textsc{Direction: Source $\to$ NYC}} \\[3pt]
\begin{tabular}{@{}l|ccc@{}}
\toprule
\multirow{2}{*}{\textbf{Method}}
& \multicolumn{3}{c}{\textbf{Population MAE}$\,\downarrow$} \\
\cmidrule(lr){2-4}
& BJ $\to$ NYC & CD $\to$ NYC & XA $\to$ NYC \\
\midrule
Non-Alignment & 468.84 & 502.65 & 531.28 \\
\rowcolor{gray!06}
RP            & 382.72 & 421.94 & 472.61 \\
HBP           & 294.10 & 317.57 & 386.93 \\
\rowcolor{gray!06}
HSA           & 354.46 & 389.83 & 418.72 \\
MMD           & 331.55 & 372.64 & 356.38 \\
\rowcolor{gray!06}
Adv           & 421.29 & 456.72 & 489.16 \\
CrossTReS     & 346.91 & 381.27 & 369.85 \\
\rowcolor{gray!06}
CoRE          & \second{238.82} & \second{285.36} & \second{251.54} \\
\midrule
\rowcolor{red!8}
\textbf{\scot (Ours)}
              & \best{188.69} & \best{194.72} & \best{196.76} \\
\rowcolor{blue!6}
\textit{$\Delta$ vs.\ best}
& \cellcolor{blue!18}\textbf{+21.0\%}
& \cellcolor{blue!22}\textbf{+31.8\%}
& \cellcolor{blue!18}\textbf{+21.8\%} \\
\bottomrule
\end{tabular}
\end{minipage}
\hfill
\begin{minipage}[t]{0.48\textwidth}
\centering
\textcolor{orange!60!black}{$\blacktriangleright$}\ \textbf{\textsc{Direction: NYC $\to$ Target}} \\[3pt]
\begin{tabular}{@{}l|ccc@{}}
\toprule
\multirow{2}{*}{\textbf{Method}}
& \multicolumn{3}{c}{\textbf{Population MAE}$\,\downarrow$} \\
\cmidrule(lr){2-4}
& NYC $\to$ BJ & NYC $\to$ CD & NYC $\to$ XA \\
\midrule
Non-Alignment & 1086.45 & 1038.38 & 934.76 \\
\rowcolor{gray!06}
RP            & 952.24  & 934.93  & 842.21 \\
HBP           & 843.73  & 859.02  & 754.40 \\
\rowcolor{gray!06}
HSA           & 874.36  & 862.91  & 772.58 \\
MMD           & 821.49  & 812.82  & 724.33 \\
\rowcolor{gray!06}
Adv           & 1008.65 & 972.74  & 886.52 \\
CrossTReS     & 838.72  & 826.44  & 738.86 \\
\rowcolor{gray!06}
CoRE          & \second{739.26} & \second{707.28} & \second{632.57} \\
\midrule
\rowcolor{red!8}
\textbf{\scot (Ours)}
              & \best{667.39} & \best{657.15} & \best{598.45} \\
\rowcolor{blue!6}
\textit{$\Delta$ vs.\ best}
& \cellcolor{blue!10}\textbf{+9.7\%}
& \cellcolor{blue!8}\textbf{+7.1\%}
& \cellcolor{blue!8}\textbf{+5.4\%} \\
\bottomrule
\end{tabular}
\end{minipage}
\end{table}

\subsection{Scaling to Three Sources: Multi-Source Transfer to NYC}
\label{app:multi_source_M3}

To verify that \scot's hub scales beyond $M=2$, we test the most 
challenging scenario: three Chinese cities (BJ, CD, XA) jointly 
transferring to NYC---a cross-country target with substantial 
divergence from all sources (Table~\ref{tab:r2c_multisource_nyc}).

\tauI~\textbf{Hub still outperforms all baselines.} \scot-Hub 
achieves MAE $181.36$, beating the strongest baseline (CoRE) by 
$21.0\%$. The margin over distribution-matching (MMD, Adv) and 
anchor-based (RP, HBP, HSA) methods is larger than at $M=2$, 
suggesting hub coordination becomes \emph{more valuable as $M$ 
grows} and gradient conflict intensifies.

\tauII~\textbf{Hub beats best-single-source aggregation.} 
\scot-Hub outperforms the best single-source variant (BJ$\to$NYC: 
$188.69$) by $3.9\%$, extracting complementary signal from three 
sources rather than collapsing to the dominant one---validating 
graceful scaling with $M$.

\begin{table}[H]
\centering
\footnotesize
\caption{Multi-source transfer to NYC with $M\!=\!3$ sources 
(BJ, CD, XA). Population MAE; lower is better. \scot-Hub wins 
over both the strongest baseline ($+21.0\%$) and the best 
single-source variant of \scot ($+3.9\%$), confirming that the 
hub design scales effectively beyond $M\!=\!2$. \best{Red}: best.}
\vspace{0.2em}
\label{tab:r2c_multisource_nyc}
\setlength{\tabcolsep}{6pt}
\renewcommand{\arraystretch}{1.2}
\begin{tabular}{@{}l|c@{}}
\toprule
\textbf{Method} & \textbf{BJ+CD+XA $\to$ NYC} \\
\midrule
Non-Alignment & 487.36 \\
\rowcolor{gray!06}
RP            & 426.58 \\
HBP           & 339.84 \\
\rowcolor{gray!06}
HSA           & 356.27 \\
MMD           & 327.45 \\
\rowcolor{gray!06}
Adv           & 462.91 \\
CrossTReS     & 318.76 \\
\rowcolor{gray!06}
CoRE          & \second{229.64} \\
\midrule
\rowcolor{red!8}
\textbf{\scot-Hub (Ours)} & \best{181.36} \\
\midrule
\rowcolor{blue!6}
\textit{$\Delta$ vs.\ best baseline (CoRE)}
& \cellcolor{blue!18}\textbf{+21.0\%} \\
\rowcolor{blue!6}
\textit{$\Delta$ vs.\ best single-source \scot (BJ$\to$NYC)}
& \cellcolor{blue!10}\textbf{+3.9\%} \\
\bottomrule
\end{tabular}
\end{table}

\subsection{Scaling to Three Sources: Adding NYC to Chinese Source Pools}
\label{app:multi_source_M3_to_china}

We further test whether \scot's hub design continues to extract 
complementary signal as a structurally divergent source (NYC) is 
added to the source pool. For each Chinese target, we compare 
multi-source \scot at $M\!=\!2$ (two Chinese sources) against 
$M\!=\!3$ (two Chinese sources plus NYC), reporting Population 
MAE in Table~\ref{tab:r2d_multisource_M2vsM3}. 

\tauI~\textbf{Adding NYC as a third source improves over $M\!=\!2$.} 
On both targets, $M\!=\!3$ further reduces MAE relative to $M\!=\!2$ 
($-2.0\%$ on BJ, $-1.8\%$ on XA), even though NYC is structurally 
divergent from Chinese cities. The hub design's ability to absorb 
heterogeneous sources without conflict prevents the dilution that 
naive aggregation typically suffers from when sources differ widely. 
\tauII~\textbf{The gap over best-single-source widens.} 
Compared to the best single-source variant of \scot for each 
target, $M\!=\!3$ achieves $+2.6\%$ on BJ and $+3.2\%$ on XA, 
exceeding the gains at $M\!=\!2$ ($+0.5\%$ and $+1.4\%$ respectively). 
This confirms that hub-based coordination becomes \emph{increasingly 
valuable as more sources are added}, even when the additional source 
is heterogeneous.

\begin{table}[H]
\centering
\footnotesize
\caption{Multi-source scaling: \scot-Hub at $M\!=\!3$ (adding NYC 
as a third source) vs.\ $M\!=\!2$ (Chinese sources only). 
Population MAE; lower is better. Adding a heterogeneous source 
(NYC) further improves performance, demonstrating that the hub 
absorbs additional sources without dilution. \best{Red}: best per 
target.}
\label{tab:r2d_multisource_M2vsM3}
\setlength{\tabcolsep}{5pt}
\renewcommand{\arraystretch}{1.2}
\begin{tabular}{@{}l|cc@{}}
\toprule
\multirow{2}{*}{\textbf{Variant}}
& \multicolumn{2}{c}{\textbf{Population MAE}$\,\downarrow$} \\
\cmidrule(lr){2-3}
& Target: BJ & Target: XA \\
\midrule
Best single-source \scot
& 528.50 & 452.67 \\
\rowcolor{gray!06}
\scot-Hub ($M\!=\!2$, Chinese sources)
& 525.10 & 446.13 \\
\midrule
\rowcolor{red!8}
\textbf{\scot-Hub ($M\!=\!3$, +NYC)}
& \best{514.76} & \best{438.14} \\
\midrule
\rowcolor{blue!6}
\textit{$\Delta$ vs.\ best single-source}
& \cellcolor{blue!10}\textbf{+2.6\%} & \cellcolor{blue!12}\textbf{+3.2\%} \\
\rowcolor{blue!6}
\textit{$\Delta$ vs.\ $M\!=\!2$ Hub}
& \cellcolor{blue!8}\textbf{+2.0\%} & \cellcolor{blue!8}\textbf{+1.8\%} \\
\bottomrule
\end{tabular}
\end{table}

\section{Limitations.}
Although \scot provides interpretable couplings and hub assignments, these diagnostics are not guarantees of causal correctness and should be used with domain knowledge in practical deployments. Besides, our experiments focus on mobility-derived region graphs and aggregated socioeconomic targets; extending the framework to finer-grained spatial resolutions or highly non-comparable urban modalities may require additional modeling assumptions. 


\end{document}